\documentclass[11pt]{article}

\usepackage[margin=1in]{geometry}

\usepackage{graphicx}
\usepackage{booktabs}
\usepackage{multirow}
\usepackage{natbib}
\usepackage{hyperref}
\usepackage{adjustbox}

\usepackage{amsmath,amssymb,amsfonts}
\usepackage{amsthm}
\usepackage{mathrsfs}
\usepackage{mathtools}
\usepackage{fix-cm}
\usepackage{makecell}
\usepackage[nameinlink]{cleveref}

\usepackage[title]{appendix}
\usepackage[dvipsnames]{xcolor}
\usepackage{textcomp}
\usepackage{manyfoot}
\usepackage{booktabs}
\usepackage{algorithm}
\usepackage{algorithmicx}
\usepackage{algpseudocode}
\usepackage{listings}
\usepackage{bm}
\usepackage{subcaption}
\usepackage{hyperref}
\usepackage{enumitem}
\usepackage{multirow}
\usepackage{comment}
\usepackage{tikz}
\usepackage{lmodern}
\usepackage{fix-cm}
\usepackage{threeparttable}

\usetikzlibrary{arrows.meta, positioning, calc}

\newtheorem{theorem}{Theorem}
\newtheorem{proposition}[theorem]{Proposition}

\newtheorem{remark}{Remark}
\newtheorem{lemma}[theorem]{Lemma} 
\newtheorem{corollary}[theorem]{Corollary}

\hypersetup{
    colorlinks=true,
    citecolor=blue,
    linkcolor=blue,
    urlcolor=blue
}

\title{A nonparametric two-sample test using a parametric integral probability metric \\[1em]}

\author{
Yuha Park\thanks{Department of Mathematics, University of Hamburg, Hamburg, Germany. Email: yuha.park@uni-hamburg.de}\\
University of Hamburg
\and
Yongdai Kim\thanks{Department of Statistics, Seoul National University, Seoul, Republic of Korea. Email: ydkim0903@gmail.com}\\
Seoul National University
}

\date{}

\begin{document}

\maketitle

\vspace{2em}

\begin{abstract}
Detecting distributional differences between two independent samples is a fundamental problem in statistics and machine learning.
Nonparametric two-sample testing provides a principled framework for determining whether two samples are drawn from the same underlying distribution, without assuming any specific parametric form for the distribution.
In this study, we propose a new two-sample test statistic based on a newly introduced integral probability metric (IPM), using a specially designed parametric discriminator class with a single node of a neural network.
We show that the resulting test statistic, called PReLU-IPM, is nonparametric and establish theoretical guarantees for the associated two-sample testing procedure, PReLU-TST, including its consistency and asymptotical equivalence to nonparametric IPM-based tests under regularity conditions.
By analyzing multiple simulated and real benchmark datasets, we demonstrate that PReLU-TST achieves higher power across a range of alternatives or performs comparably to its competitors, for finite samples.
\end{abstract}

\vspace{1cm}

\textbf{Keywords:} nonparametric hypothesis test, integral probability metric, parametric ReLU, \\ convergence rates, asymptotic analysis

\vspace{2em}

\section{Introduction}

We study the nonparametric two-sample testing problem, a fundamental task in the field of statistical hypothesis testing. 
The goal of the nonparametric two-sample test is to determine whether two independent samples originate from the same distribution, without assuming any specific parametric form for the underlying distributions.
This problem arises in various scientific and applied contexts, such as comparing treatment effects or detecting distributional shifts \citep{keselman1997specialized, crump2008nonparametric, lee2009non, lipton2018detecting, rabanser2019failing}.
In addition, it is widely used for machine learning applications including adversarial attack \citep{Gao2021, Xu2022}, change-point detection \citep{Xie2013, Juditsky2016, Cao2018}, model generation \citep{Binkowski2018}, model criticism \citep{Lloyd2015, Chwialkowski2016}, distribution shift detection \citep{kubler2022}, synthetic data detection \citep{Lopez2016, Liu2020, Wang2022}, out-of-distribution detection \citep{Jia2021, Dong2022ood, Wu2022}, causal inference \citep{Lopez2016}, and differential private hypothesis tests \citep{Balle2020hypothesis, Dong2022dp, Alabi2023}.

Let $\mathbf{X}$ and $\mathbf{Y}$ be random vectors defined on the same data space $\mathcal{X}$ with respect to the probability distributions $\mathrm{P}$ and $\mathrm{Q}$, respectively. 
Suppose that $\mathbf{X}^n \coloneqq \{\mathbf{X}_i\}_{i=1}^n$ and $\mathbf{Y}^m \coloneqq \{\mathbf{Y}_i\}_{i=1}^m$ are \textit{independently and identically distributed} (\textit{i.i.d.}) random vectors drawn from $\mathrm{P}$ and $\mathrm{Q}$, respectively. The null and alternative hypotheses for the two-sample test are given as
\begin{align} \label{H1:fixed}
    \mathbb{H}_0: \mathrm{P}=\mathrm{Q} \qquad \text{ vs. } \qquad \mathbb{H}_1: \mathrm{P} \neq \mathrm{Q},
\end{align}
and we are to construct a level-$\alpha$ test $T: \mathcal{X}^n \times \mathcal{X}^m \rightarrow \{0,1\},$
with which we reject $\mathbb{H}_0$ when $T(\mathbf{X}^n, \mathbf{Y}^m) = 1,$
such that the Type-I error satisfies
$\mathbb{P}_{\mathrm{P}=\mathrm{Q}}\big\{ T(\mathbf{X}^n, \mathbf{Y}^m) = 1 \big\}\le \alpha$
for all $\mathrm{P}=\mathrm{Q}$
and the power $\mathbb{P}_{\mathrm{P}\ne \mathrm{Q}}\big\{ T(\mathbf{X}^n, \mathbf{Y}^m) = 1 \big\}$
becomes as large as possible. We say that a given test $T$ is nonparametric 
if $\mathbb{P}_{\mathrm{P}\ne \mathrm{Q}}\big\{ T(\mathbf{X}^n, \mathbf{Y}^m) = 1 \big\} \rightarrow 1$
as $\min\{n,m\}\rightarrow \infty$ for any $\mathrm{P}\ne \mathrm{Q}.$

The aim of this paper is to propose a new nonparametric two-sample test statistic based on a newly proposed integral probability metric (IPM), so called the Parametric ReLU-IPM (PReLU-IPM), which uses the \textit{parametric ReLU (PReLU) activation function}\footnote{Note that the meaning of \textit{parametric} as proposed for activation functions differs from that in our notion of parametric IPM. In \citet{HeZR015}, the term ``parametric'' was intended to indicate that the parameters of function are \textit{learnable}.} proposed by \citet{HeZR015} in the discriminator class of the IPM. We prove that the proposed test statistic is nonparametric
and show by analyzing multiple datasets that
it has higher powers under various alternatives than its competitors for finite samples.

The IPM \citep{muller1997} measures the distance between two probability distributions as the maximal difference in the expected values over a given class of discriminator functions. To be more specific,
let $\mathscr{F}$ be a class of (real-valued) bounded measurable functions $f$ from $\mathcal{X}$ to $\mathbb{R}$. For a given class $\mathscr{F}$, the IPM between given two probability distributions $\mathrm{P}$ and $\mathrm{Q}$, denoted as $D_{\mathscr{F}}(\mathrm{P}, \mathrm{Q})$, is defined as 
\begin{equation} \label{equ:IPM}
    D_{\mathscr{F}}(\mathrm{P}, \mathrm{Q}) \coloneqq \sup_{f \in \mathscr{F}} \left| \int f(\mathbf{X})
    \left (d\mathrm{P}-d\mathrm{Q} \right) \right|,
\end{equation}
where the functions $f$ are referred to as \textit{discriminators}, \textit{critics} or \textit{adversaries}, depending on the context of their specific applications.
When $\mathscr{F}$ is the set of $1$-Lipschitz continuous functions, the IPM becomes the well-known Wasserstein distance \citep{KR:58}. In this paper, we call $\mathscr{F}$ the discriminator class.

For a given IPM $D_{\mathscr{F}}(\cdot, \cdot),$  we consider the two-sample test $T: \mathcal{X}^n \times \mathcal{X}^m \rightarrow \{0,1\},$
\begin{equation}
\label{eq:test-stat}
T(\mathbf{X}^n, \mathbf{Y}^m)=\mathbb{I}\big(D_{\mathscr{F}}(\mathrm{P}_n, \mathrm{Q}_m) \ge \eta_{\alpha,n,m}\big),
\end{equation}
where the threshold $\eta_{\alpha,n,m}$ is chosen so that the Type-I error is less than or equal to $\alpha.$
We denote the two-sample test based on the IPM $D_{\mathscr{F}}$ in (\ref{eq:test-stat}) as $D_{\mathscr{F}}$-TST.
Various $D_{\mathscr{F}}$-TSTs have been proposed by selecting $\mathscr{F}$ accordingly.
Examples are $\mathscr{F}_{k} \coloneqq \left\{  f: \|f\|_{\mathscr{F}_k} \leq 1 \right\}$, where $\mathscr{F}_k$ is the reproducing kernel Hilbert space associated with a given kernel $k$ \citep{gretton2012kernel, li2019},
which we denote MMD-TST,
and $\mathscr{H}^{\beta, d} \coloneqq \{ f: \|f\|_{\mathscr{H}^{\beta, d}} \leq 1 \}$ where $\mathscr{H}^{\beta, d}$ is the H\"{o}lder function class \citep{Wang2023manifold}, which we denote H\"{o}lder-TST. 

An intuition would be that a $D_{\mathscr{F}}$-TST could detect more complex differences between $\mathrm{P}$ and $\mathrm{Q}$
when $\mathscr{F}$ becomes larger. Popularly used $D_{\mathscr{F}}$-TSTs including MMD-TST \citep{gretton2012kernel, li2019},
H\"{o}lder-TST \citep{Wang2023manifold} and Wass-TST (Wassestein-TST) \citep{Ramdas15, Wang2022}
use infinite dimensional discriminator classes.
However, when $\mathscr{F}$ is too complex, 
$D_{\mathscr{F}}$-TST might not detect simple alternatives
such as location and scale shifts. For example, as noted by \citet{song2024generalized},
MMD-TST is not good at detecting the scale alternative. In addition,
MMD-TST and Wass-TST would suffer from low power in moderate to high-dimensional settings due to the curse of dimensionality \citep{Reddi2015, Fournier2013Wasserstein}.
H\"{o}lder-TST \citep{Wang2023manifold} is not popularly used since its computation is challenging. 
In practice, H\"{o}lder functions are approximated by deep neural networks (DNN) but the choice of the optimal architecture is not easy.

In this paper, we also introduce a new nonparametric two-sample test based on a new
IPM so called PReLU-IPM, where the discriminator class is parametric.
The proposed PReLU-TST effectively resolves aforementioned problems in the existing $D_{\mathscr{F}}$-TSTs.
In particular, by using a specially designed parametric discriminator class, 
PReLU-TST can improve powers for simple alternatives (e.g., location and scale alternatives) 
but can maintain asymptotical consistency for nonparametric alternatives (i.e., nonparametric test).
In addition, we show that PReLU-TST is consistent for the local alternative with respect to the H\"{o}lder ball.
Surprisingly, PReLU-TST is asymptotically equivalent to H\"{o}lder-TST when $\mathrm{P}$ and $\mathrm{Q}$
are sufficiently smooth, even if the discriminator class of PReLU-TST is parametric.

Our main contributions are summarized as follows:
\begin{itemize}
    \item We propose a new IPM defined over a parametric discriminator class, which employs a single-layer neural network with the PReLU activation function \citep{HeZR015}. Then, based on this IPM, we develop a new two-sample test, PReLU-TST.
    \item We establish theoretical guaranties for PReLU-TST, demonstrating its consistency and asymptotical
    equivalence to H\"{o}lder-TST under regularity conditions.   
    \item Through analyzing simulated data as well as real benchmark data, we demonstrate that PReLU-TST has higher powers for certain parametric alternatives including location and scale alternatives, and 
    is competitive to other $D_{\mathscr{F}}$-TSTs for complex and/or high-dimensional alternatives.
\end{itemize}

The remainder of this paper is organized as follows: Section \ref{sec_2} provides preliminaries, including notation and related works. In Section \ref{sec_3}, we introduce PReLU-IPM and corresponding PReLU-TST
for the nonparametric two-sample test, and study its theoretical properties.
Section \ref{sec_4} presents the settings and results of empirical studies, and finally, Section \ref{sec_5} concludes the paper and suggests directions for future research.

\section{Preliminaries}
\label{sec_2}

\subsection{Notation}
    We use bold letters to denote vectors.
    Let $\mathbb{N}=\{1,2,\ldots \}$ be the set of natural numbers.
    For given two sequences $\{a_n\}$ and $\{b_n\}$, the notations $a_n \lesssim b_n$ and $a_n \gtrsim b_n$ indicate that there exists a positive constant $c$ such that for all $n$, $a_n \leq c \, b_n$ and $a_n \geq c \, b_n$, respectively. 
    For $a \in \mathbb{R}$, let $\lfloor a \rfloor$ and $\lceil a \rceil$ denote the floor and ceiling functions, which round $a$ to the next strictly smaller and larger integer, respectively.
    We use $\otimes$ to denote a product distribution over the coordinate axes.
    We denote $\mathbf{0}_d$ and $\mathbf{1}_d$ as the $d$-dimensional vectors of all zeros and all ones, respectively.
    We denote $\mathrm{I}_d$ as the $d \times d$ dimensional identity matrix.
    For a vector $\mathbf{v}$, we write $\operatorname{diag}(\mathbf{v})$ as the diagonal matrix with the entries of $\mathbf{v}$ on its diagonal. 
    For a given condition or event $A$, $\mathbb{I}(A)$ denotes the indicator function, where $\mathbb{I}(A) = 1$ if $A$ is true and $\mathbb{I}(A) = 0$ if $A$ is false. We define $(\cdot)_{+} \coloneqq \max(0, \cdot)$ and $(\cdot)_{-} \coloneqq \min(0, \cdot)$.
    Let $\mathbb{B}^d = \{\bm{z} \in \mathbb{R}^d : \|\bm{z}\|_2 \leq 1 \}$ and  $\mathbb{S}^{d-1} = \{\mathbf{z} \in \mathbb{R}^d : \|\mathbf{z}\|_2 = 1 \}$ be the unit ball and the unit sphere in $\mathbb{R}^{d},$ respectively.
    For a matrix $M$, let $\|M\|_{\operatorname{col},2}$ denote the maximum $\ell_2$-norm of its columns.
    We write $\mathscr{H}^{\beta, d} \coloneqq \{ f: \mathbb{B}^d \to \mathbb{R}, \|f\|_{\mathscr{H}^{\beta, d}} \leq 1 \}$ as the H\"{o}lder function class,    
    where the H\"{o}lder norm $\|\cdot\|_{\mathscr{H}^{\beta, d}}$ is defined as
    $$   \|f\|_{\mathscr{H}^{\beta, d}} = \sum_{\|\bm{\alpha}\|_1=\lfloor \beta \rfloor} \max_{\mathbf{x},\mathbf{y} \in \mathbb{B}^d, \mathbf{x} \neq \mathbf{y}} \frac{|f^{(\bm{\alpha})}(\mathbf{x})-f^{(\bm{\alpha})}(\mathbf{y})|}{\|\mathbf{x}-\mathbf{y}\|_2^{\beta - \lfloor \beta \rfloor}} + \sum_{\|\bm{\alpha}\|_1 \leq \lfloor \beta \rfloor} \max_{\mathbf{x} \in \mathbb{B}^d} |f^{(\bm{\alpha})}(\mathbf{x})|,
    $$
    with $f^{(\bm{\alpha})}=\frac{\partial^{\|\bm{\alpha}\|_1} f}{\partial x_1^{\alpha_1} \partial x_2^{\alpha_2} \ldots \partial x_d^{\alpha_d}}$ for $\bm{\alpha} = (\alpha_1, \alpha_2, \ldots, \alpha_d)^{\top}$.

\subsection{Related Works}    
    Two primary approaches of the two-sample test are parametric and non-parametric ones. Classical parametric methods including the \textit{t-test} \citep{Student1908} and  \textit{Hotelling's} $T^2$ test \citep{Hotelling1931}, rely on strong assumptions about the data distribution, often assuming it follows a Gaussian distribution. Therefore, it is well known to be very efficient and powerful when data follow a Gaussian distribution.  However, these tests may fail when the distributional assumptions do not hold.
    
    On the other hand, classical nonparametric methods include the \textit{Kolmogorov-Smirnov test} \citep{Kolmogorov1933, Smirnov1939},  \textit{Cramér-von Mises test} \citep{Cramer1928, Mises2013wahrscheinlichkeit, Anderson1962},  \textit{Anderson-Darling test} \citep{Pettitt1976}, and their variants \citep{Dempster1958, kuiper1960tests, NAAMAN2021109088}. These nonparametric tests make no specific distributional assumptions, offering broader applicability in various scenarios and tending to be more robust. 
    These tests are commonly applied in one-dimensional cases. Although generalized versions for higher dimensions have been proposed, such as those by \citet{Bickel1969Smirnov} and \citet{Mardia1967}, generalization to multivariate cases remains challenging.

    Nonparametric approaches based on rank statistics are also classical and well-established in the two-sample testing literature, such as the \textit{Mann-Whitney $U$ test} \citep{mann1947test}, also known as \textit{Wilcoxon-Mann-Whitney test} and the \textit{Wilcoxon test} \citep{wilcoxon1945individual}. 
    They are standard rank-based procedures, appreciated for their robustness and distribution-free properties under the null; however, their power may be limited when discrepancies between distributions extend beyond pure location alternatives \citep{divine2018wilcoxon}.
    Motivated by the need to detect more general alternatives from the null, several rank-based procedures have been proposed, especially in two-sample location-variance problem, including the \textit{Lepage} \citep{lepage1971combination} and \textit{Cucconi} \citep{cucconi1968nuovo} tests, which were originally developed for the joint detection of location and scale differences but have been shown to be powerful against a wide range of general alternatives.
    Moreover, multivariate extensions of these ideas have been developed; for example, \citet{marozzi2020interpoint} propose interpoint-distance-based multivariate versions of the Lepage and Cucconi tests that are applicable in high-dimensional settings, including scenarios with heavy-tailed distributions and dependent variables.

    In recent years, a variety of fields   
    have shown growing interest in developing algorithms based on IPMs.
    Examples can be found in areas such as 
    domain adaptation \citep{NIPS2012_ca8155f4, 8099590}, generative modeling \citep{pmlr-v37-li15, arjovsky2017wasserstein, li2017mmd}, 
    reinforcement learning \citep{bellemare2017distributional},
    distributionally robust optimization \citep{blanchet2019robust, husain2020distributional}, 
    fair representation learning \citep{pmlr-v162-kim22b, kong2025fair},   
    sampling \citep{fan2023optimizing, vargas2023transport}, 
    causal inference \citep{pmlr-v202-kong23d, wang2024rcfr} and quantization \citep{belhadji2025weighted}.
    Different choices of discriminator classes lead to IPMs with distinct theoretical and practical properties.
    When the discriminator class is taken to be the set of 1-Lipschitz continuous functions, the resulting IPM becomes the Wasserstein distance \citep{KR:58}. 
    Alternatively, using a reproducing kernel Hilbert space as the discriminator class gives rise to the Maximum Mean Discrepancy (MMD) \citep{gretton2012kernel}.
    Another option for the discriminator class is to use a specific parametric family of discriminators with desirable theoretical properties \citep{pmlr-v162-kim22b, park2025reluintegralprobabilitymetric}. 

There have been several recent developments for the nonparametric two-sample test based on IPMs, including MMD \citep{gretton2012kernel}, the Wasserstein distance \citep{Ramdas15, Wang2022} and DNN-IPM \citep{Wang2023manifold} (an IPM using deep neural networks for the discriminator class).
While these testing methods exhibit favorable theoretical properties and empirical performances, each has certain limitations.
\citet{song2024generalized} points out that MMD-TST performs well in detecting location shifts, but its power deteriorates when the difference lies in the scale or variance. 
Also, MMD-TST suffers from low power in moderate to high-dimensional settings due to the curse of dimensionality, as their power can decay polynomially or even exponentially with increasing data dimension \citep{Reddi2015}.
Although the Wasserstein distance is widely used in machine learning, its convergence rate is slow in high-dimensional settings \citep{Fournier2013Wasserstein}, indicating that it may not be well-suited for high-dimensional data.
DNN-IPM enjoys a desirable relationship with H\"{o}lder-IPM. However, to satisfy certain theoretical guaranties, the size of deep neural networks used in the discriminator class must grow with the sample size, and associated optimization can be challenging \citep{Wang2023manifold}.

\section{PReLU-TST: two-sample test using PReLU-IPM}
\label{sec_3}

In this section, we introduce a new nonparametric two-sample test based on a new parametric IPM. 
Section \ref{sec_3_1} outlines the motivation behind our approach, highlighting the limitations of existing IPMs. In Section \ref{sec_3_2}, we formally define PReLU-IPM, a parametric IPM that uses the PReLU function
\citep{HeZR015} in the discriminator class, which enables improved model expressiveness and performance via data-driven adaptation compared to existing parametric IPMs.
Section \ref{sec_3_3} studies asymptotic properties of the empirical PReLU-IPM and describes an optimization procedures.
Finally, Section \ref{sec_3_4} presents theoretical guarantees of PReLU-TST, including consistency and asymptotic power analysis under both fixed and local alternatives (with respect to the H\"{o}lder ball).
Unless otherwise specified, we assume that $\mathrm{P}$ and $\mathrm{Q}$ are supported on the unit ball $\mathbb{B}^d$.

\subsection{Motivation} \label{sec_3_1}

IPMs with parametric discriminator classes have been successfully applied to machine learning problems.
Examples are  sigmoid-IPM \citep{pmlr-v162-kim22b} and ReLU-IPM \citep{park2025reluintegralprobabilitymetric},
whose discriminator classes are $\mathscr{F}_{\textsf{sig}} = \left\{ f :\mathbb{B}^d \rightarrow \mathbb{R} \, \big| \, f(\mathbf{z}) = \right.$ $\left.\sigma (\boldsymbol{\theta}^{\top}\mathbf{z} + \mu): 
\mu\in \mathbb{R}, \, \boldsymbol{\theta}\in \mathbb{R}^d \right\},$ 
where $\sigma : x \mapsto (1 + \exp(-x))^{-1}$ is the sigmoid function, and 
$\mathscr{F}_{\textsf{relu}} = \left\{ f :\mathbb{B}^d \rightarrow \mathbb{R} \, | \, f(\mathbf{z}) = (\boldsymbol{\theta}^{\top}\mathbf{z} + \mu)_+, \, \mu \in [-1,1], \right.$ 
$\left. \boldsymbol{\theta} \in \mathbb{S}^{d-1} \right\},$ 
where $(\cdot)_{+}$ is the ReLU function, respectively. Although these discriminator classes are parametric,
the corresponding IPMs are nonparametric in the sense that they can distinguish two distinct probability measures.
That is, $D_{\mathscr{F}}(\mathrm{P},\mathrm{Q})=0$ if and only if $\mathrm{P}=\mathrm{Q}$ for any two probability measures $\mathrm{P}$ and $\mathrm{Q}$.

Even though they work well for estimation problems,
these parametric IPMs are not good at the nonparametric two-sample test since
the powers of the corresponding tests are relatively low for specific alternatives. 
To see why the power of ReLU-TST ($D_{\mathscr{F}_{\textsf{relu}}}$-TST), for example, is unsatisfactory for the location and/or scale alternatives,
we consider the Leaky ReLU-IPM with slope parameter $\ell \in [-1,1]$ (LeLU$_{\ell}$-IPM), where $\ell$ determines the trend of the negative part, corresponding to the IPM with the discriminator class
\begin{align*}
    \mathscr{F}_{\textsf{lelu},\ell}
    \coloneqq \left\{ f : \mathbb{B}^d \rightarrow \mathbb{R} \, \big| \,f(\mathbf{z}) = 
    \phi_{\ell}(\bm{\theta}^{\top}\mathbf{z} + \mu) \ : \ \mu\in [-1,1], \, \bm{\theta} \in \mathbb{S}^{d-1} \right\},
\end{align*}
where $\phi_{\ell}(x) = \max(0, x) + \ell \min(0, x)$
is the Leaky ReLU activation function \citep{maas2013rectifier}.
For simplicity, we denote $\mathscr{F}_{\textsf{lelu},\ell}$ as $\mathscr{F}_{\ell}$ in the remainder of the paper.
Note that ReLU-IPM is the same as the LeLU$_{0}$-IPM. 
To make the discussion simpler, we let $\mu=0$ in $\mathscr{F}_{\ell}.$ 
Let $\mathcal{X}=\mathbb{B}^d$ and let
$\mathrm{P}(\cdot)=F(\cdot)$ for a given distribution function $F$
and let $\mathrm{Q}(\cdot)=F((\cdot-\eta)/\sigma).$
When $\eta>0$ and $\sigma^2=1$ (i.e., the location alternative), the power
of ReLU-TST is expected to be lower than LeLU$_1$-TST ($D_{\mathscr{F}_{\ell}}$-TST with $\ell=1$) since
$D_{\mathscr{F}_{1}}$ simply compares the difference of the means that is known to be best for the location alternative \citep{lehmann2005testing}.
On the other hand, when $\eta=0$ and $\sigma^2>1$ (i.e., the scale alternative), the
power of ReLU-TST is expected to be inferior to LeLU$_{-1}$-TST  ($D_{\mathscr{F}_{\ell}}$-TST with $\ell=-1$)
because  $D_{\mathscr{F}_{-1}}$ compares the mean absolute deviations.

The above discussions suggest treating the slope parameter $\ell$ in the LeLU$_{\ell}$-IPM as a learnable parameter.
By doing so, we expect to select $\ell$ in a data-adaptive manner to achieve high power for both location and scale alternatives
than LeLU$_{\ell}$-IPM with a fixed $\ell,$ which leads us to consider PReLU-IPM, which treats $\ell$ as a learnable parameter.


\subsection{Definition of PReLU-IPM}
\label{sec_3_2}

We define PReLU-IPM as an IPM with the discriminator class
$\mathscr{F}_{\textsf{prelu}}=\cup_{\ell \in [\ell_{\min},\ell_{\max}]} \mathscr{F}_{\ell}.$
That is,
$$
D_{\mathscr{F}_{\textsf{prelu}}}(\mathrm{P}, \mathrm{Q})
= \sup_{\ell \in [\ell_{\min},\ell_{\max}]} \sup_{\substack{\boldsymbol{\theta} \in \mathbb{S}^{d-1}, \\ \mu \in [-1,1]}} \left|
\int \phi_{\ell}(\bm{\theta}^{\top}\mathbf{X} + \mu) \, (d\mathrm{P}-d\mathrm{Q}) \right|.
$$

The following proposition provides a sufficient condition 
for LeLU$_\ell$-IPM to be consistent, which
gives a clue to choosing $\ell_{\min}$ and $\ell_{\max}$ in PReLU-IPM.

\medskip

\begin{proposition} \label{prop_discriminative}
For any 
$\ell \in [-1,1)$ and 
two probability measures $\mathrm{P}$ and $\mathrm{Q}$,
$
D_{\mathscr{F}_{\ell}}(\mathrm{\mathrm{P}}, \mathrm{\mathrm{Q}})=0$ if and only if $\mathrm{\mathrm{P}} \equiv \mathrm{\mathrm{Q}}$.
\end{proposition}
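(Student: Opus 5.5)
The plan is to reduce the statement to the one-dimensional projections $\bm{\theta}^{\top}\mathbf{X}$ and then use the Cramér--Wold device. The ``if'' direction is trivial, so everything rests on the ``only if'' direction.

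\textbf{Step 1: rewrite the Leaky ReLU.} Write
$$\phi_{\ell}(x)=(1-\ell)(x)_+ + \ell x,$$
which holds because $\min(0,x)=x-(x)_+$. Assume $D_{\mathscr{F}_{\ell}}(\mathrm{P},\mathrm{Q})=0$, and fix $\bm{\theta}\in\mathbb{S}^{d-1}$. Let $\mathrm{P}_{\bm\theta}$ and $\mathrm{Q}_{\bm\theta}$ be the laws of $\bm{\theta}^{\top}\mathbf{X}$, both supported on $[-1,1]$. For every $\mu\in[-1,1]$ we then have
$$(1-\ell)\,g(\mu)+\ell\,(m_{\bm\theta}+\mu\cdot 0)=0,$$
where $g(\mu)=\int (t+\mu)_+\,d(\mathrm{P}_{\bm\theta}-\mathrm{Q}_{\bm\theta})(t)$ and $m_{\bm\theta}=\int t\,d(\mathrm{P}_{\bm\theta}-\mathrm{Q}_{\bm\theta})$ is the mean difference. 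The constant part cancels because both measures have mass one.

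\textbf{Step 2: show the mean difference vanishes.} Take $\mu=-1$. Since $t\ge -1$, we get $(t-1)_+=0$ on the support, so $g(-1)=0$. The identity then gives $\ell\, m_{\bm\theta}=0$.

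If $\ell\neq 0$, this gives $m_{\bm\theta}=0$. If $\ell=0$, use $\mu=1$ instead: there $(t+1)_+=t+1$, so $g(1)=m_{\bm\theta}$, and the identity reads $(1-\ell)m_{\bm\theta}=0$. In either case $m_{\bm\theta}=0$.

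Here we need $\ell\ne 1$. Indeed, for $\ell=1$ the class reduces to linear functions and cannot separate measures with equal means, which is exactly why the proposition restricts to $\ell\in[-1,1)$.

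\textbf{Step 3: conclude $g\equiv 0$.} With $m_{\bm\theta}=0$ and $1-\ell>0$, the identity gives $g(\mu)=0$ for all $\mu\in[-1,1]$. So $\int (t-s)_+\,d(\mathrm{P}_{\bm\theta}-\mathrm{Q}_{\bm\theta})(t)=0$ for every $s\in[-1,1]$.

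\textbf{Step 4: identify the projected laws.} This is the main technical point. The function $s\mapsto \int (t-s)_+\,d\nu(t)$ is the integrated survival function $\int_s^{1}\nu((u,1])\,du$. Its vanishing for all $s$ forces $\nu((u,1])=0$ for almost every $u$, and hence for every $u$ by right-continuity. Since $\nu$ has total mass zero, it follows that $\mathrm{P}_{\bm\theta}=\mathrm{Q}_{\bm\theta}$.

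An equivalent route, which I find cleaner, is to differentiate twice in $s$ in the distributional sense: the second derivative of $(t-s)_+$ in $s$ is $\delta_t$, which recovers $\nu$ itself. Alternatively, piecewise-linear combinations of ReLUs $(t-s)_+$ with $s\in[-1,1]$, together with constants and $t$, are uniformly dense in $C[-1,1]$; the values at $t$ and constants are already handled by $m_{\bm\theta}=0$ and equal total mass.

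\textbf{Step 5: Cramér--Wold.} Every one-dimensional projection of $\mathrm{P}$ and $\mathrm{Q}$ therefore coincides. By the Cramér--Wold theorem, equal characteristic functions along all directions give $\mathrm{P}\equiv\mathrm{Q}$. This recovers the known argument for ReLU-IPM in the case $\ell=0$.
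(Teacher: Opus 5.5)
Your argument is correct, and it takes a different route from the paper's. The paper proves a lemma directly for $\phi_\ell$. It forms the finite difference $\Delta_\delta(\mathbf{X})=\phi_\ell(\bm{\theta}^{\top}\mathbf{X}-t+\delta)-\phi_\ell(\bm{\theta}^{\top}\mathbf{X}-t)$ and shows that $\frac{1}{\delta(1-\ell)}(\mathbb{E}\Delta_\delta(\mathbf{X})-\ell\delta)$ equals $\mathbb{P}(\bm{\theta}^{\top}\mathbf{X}>t)$ up to an error of at most $\frac{|\ell|+1}{|1-\ell|}\,\mathbb{P}(t-\delta<\bm{\theta}^{\top}\mathbf{X}\le t)$. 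It then lets $\delta\downarrow 0$ at non-atoms $t$, handles atoms by right-continuity, and invokes characteristic functions.

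You instead split off the linear part via $\phi_\ell(x)=(1-\ell)(x)_+ + \ell x$ and kill it with the boundary values $\mu=\pm 1$, where bounded support makes the integrand affine. This reduces everything to the ReLU class scaled by $1-\ell$. You then identify the projected laws through the Fubini identity $\int (t-s)_+\,d\nu(t)=\int_s^1\nu((u,1])\,du$, together with a.e. vanishing and right-continuity.

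Both arguments amount to differentiating $\mu\mapsto\int\phi_\ell(t+\mu)\,d\nu(t)$ once to recover the survival function, and both extend to every $\ell\neq 1$. The paper's version is a hands-on quantitative difference quotient on $\phi_\ell$ itself. Yours is shorter, shows that $\ell=1$ fails only because the ReLU component vanishes, and mirrors the decomposition the paper uses for \eqref{eq:relu-prelu}.

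Your route is also more careful on one point. The paper asserts that $\int\phi_\ell(\bm{\theta}^{\top}\mathbf{z}+\mu)\,(d\mathrm{P}-d\mathrm{Q})=0$ for all $\mu\in\mathbb{R}$, although the class only contains $\mu\in[-1,1]$. That extension needs exactly your observation that $m_{\bm\theta}=0$, since for $|\mu|>1$ the integrand is affine on the support.

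Two cosmetic fixes. In Step 2, $(t-1)_+=0$ holds because $t\le 1$, not because $t\ge -1$. Also, the choice $\mu=1$ alone gives $(1-\ell)m_{\bm\theta}+\ell m_{\bm\theta}=m_{\bm\theta}=0$ for every $\ell$. So the case split is unnecessary, and the hypothesis $\ell\neq1$ is used only in Step 3, where you divide by $1-\ell$.
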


\begin{remark}
Proposition \ref{prop_discriminative} is valid when $|\ell|>1.$
That is, for any $\ell \in \mathbb{R}$, there exists $\ell^{\prime} \in [-1,1]$ and a constant $c \in \mathbb{R}$ such that for all $u \in \mathbb{R}$, $\phi_\ell(u) = c\,\phi_{\ell'}(-u).$
In particular, for $|\ell|>1$, one can take $\ell' = 1/\ell$ and $c = -\ell$, which implies 
$
D_{\mathscr{F}_{\ell}}(\mathrm P,\mathrm Q) =
|c|\,D_{\mathscr{F}_{\ell'}}(\mathrm P,\mathrm Q).
$
Therefore, it is sufficient to restrict $\ell \in [-1,1]$.
\end{remark}

Proposition \ref{prop_discriminative}
implies that $D_{\mathscr{F}_{\ell}}$ is nonparametric unless the activation function is linear.
If $\ell = 1$, then  $\phi_{\ell}(\cdot)$ becomes a linear function, and $D_{\mathscr{F}_{\ell}}$ is no longer consistently separates the null from certain non-null alternatives.
This result suggests that $\ell_{\min}=-1$ and $\ell_{\max}= \xi$ for some $\xi\in [0,1)$
would be a good choice for PReLU-IPM,
as including $\ell \approx 1$ would degrade the testing power due to near-linearity.
In turn, our empirical studies suggest $\xi=0.5.$
See Appendix~\ref{app_slope_range} for empirical evidences and further discussions.

To sum up, PReLU-IPM is LeLU$_{\ell}$-IPM with $\ell \in [-1,0.5).$
An obvious corollary of Proposition \ref{prop_discriminative}
is that PReLU-IPM is nonparametric.
\medskip
\begin{corollary} \label{cor_discriminative}
For any two probability distributions $\mathrm{P}$ and $\mathrm{Q}$, 
$D_{\mathscr{F}_{\textsf{prelu}}}(\mathrm{P}, \mathrm{Q})=0$ if and only if $\mathrm{P} \equiv \mathrm{Q}$. 
\end{corollary}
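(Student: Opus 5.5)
Corollary \ref{cor_discriminative} follows from Proposition \ref{prop_discriminative} once we note that $\mathscr{F}_{\textsf{prelu}}$ contains some $\mathscr{F}_{\ell}$ with $\ell \in [-1,1)$. Since $\mathscr{F}_{\textsf{prelu}} = \cup_{\ell \in [\ell_{\min},\ell_{\max}]} \mathscr{F}_{\ell}$ with $\ell_{\min}=-1$ and $\ell_{\max}=\xi<1$, every slope in the range lies in $[-1,1)$. We use two facts. First, the IPM is monotone in the discriminator class: if $\mathscr{F}\subseteq\mathscr{G}$ then $D_{\mathscr{F}}\le D_{\mathscr{G}}$. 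Second, both directions of the equivalence reduce to statements about a single $\mathscr{F}_{\ell}$.

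For the ``if'' direction, suppose $\mathrm{P}\equiv\mathrm{Q}$. Then $\int f\,(d\mathrm{P}-d\mathrm{Q})=0$ for every $f$, provided each integral is finite. This holds because each $\phi_\ell(\bm{\theta}^\top\mathbf{z}+\mu)$ is continuous on the compact set $\mathbb{B}^d$, with $|\phi_\ell(\bm{\theta}^\top\mathbf{z}+\mu)|\le 2$ for $\bm{\theta}\in\mathbb{S}^{d-1}$, $\mu\in[-1,1]$ and $|\ell|\le 1$. Hence the supremum defining $D_{\mathscr{F}_{\textsf{prelu}}}(\mathrm{P},\mathrm{Q})$ is zero.

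For the ``only if'' direction, suppose $D_{\mathscr{F}_{\textsf{prelu}}}(\mathrm{P},\mathrm{Q})=0$. Fix any $\ell_0\in[\ell_{\min},\ell_{\max}]$, for instance $\ell_0=0$ or $\ell_0=-1$. Since $\mathscr{F}_{\ell_0}\subseteq\mathscr{F}_{\textsf{prelu}}$, monotonicity gives $0\le D_{\mathscr{F}_{\ell_0}}(\mathrm{P},\mathrm{Q})\le D_{\mathscr{F}_{\textsf{prelu}}}(\mathrm{P},\mathrm{Q})=0$. Because $\ell_0\in[-1,1)$, Proposition \ref{prop_discriminative} applies and yields $\mathrm{P}\equiv\mathrm{Q}$.

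There is no real obstacle. The one point needing care is that the slope range must exclude $\ell=1$ entirely. This holds because $\xi<1$ (the paper writes $[-1,0.5)$, but even a closed endpoint $0.5$ would be fine). A single admissible $\ell_0$ then suffices, since Proposition \ref{prop_discriminative} does all the substantive work.
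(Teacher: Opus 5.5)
Your proof is correct and is essentially the paper's argument: the ``if'' direction is immediate, and for ``only if'' you use that the IPM is monotone in the discriminator class, so it suffices that $\mathscr{F}_{\textsf{prelu}}$ contains one characteristic subclass. The paper takes $\ell_0=0$ and cites Proposition 1 of \citet{park2025reluintegralprobabilitymetric} for ReLU-IPM, while you apply Proposition \ref{prop_discriminative} directly; also, your closing caveat is stronger than needed, since the slope range only has to contain some $\ell_0\in[-1,1)$ and need not exclude $\ell=1$.
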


We now further discuss the interesting relation between PReLU-IPM and other related IPMs, including ReLU-IPM and H\"{o}lder-IPM. We first focus on the relation between ReLU-IPM and PReLU-IPM, which illustrates how PReLU-IPM extends ReLU-IPM.
That is, 
\begin{equation}
\label{eq:relu-prelu}
D_{\mathscr{F}_{\textsf{relu}}}(\mathrm{P},\mathrm{Q}) \leq D_{\mathscr{F}_{\textsf{prelu}}}(\mathrm{P},\mathrm{Q}) \leq \big(1+ \max(|\ell_{\min}|,|\ell_{\max}|)\big) \cdot D_{\mathscr{F}_{\textsf{relu}}}(\mathrm{P},\mathrm{Q}).
\end{equation}
This is because
$\phi_\ell (\boldsymbol{\theta}^{\top}\mathbf{z} + \mu) = (\boldsymbol{\theta}^{\top}\mathbf{z} + \mu)_{+} - \ell (-\boldsymbol{\theta}^{\top}\mathbf{z} - \mu)_{+}$, which shows that each PReLU-IPM can be written as a linear combination of two ReLU-IPMs. Consequently, the induced discriminator class of PReLU-IPM contains that of ReLU-IPM and is itself upper bounded, up to a multiplicative constant, by the ReLU-IPM.
Thus, PReLU-IPM has asymptotic properties similar to those of ReLU-IPM, even though there is an additional parameter $\ell$
in the discriminator class of PReLU-IPM.

Corollary \ref{cor_holder}, which proves that PReLU-IPM can be served as a surrogate IPM of H\"{o}lder-IPM, is a direct consequence of \eqref{eq:relu-prelu} and Theorem 2 of \cite{park2025reluintegralprobabilitymetric}. 
These properties play a crucial role in the theoretical analysis of PReLU-TST. 

\medskip
\begin{corollary}\label{cor_holder}
    For any two probability distributions $\mathrm{P}$ and $\mathrm{Q}$,\\
    (i) if $\beta < \frac{d+3}{2}$, 
    $$
    D_{\mathscr{H}^{\beta, d}}(\mathrm{P},\mathrm{Q}) \leq c_1 \cdot D_{\mathscr{F}_{\textsf{prelu}}}(\mathrm{P},\mathrm{Q})^{\frac{2\beta}{d+3}},
    $$
    and 
    (ii) if $\beta > \frac{d+3}{2}$,
    $$
    D_{\mathscr{H}^{\beta, d}}(\mathrm{P},\mathrm{Q}) \leq c_1 \cdot D_{\mathscr{F}_{\textsf{prelu}}}(\mathrm{P},\mathrm{Q}),
    $$
    with some positive constant $c_1$ that depends on $d$ and $\beta$.
\end{corollary}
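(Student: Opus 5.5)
The plan is to chain the left inequality in \eqref{eq:relu-prelu} with Theorem 2 of \cite{park2025reluintegralprobabilitymetric}. That theorem is taken to give, for distributions supported on $\mathbb{B}^d$, exactly the two displayed bounds with $D_{\mathscr{F}_{\textsf{relu}}}$ in place of $D_{\mathscr{F}_{\textsf{prelu}}}$: the power $\frac{2\beta}{d+3}$ when $\beta<\frac{d+3}{2}$, and a linear bound when $\beta>\frac{d+3}{2}$, with a constant $c_1$ depending only on $d$ and $\beta$. Given this, the argument is essentially monotone substitution.

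First, I justify $D_{\mathscr{F}_{\textsf{relu}}}\le D_{\mathscr{F}_{\textsf{prelu}}}$ directly, without relying on the sandwich. Since $\ell_{\min}=-1\le 0\le \xi=\ell_{\max}$, the slope $\ell=0$ is admissible. Because $\phi_0(x)=(x)_+$, we have $\mathscr{F}_0=\mathscr{F}_{\textsf{relu}}\subseteq\mathscr{F}_{\textsf{prelu}}$. A supremum over a larger class is no smaller, which gives the inequality.

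Second, I use that the maps $t\mapsto t^{2\beta/(d+3)}$ (an exponent in $(0,1)$ for $0<\beta<\frac{d+3}{2}$) and $t\mapsto t$ are nondecreasing on $[0,\infty)$. Applying each map to the inequality from the first step and composing with the ReLU-IPM bound yields (i) and (ii) with the same $c_1$.

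The only real obstacle is matching hypotheses with the cited theorem. I would check three things: that its ReLU class uses the same parametrization ($\boldsymbol\theta\in\mathbb{S}^{d-1}$, $\mu\in[-1,1]$); that its H\"older class is defined on $\mathbb{B}^d$ with the same norm; and that its exponent is exactly $2\beta/(d+3)$. If that theorem instead uses $\boldsymbol\theta\in\mathbb{B}^d$ or a different bias range, I would reduce to our class by positive homogeneity of the ReLU. Writing $(\boldsymbol\theta^\top\mathbf z+\mu)_+=\|\boldsymbol\theta\|_2\,(\bar{\boldsymbol\theta}^\top\mathbf z+\mu/\|\boldsymbol\theta\|_2)_+$, the case $|\mu|/\|\boldsymbol\theta\|_2>1$ gives on $\mathbb{B}^d$ either the zero function or an affine function. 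An affine function's discrepancy is itself bounded by a constant multiple of ReLU discrepancies with $|\mu|\le 1$, since $x=(x)_+-(-x)_+$. This changes only $c_1$.
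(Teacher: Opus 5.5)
Your proposal is correct and is essentially the paper's own argument. The paper also just combines $D_{\mathscr{F}_{\textsf{relu}}}(\mathrm{P},\mathrm{Q})\le D_{\mathscr{F}_{\textsf{prelu}}}(\mathrm{P},\mathrm{Q})$ with Theorem~2 of \citet{park2025reluintegralprobabilitymetric}; this inequality holds because $\ell=0\in[\ell_{\min},\ell_{\max}]$, so $\mathscr{F}_{\textsf{relu}}=\mathscr{F}_0\subseteq\mathscr{F}_{\textsf{prelu}}$. Your explicit monotonicity step and your hedge on matching parametrizations are harmless additions that the paper leaves implicit, since it takes $\mathscr{F}_{\textsf{relu}}$ directly from the cited work.
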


\noindent This connection reveals a link between PReLU-IPM and H\"{o}lder-IPM.
Although H\"{o}lder-IPM is defined over a substantially larger function class, Corollary \ref{cor_holder} shows that discrepancies measured by this larger class can already be controlled by the much simpler PReLU-IPM.

\subsection{Estimation of PReLU-IPM}
\label{sec_3_3}

When we observe random samples $\mathbf{X}_1, \ldots, \mathbf{X}_n \overset{\text{i.i.d.}}{\sim} \mathrm{P}$ and  $\mathbf{Y}_1, \ldots, \mathbf{Y}_m \overset{\text{i.i.d.}}{\sim} \mathrm{Q}$, 
it is natural to estimate $D_{\mathscr{F}}(\mathrm{P},\mathrm{Q})$
by $D_{\mathscr{F}}(\widehat{\mathrm{P}}_n,\widehat{\mathrm{Q}}_m),$
where $\widehat{\mathrm{P}}_n$ and $\widehat{\mathrm{Q}}_m$
are empirical measures of $\mathrm{P}$ and $\mathrm{Q},$ respectively.

\subsubsection{Convergence rate} 

The use of the parametric class of discriminators in PReLU-IPM leads to several favorable statistical properties.
The following theorem shows that the empirical PReLU-IPM converges in probability to its population counterpart at a 
parametric rate.

\medskip

\begin{theorem} \label{thm_conv_prelu}
    Let $\mathrm{P}, \mathrm{Q}, \widehat{\mathrm{P}}_n, \widehat{\mathrm{Q}}_m$ be defined on $\mathbb{B}^d$. Then, for some constant $c_2 = \mathcal{O}(\sqrt{d})$,
    $$
    \Big| D_{\mathscr{F}_{\textsf{prelu}}}(\mathrm{P},\mathrm{Q}) - D_{\mathscr{F}_{\textsf{prelu}}}(\widehat{\mathrm{P}}_n, \widehat{\mathrm{Q}}_m) \Big| \leq c_2 \left( \frac{1}{\sqrt{n}} + \frac{1}{\sqrt{m}} \right) + \epsilon,
    $$
    with probability at least $1 - 2 \exp \left(-\frac{\epsilon^2 nm}{8 \Lambda^2 (n+m)} \right)$, where $\Lambda \coloneqq \max(1,|\ell_{\min}|, |\ell_{\max}|)$.
\end{theorem}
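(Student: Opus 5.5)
The plan is the standard route: reduce to a uniform deviation, apply McDiarmid's inequality, and bound the expectation by Rademacher complexity. First, by the triangle inequality for the supremum,
$$
\Big| D_{\mathscr{F}_{\textsf{prelu}}}(\mathrm{P},\mathrm{Q}) - D_{\mathscr{F}_{\textsf{prelu}}}(\widehat{\mathrm{P}}_n, \widehat{\mathrm{Q}}_m) \Big| \le \sup_{f\in\mathscr{F}_{\textsf{prelu}}} \Big| \int f\,d(\mathrm{P}-\widehat{\mathrm{P}}_n) - \int f\,d(\mathrm{Q}-\widehat{\mathrm{Q}}_m)\Big| =: Z .
$$
So it suffices to show that $Z \le \mathbb{E}Z+\epsilon$ with the stated probability, and that $\mathbb{E}Z \le c_2(n^{-1/2}+m^{-1/2})$.

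\textbf{Concentration.} For $\mathbf{z}\in\mathbb{B}^d$, $\bm{\theta}\in\mathbb{S}^{d-1}$ and $\mu\in[-1,1]$ we have $|\bm{\theta}^\top\mathbf{z}+\mu|\le 2$. Since $|\phi_\ell(u)|\le \max(1,|\ell|)|u|$, every $f\in\mathscr{F}_{\textsf{prelu}}$ satisfies $|f|\le 2\Lambda$.
\begin{itemize}
\item Replacing one $\mathbf{X}_i$ changes $Z$ by at most $4\Lambda/n$.
\item Replacing one $\mathbf{Y}_j$ changes $Z$ by at most $4\Lambda/m$.
\end{itemize}
McDiarmid's inequality then gives
$$
\mathbb{P}\big(|Z-\mathbb{E}Z|\ge\epsilon\big)\le 2\exp\!\Big(-\frac{2\epsilon^2}{16\Lambda^2(1/n+1/m)}\Big) = 2\exp\!\Big(-\frac{\epsilon^2 nm}{8\Lambda^2(n+m)}\Big),
$$
which is exactly the stated probability. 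Only the upper tail is needed, but the two-sided version matches the factor $2$.

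\textbf{Expectation.} Split $\mathbb{E}Z \le \mathbb{E}\sup_f|\int f\,d(\mathrm{P}-\widehat{\mathrm{P}}_n)| + \mathbb{E}\sup_f|\int f\,d(\mathrm{Q}-\widehat{\mathrm{Q}}_m)|$. By symmetrization, each term is at most twice the Rademacher complexity $\mathfrak{R}_n(\mathscr{F}_{\textsf{prelu}})$, respectively $\mathfrak{R}_m(\mathscr{F}_{\textsf{prelu}})$. To handle the extra parameter $\ell$, use the decomposition $\phi_\ell(u)=(u)_+ - \ell(-u)_+$ already noted before \eqref{eq:relu-prelu}. This gives
$$
\sup_{\ell,\bm\theta,\mu}\Big|\sum_i\sigma_i\phi_\ell(\bm\theta^\top\mathbf{X}_i+\mu)\Big| \le \sup_{\bm\theta,\mu}\Big|\sum_i\sigma_i(\bm\theta^\top\mathbf{X}_i+\mu)_+\Big| + \Lambda\sup_{\bm\theta,\mu}\Big|\sum_i\sigma_i(-\bm\theta^\top\mathbf{X}_i-\mu)_+\Big|,
$$
so $\mathfrak{R}_n(\mathscr{F}_{\textsf{prelu}})\le (1+\Lambda)\,\mathfrak{R}_n(\mathscr{F}_{\textsf{relu}})$, because the second supremum ranges over the same ReLU class after reparametrizing $(\bm\theta,\mu)\mapsto(-\bm\theta,-\mu)$. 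For the ReLU class, apply the Ledoux--Talagrand contraction inequality (ReLU is $1$-Lipschitz and vanishes at $0$; the absolute value in the supremum costs at most a factor $2$). This reduces the problem to the linear class $\{\mathbf{z}\mapsto\bm\theta^\top\mathbf{z}+\mu\}$. Its Rademacher complexity is at most $\mathbb{E}\|\frac1n\sum_i\sigma_i\mathbf{X}_i\|_2+\mathbb{E}|\frac1n\sum_i\sigma_i|\le 2/\sqrt n$ by Jensen's inequality. Hence $\mathbb{E}Z\lesssim \Lambda(n^{-1/2}+m^{-1/2})$.

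\textbf{Main obstacle.} The hard part is the constant bookkeeping, namely matching $c_2=\mathcal{O}(\sqrt d)$. The contraction route above gives a dimension-free constant, which is even stronger. If instead one bounds $\mathbb{E}Z$ via Dudley's entropy integral with a covering of $\mathbb{S}^{d-1}\times[-1,1]\times[\ell_{\min},\ell_{\max}]$, the Lipschitz dependence on parameters gives covering number $\lesssim (C/\delta)^{d+2}$, and hence a $\sqrt{d}$ factor. Either route suffices, and the $\Lambda$-dependence can be absorbed into $c_2$ since $\ell_{\min},\ell_{\max}$ are fixed. I would use the contraction argument and note that it implies the stated $\mathcal{O}(\sqrt d)$ bound. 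The care needed is in handling the absolute value inside the supremum and the bias $\mu$ correctly.
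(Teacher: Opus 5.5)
Your proposal is correct. The reduction to $Z$, the bounded-difference constants $4\Lambda/n$ and $4\Lambda/m$, the McDiarmid exponent, and the symmetrization step all coincide with the paper's proof. Where you genuinely differ is in bounding the Rademacher complexity.

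\textbf{The paper's route.} It proves a sup-norm covering bound $\mathcal{N}(\epsilon,\mathscr{F}_{\textsf{prelu}},\|\cdot\|_\infty)\le(1+1/\epsilon)^{d+2}$ by covering the full parameter set $\mathbb{S}^{d-1}\times[-1,1]\times[\ell_{\min},\ell_{\max}]$. This uses the fact that $\phi_\ell(\bm\theta^\top\mathbf{x}+\mu)$ is Lipschitz in $(\bm\theta,\mu,\ell)$. The bound is then fed into Dudley's entropy integral, and the $d+2$ parameters are exactly what produce $c_2=\mathcal{O}(\sqrt d)$.

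\textbf{Your route.} You remove $\ell$ via $\phi_\ell(u)=(u)_+-\ell(-u)_+$, a decomposition the paper uses only for \eqref{eq:relu-prelu}. Ledoux--Talagrand contraction then reduces the problem to the linear class, and tracking your steps gives $\mathbb{E}Z\le 8(1+\Lambda)(n^{-1/2}+m^{-1/2})$ with no dependence on $d$. This is strictly stronger than the stated bound, and shorter, since it avoids the covering-number bookkeeping entirely.

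\textbf{What the paper's route buys.} The same covering lemma gives a finite uniform entropy integral, and hence the Donsker property used for the null limit in Theorem~\ref{thm_prelu_tst_consis}. It also applies to any parametric discriminator class that is Lipschitz in its parameters. Your argument, by contrast, exploits the specific structure of a Lipschitz activation composed with a norm-bounded affine map on $\mathbb{B}^d$.
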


The proof of Theorem \ref{thm_conv_prelu} is provided in Appendix \ref{pf_thm_conv_prelu}. When $m=n,$ the convergence rate of the empirical PReLU-IPM is $\mathcal{O}(n^{-1/2}).$ 
Note that the empirical PReLU-IPM achieves a parametric convergence rate
even if it is nonparametric (in the sense of Corollary \ref{cor_discriminative}).
An empirical MMD achieves a similar convergence rate \citep{gretton2012kernel}.
In contrast, the convergence rate of the empirical Wasserstein distance  
is (almost) parametric when $d\le 2$ but it is  $\mathcal{O}(n^{-1/d})$ when $d > 2$ \citep{Sriperumbudur2009IPM, sriperumbudur2012}, and the convergence rate of the empirical DNN-IPM is $\mathcal{O}(n^{-\beta/(2\beta + d)} \log^2 n )$ \citep{Wang2023manifold}.

\subsubsection{Computation}
In practical computations, evaluating  PReLU-IPM involves numerically solving an optimization problem that maximizes the discrepancy of the moments over the discriminator class. This is typically done via a gradient ascent algorithm, which updates the parameters $(\ell, \mu, \boldsymbol{\theta})$ in the discriminator.
However, the following proposition, whose proof is provided in Appendix \ref{pf_thm_max_equal}, shows that it suffices to evaluate the objective only at the two boundary values of $\ell$, thereby significantly simplifying the optimization.

\medskip
\begin{proposition} \label{thm_max_equal}
For any two probability distributions $\mathrm{P}$ and $\mathrm{Q}$ and any $\ell_{\min} \leq \ell_{\max}$,
$$D_{\mathscr{F}_{\textsf{prelu}}}(\mathrm{P}, \mathrm{Q}) = \max\big(D_{\mathscr{F}_{\ell_{\min}}}(\mathrm{P}, \mathrm{Q}), D_{\mathscr{F}_{\ell_{\max}}}(\mathrm{P}, \mathrm{Q})\big).$$
\end{proposition}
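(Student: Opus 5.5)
The plan is to exploit the fact that, for fixed $(\bm{\theta},\mu)$, the discrepancy is an affine function of the slope $\ell$. The supremum over $\ell$ of the absolute value of an affine function on an interval is then attained at an endpoint.

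First, I use the decomposition already noted before \eqref{eq:relu-prelu}:
\[
\phi_\ell(\bm{\theta}^{\top}\mathbf{z}+\mu)=(\bm{\theta}^{\top}\mathbf{z}+\mu)_+ - \ell\,(-\bm{\theta}^{\top}\mathbf{z}-\mu)_+ .
\]
Fix $(\bm{\theta},\mu)\in\mathbb{S}^{d-1}\times[-1,1]$. Define
\[
A(\bm{\theta},\mu)=\int(\bm{\theta}^{\top}\mathbf{x}+\mu)_+\,(d\mathrm{P}-d\mathrm{Q}),\qquad B(\bm{\theta},\mu)=\int(-\bm{\theta}^{\top}\mathbf{x}-\mu)_+\,(d\mathrm{P}-d\mathrm{Q}).
\]
Both integrals are finite because the integrands are bounded on $\mathbb{B}^d$. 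By linearity of the integral,
\[
g(\ell;\bm{\theta},\mu)\coloneqq\int\phi_\ell(\bm{\theta}^{\top}\mathbf{x}+\mu)\,(d\mathrm{P}-d\mathrm{Q})=A(\bm{\theta},\mu)-\ell\,B(\bm{\theta},\mu),
\]
which is affine in $\ell$.

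Second, any $\ell\in[\ell_{\min},\ell_{\max}]$ can be written as $\ell=t\ell_{\min}+(1-t)\ell_{\max}$ with $t\in[0,1]$. Since $g$ is affine in $\ell$,
\[
g(\ell)=t\,g(\ell_{\min})+(1-t)\,g(\ell_{\max}).
\]
The triangle inequality then gives
\[
|g(\ell)|\le \max\big(|g(\ell_{\min})|,|g(\ell_{\max})|\big).
\]
Taking the supremum over $(\bm{\theta},\mu)$ on both sides, and using that a supremum of a pointwise maximum equals the maximum of the two suprema, yields
\[
\sup_{\ell}\sup_{\bm{\theta},\mu}|g|\le\max\big(D_{\mathscr{F}_{\ell_{\min}}},D_{\mathscr{F}_{\ell_{\max}}}\big).
\]

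The reverse inequality is immediate, because $\mathscr{F}_{\ell_{\min}}$ and $\mathscr{F}_{\ell_{\max}}$ are both contained in $\mathscr{F}_{\textsf{prelu}}$. Combining the two inequalities gives the stated equality. The degenerate case $\ell_{\min}=\ell_{\max}$ is trivial.

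No step is a genuine obstacle. The only point needing care is the interchange of suprema: the bound must be established pointwise in $(\bm{\theta},\mu)$ before taking the supremum, and the order of the two suprema over $\ell$ and over $(\bm{\theta},\mu)$ is irrelevant.
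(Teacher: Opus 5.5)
Your proof is correct and follows the same route as the paper: for fixed $(\bm{\theta},\mu)$ you write the discrepancy as an affine function of $\ell$, bound its absolute value by its values at the endpoints $\ell_{\min}$ and $\ell_{\max}$, and then exchange the suprema over $\ell$ and $(\bm{\theta},\mu)$. Your convex-combination and triangle-inequality step justifies the endpoint bound more precisely than the paper's phrase ``the supremum of a linear function over a compact interval is attained at the boundary''; the function actually being maximized is the absolute value of an affine function, which is convex.
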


For $\mathbf{v} \coloneqq (\mu, \boldsymbol{\theta})$,
let $f_{\phi_\ell, \mathbf{v}}(\mathbf{z}) \coloneqq \phi_\ell (\boldsymbol{\theta}^{\top}\mathbf{z} + \mu)$.
Now our goal is to find $\mathbf{v} \in [-1,1] \times \mathbb{S}^{d-1}$ which maximizes
$|\frac{1}{n} \sum_{i=1}^n f_{\phi_\ell, \mathbf{v}}(\mathbf{X}_i) - \frac{1}{m} \sum_{i=1}^m f_{\phi_\ell, \mathbf{v}}(\mathbf{Y}_i)|$, for each $\ell \in \{\ell_{\min}, \ell_{\max}\}$.
We follow the gradient projection approach on the logarithmically transformed objective,
$$L(\mathbf{v}) = \log \left(\left| \frac{1}{n} \sum_{i=1}^n f_{\phi_\ell, \mathbf{v}}(\mathbf{X}_i) - \frac{1}{m} \sum_{i=1}^m f_{\phi_\ell, \mathbf{v}}(\mathbf{Y}_i) \right|\right).$$
We take the logarithmic transformation of the objective since it has been observed to improve optimization stability and reduce sensitivity to the learning rate \citep{paik2025integralprobabilitymetricsmeet}. 
Given the current parameters $\mu^{\textup{old}}$ and $\boldsymbol{\theta}^{\textup{old}}$, we compute their gradients and update them to new values, $\mu^{\textup{new}}$ and $\boldsymbol{\theta}^{\textup{new}}$, using a gradient ascent step.
We then project these updated values onto the set $[-1,1] \times \mathbb{S}^{d-1}$.
We repeat this step for $T$ iterations.

Since the gradient projection method may converge to a local maximum,
we run the gradient projection method starting from $S$ different random initial points $\{\mathbf{v}_s\}_{s=1}^S :=\{(\mu_s, \boldsymbol{\theta}_s)\}_{s=1}^S$, and then choose the largest IPM value among the results.
The total computational complexity is $\mathcal{O}(S T N d),$
where $S$ is the number of random initializations, $T$ denotes the number of gradient iterations per initialization, and $N$ is the total sample size and $d$ is the dimension of data.
Increasing $S$ improves robustness by reducing sensitivity to poor local optima, while the computational cost increases linearly with $S$.
However, as shown in Fig.~\ref{fig_nprelu_ipm} of Appendix~\ref{app_abl_study}, employing multiple initializations substantially improves test power while incurring only a minor increase in computational time.
This is because $f_{\phi_{\ell_1}, \mathbf{v}_1}, \ldots, f_{\phi_{\ell_S}, \mathbf{v}_S}$ can be computed in parallel (see Appendix~\ref{app_abl_init} for details).
We use $S=100$ as a conservative default that balances stability of the statistic and computational efficiency.
A pseudo-code for estimating the empirical PReLU-IPM is described in Algorithm \ref{alg_prelutst}.

\begin{algorithm}[H]
\small
\caption{Empirical estimate of PReLU-IPM}
\begin{algorithmic}[1]
\Require $S \in \mathbb{N}$, $T \in \mathbb{N}$, $\ell_{\min}, \ell_{\max} \in \mathbb{R}$ s.t. $\ell_{\min} \leq 0 \leq \ell_{\max}$.  
\State \textbf{Input:} $\mathbf{X}^{n} \coloneqq (\mathbf{X}_1, \ldots, \mathbf{X}_n)$ and $\mathbf{Y}^{m} \coloneqq (\mathbf{Y}_1, \ldots, \mathbf{Y}_m)$
\For{$\ell = \ell_{\min}, \ell_{\max}$}
\State Initialize $V = (\mathbf{v}_1, \ldots, \mathbf{v}_S)$
\For{$t = 1, 2, \ldots, T$}
    \State 
    $L(V) = \sum_{s=1}^S \log \left(\left| \frac{1}{n} \sum_{i=1}^n f_{\phi_{\ell}, \mathbf{v}_s}(\mathbf{X}_i) - \frac{1}{m} \sum_{j=1}^m f_{\phi_{\ell}, \mathbf{v}_s}(\mathbf{Y}_j) \right|\right)$
    \State $V \gets \operatorname{Proj}\big(V + \eta \nabla_{V} L(V)\big)$
\EndFor
\State $D_{\mathscr{F}_{\ell}}(\widehat{\mathrm{P}}_n, \widehat{\mathrm{Q}}_m) = 
\max_{s \in [S]} \left| \frac{1}{n} \sum_{i=1}^n f_{\phi_{\ell}, \mathbf{v}_s}(\mathbf{X}_i) - \frac{1}{m} \sum_{j=1}^m f_{\phi_{\ell}, \mathbf{v}_s}(\mathbf{Y}_j) \right|$
\EndFor
\State $D_{\mathscr{F}_{\textsf{prelu}}}(\widehat{\mathrm{P}}_n, \widehat{\mathrm{Q}}_m) = \max\left(D_{\mathscr{F}_{\ell_{\min}}}(\widehat{\mathrm{P}}_n, \widehat{\mathrm{Q}}_m), D_{\mathscr{F}_{\ell_{\max}}}(\widehat{\mathrm{P}}_n, \widehat{\mathrm{Q}}_m)\right)$\\
\Return $D_{\mathscr{F}_{\textsf{prelu}}}(\widehat{\mathrm{P}}_n, \widehat{\mathrm{Q}}_m)$
\end{algorithmic} 
\label{alg_prelutst}
\end{algorithm}

\subsection{PReLU-TST and its theoretical properties}
\label{sec_3_4}

For the sake of simplicity, we assume $n=m$. A level-$\alpha$ PReLU-TST is defined as 
\begin{align}
    T_{\textsf{prelu}, \alpha}(\mathbf{X}^n, \mathbf{Y}^n) \coloneqq \mathbb{I}\big(D_{\mathscr{F}_{\textsf{prelu}}}(\widehat{\mathrm{P}}_n, \widehat{\mathrm{Q}}_n) \ge\eta_{\alpha,n}\big), \label{def_prelutest}
\end{align} 
where the threshold $\eta_{\alpha,n}(=\eta_{\alpha,n,n})$ is chosen to have Type-I error be (asymptotically) $\alpha.$
In this section, we propose three ways of choosing $\eta_{\alpha,n}$ and
study asymptotic powers of PReLU-TST.

\subsubsection{Choice of the threshold}
\label{sec_3_4_1}

We describe three approaches for selecting $\eta_{\alpha,n},$
each providing a valid threshold for controlling the Type-I error.
The first approach uses Theorem \ref{thm_conv_prelu}, which provides a finite-sample upper bound on the test statistic under the null hypothesis. This bound directly yields a valid choice of $\eta_{\alpha,n}$ with guaranteed Type-I error control. The second approach relies on the asymptotic distribution of the test statistic. By characterizing its limiting behavior under the null hypothesis, we can approximate 
$\eta_{\alpha,n}$ using the corresponding asymptotic quantile. Finally, the third approach employs a permutation test, where repeated random permutations of the pooled sample approximate the null distribution empirically. $\eta_{\alpha,n}$ is obtained as the empirical $(1-\alpha)$-quantile of the permuted statistics.

\paragraph{(1) Using Theorem \ref{thm_conv_prelu}:}
We let
$$
\eta_{\alpha, n} \coloneqq \frac{2c_2 + 4 \Lambda \sqrt{\log(2/\alpha)}}{\sqrt{n}},
$$
with the constants $c_2 = \mathcal{O}(\sqrt{d})$ and $\Lambda \coloneqq \max(1,|\ell_{\min}|, |\ell_{\max}|)$ defined in Theorem \ref{thm_conv_prelu}.
Theorem \ref{thm_conv_prelu} implies that the Type-I error of $T_{\textsf{prelu}, \alpha}(\mathbf{X}^n, \mathbf{Y}^m)$ is less than equal to $\alpha.$
A problem, however, of this threshold is that the Type-I error could be too small since
we have chosen the threshold based on the upper bound of the tail probability
of PReLU-IPM which may be quite loose.

\paragraph{(2) Using the asymptotic distribution:}
The following theorem describes the asymptotic behavior of the empirical PReLU-IPM  under the null hypothesis.

\medskip
\begin{theorem}
\label{thm_prelu_tst_consis}
    Under the null hypothesis $\mathbb{H}_0: \mathrm{P}=\mathrm{Q}$,
    $$\sqrt{n} \cdot D_{\mathscr{F}_{\textsf{prelu}}}(\widehat{\mathrm{P}}_n, \widehat{\mathrm{Q}}_n) \overset{d}{\to} \sup_{\ell \in \{\ell_{\min}, \ell_{\max}\}} 
    \sup_{\mu \in [-1,1]} 
    \sup_{ \bm{\theta} \in \mathbb{S}^{d-1}} \big|G(\ell, \mu, \bm{\theta})\big|,$$
    where $G(\ell,\cdot,\cdot) :  [-1,1] \times \mathbb{S}^{d-1} \to \mathbb{R}$ is the Gaussian process with the mean function identically zero and the covariance function $K$ defined by  
    $$K \big((\ell, \mu,\bm{\theta}^{\top})^{\top}, (\tilde{\ell}, \tilde{\mu}, \tilde{\bm{\theta}}^{\top})^{\top}\big) = \mathbb{E}_{\mathbf{X} \sim \mathrm{P}} \Big[2 \phi_\ell (\boldsymbol{\theta}^{\top}\mathbf{X} + \mu)\phi_{\tilde{\ell}} (\tilde{\boldsymbol{\theta}}^{\top}\mathbf{X} + \tilde{\mu}) \Big].  $$    
\end{theorem}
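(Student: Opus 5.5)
The plan is a Donsker-class argument followed by the continuous mapping theorem. By Proposition~\ref{thm_max_equal}, applied to the empirical measures, the statistic can be written as
$$D_{\mathscr{F}_{\textsf{prelu}}}(\widehat{\mathrm{P}}_n,\widehat{\mathrm{Q}}_n)=\sup_{(\ell,\mu,\bm{\theta})\in T}\big|(\widehat{\mathrm{P}}_n-\widehat{\mathrm{Q}}_n)f_{\ell,\mu,\bm{\theta}}\big|,$$
where $T=\{\ell_{\min},\ell_{\max}\}\times[-1,1]\times\mathbb{S}^{d-1}$ and $f_{\ell,\mu,\bm{\theta}}(\mathbf{z})=\phi_\ell(\bm{\theta}^\top\mathbf{z}+\mu)$. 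Let $\mathscr{G}=\{f_{\ell,\mu,\bm{\theta}}:(\ell,\mu,\bm{\theta})\in T\}$. I would show that $\mathscr{G}$ is $\mathrm{P}$-Donsker. Under $\mathbb{H}_0$ we have $\sqrt{n}(\widehat{\mathrm{P}}_n-\widehat{\mathrm{Q}}_n)=\sqrt{n}(\widehat{\mathrm{P}}_n-\mathrm{P})-\sqrt{n}(\widehat{\mathrm{Q}}_n-\mathrm{P})$. The two terms are independent, and each converges weakly in $\ell^\infty(T)$ to a $\mathrm{P}$-Brownian bridge. Their difference therefore converges to $G=\mathbb{G}_1-\mathbb{G}_2$, with $\mathbb{G}_1,\mathbb{G}_2$ independent copies. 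The map $z\mapsto\sup_T|z|$ is continuous (indeed $1$-Lipschitz) on $\ell^\infty(T)$, so the continuous mapping theorem gives the stated limit.

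The Donsker property is routine. Each $f\in\mathscr{G}$ is bounded on $\mathbb{B}^d$ by $2\Lambda$, and the map $(\mu,\bm{\theta})\mapsto f_{\ell,\mu,\bm{\theta}}(\mathbf{z})$ is Lipschitz uniformly in $\mathbf{z}\in\mathbb{B}^d$:
$$|f_{\ell,\mu,\bm{\theta}}(\mathbf{z})-f_{\ell,\tilde\mu,\tilde{\bm{\theta}}}(\mathbf{z})|\le\Lambda\big(|\mu-\tilde\mu|+\|\bm{\theta}-\tilde{\bm{\theta}}\|_2\big).$$
The class is thus Lipschitz in a parameter ranging over a compact subset of $\mathbb{R}^{d+1}$, taken twice because $\ell$ has two values. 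Its bracketing numbers satisfy $N_{[\,]}(\epsilon,\mathscr{G},L_2(\mathrm{P}))\lesssim(C/\epsilon)^{d+1}$, the bracketing entropy integral is finite, and $\mathscr{G}$ is Donsker by the standard bracketing theorem (van der Vaart, Thm.~19.5 / Example 19.7). This is the same entropy computation that presumably underlies Theorem~\ref{thm_conv_prelu}. Alternatively, one could note that $\mathscr{G}$ is VC-subgraph, being a composition of a monotone-piece function with half-space indices.

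The covariance requires care, and I expect it to be the main subtle point. The limit $G=\mathbb{G}_1-\mathbb{G}_2$ has covariance $2\big(\mathrm{P}ff'-\mathrm{P}f\,\mathrm{P}f'\big)$, a centered covariance. The statement instead writes the uncentered second moment $2\,\mathbb{E}[ff']$. I would derive the centered version and then reconcile it with the displayed $K$. One option is to read $K$ as implicitly centered, with $\phi_\ell$ replaced by $\phi_\ell-\mathbb{E}\phi_\ell$. Another is to check whether the two versions yield the same distribution of $\sup_T|G|$; in general they do not, since adding an independent random linear term $\xi\,\mathrm{P}f$ changes the law. So I would state the result with the centered covariance, noting that the two versions coincide when $\mathrm{P}f=0$. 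The remaining steps — independence of the two samples, joint weak convergence via product spaces, and measurability handled through outer expectations — are standard.
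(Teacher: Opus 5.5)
Your argument is correct and is essentially the paper's. The paper obtains the $\mathrm{P}$-Donsker property from the finite entropy integral given by the sup-norm covering bound of Lemma~\ref{lem_cover_number_prelu}, then cites a functional CLT. You do the same, except that you first reduce to $\ell\in\{\ell_{\min},\ell_{\max}\}$ via Proposition~\ref{thm_max_equal} applied to the empirical measures, and you write out the independence and continuous-mapping steps.

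Your covariance remark is also correct and points to an error in the statement that the paper's proof does not address. The limit of $\sqrt{n}(\widehat{\mathrm{P}}_n-\widehat{\mathrm{Q}}_n)$ has covariance $2(\mathrm{P}ff'-\mathrm{P}f\,\mathrm{P}f')$, not the displayed uncentered $K$. The two choices give different laws for the supremum; for instance, $\mu=1$ makes $f$ affine on $\mathbb{B}^d$, so the constant shift affects only the uncentered version. The theorem should therefore be read with the centered covariance, as you propose.
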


The proof of Theorem \ref{thm_prelu_tst_consis} is provided in Appendix \ref{pf_thm_prelu_tst_consis}.
We use the upper $\alpha$-quantile of $Z:=\sup_{\ell \in \{\ell_{\min}, \ell_{\max}\}} \sup_{\mu \in [-1,1]}  \sup_{ \bm{\theta} \in \mathbb{S}^{d-1}} \big|G(\ell, \mu, \bm{\theta})\big|$ for $\eta_{\alpha, n}$,
which could be obtained by a Monte-Carlo simulation, where
$\mathrm{P}$ is replaced by the empirical distribution of the pooled sample
$(\mathbf{X}^n, \mathbf{Y}^n).$ Generating $Z,$ however, would be computationally demanding, since generating a sample path of a  multivariate Gaussian process is itself computationally intensive.

\paragraph{(3) Using the permutation test:}
The permutation test is a nonparametric resampling-based procedure \citep{pesarin2010permutation,
Kim2022Minimax}. It constructs the \textit{resampling distribution}, also called the \textit{permutation distribution}, of a chosen test statistic by repeatedly permuting (i.e., randomly reassigning) the labels of the observations. Let $\mathbf{U}=(\mathbf{U}_1, \ldots, \mathbf{U}_{2n}) \coloneqq (\mathbf{X}^n, \mathbf{Y}^n)$ denote the pooled sample. Given a permutation $\pi = (\pi_1, \ldots, \pi_{2n})$ of $\{1, \ldots, {2n}\}$, let $\mathbf{U}_{\pi}=(\mathbf{U}_{\pi_{1}}, \ldots, \mathbf{U}_{\pi_{2n}})$ denote the permuted sample. 
Under $\mathbb{H}_0: \mathrm{P} = \mathrm{Q}$, the joint distribution of $\mathbf{U}$ is invariant to that of $\mathbf{U}_{\pi}$, which is a property known as exchangeability.

Let $\widehat{\mu}_{\pi}$ and $\widehat{\nu}_{\pi}$ be the empirical distributions using the permuted samples
$(\mathbf{U}_{\pi_{1}}, \ldots, \mathbf{U}_{\pi_{n}})$ and $(\mathbf{U}_{\pi_{n+1}}, \ldots, \mathbf{U}_{\pi_{2n}})$, respectively. 
The value of the statistic $D_{\mathscr{F}_{\textsf{prelu}}}(\hat{\mu}_\pi,\hat{\nu}_\pi)$ depends only on the subset of indices assigned to each group, and not on their ordering within the groups.
Let $\Pi_{2n}$ denote the subset of all permutations of $\{1,\ldots,2n\}$ such that $\pi_1<\ldots<\pi_n$ and $\pi_{n+1}<\ldots<\pi_{2n}$.
Let $F_{n}$ be the empirical  distribution function of $\{D_{\mathscr{F}_{\textsf{prelu}}}(\widehat{\mu}_{\pi}, \, \widehat{\nu}_{\pi}), \pi \in \Pi_{2n}\}$ defined as
$$
F_{n}(t) = \frac{1}{\binom{2n}{n}} \sum_{\pi \in \Pi_{2n}} \mathbb{I}[D_{\mathscr{F}_{\textsf{prelu}}}(\widehat{\mu}_{\pi}, \widehat{\nu}_{\pi}) \leq t].
$$
Then, we let $\eta_{n,\alpha}$ be  the $1-\alpha$ upper quantile of $F_{n},$ that is,  
$$
\eta_{\alpha, n} \coloneqq \inf \{ t: F_{n}(t) \geq 1-\alpha \}.
$$
The exchangeability ensures that the permutation test $\mathbb{I}(D_{\mathscr{F}_{\textsf{prelu}}}(\widehat{\mathrm{P}}_n, \widehat{\mathrm{Q}}_n) > \eta_{\alpha, n})$ can achieve exact control of the Type-I error rate at level $\alpha$ when testing $\mathbb{H}_0: \mathrm{P} = \mathrm{Q}$ \citep{lehmann2005testing, ChungRomano2013}.
In practice, one can draw and use a random subset of permutations from $\Pi_{2n}$  to save computation time \citep{gretton2012kernel, li2019, Holt_Sullivan_2023}.

\subsubsection{Asymptotic power analysis}

The next theorem proves that PReLU-TST is consistent.
The proof is provided in Appendix \ref{pf_theo_fix}.
\medskip

\begin{theorem}[Under a fixed alternative hypothesis] \label{theo_fix}
    Consider PReLU-TST in \eqref{def_prelutest} with $\eta_{n,\alpha} = o(1)$.
    For any two distributions $\mathrm{P} \neq \mathrm{Q}$, there exists a constant $c_3>0$ such that
    $$
    \mathbb{P}_{\mathrm{P} \neq \mathrm{Q}} \big( T_{\textsf{prelu}, \alpha}(\mathbf{X}^n, \mathbf{Y}^n)=1 \big) \geq 1 -\frac{c_3}{\sqrt{n}},
    $$
    for all sufficiently large $n$.
\end{theorem}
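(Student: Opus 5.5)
Since $\mathrm{P}\neq\mathrm{Q}$, Corollary~\ref{cor_discriminative} gives $\delta \coloneqq D_{\mathscr{F}_{\textsf{prelu}}}(\mathrm{P},\mathrm{Q})>0$. Because $\eta_{\alpha,n}=o(1)$, there is an $n_0$ such that $\eta_{\alpha,n}\le \delta/2$ for all $n\ge n_0$. For such $n$ the rejection event contains the event $\{D_{\mathscr{F}_{\textsf{prelu}}}(\widehat{\mathrm{P}}_n,\widehat{\mathrm{Q}}_n)\ge \delta/2\}$. That event in turn contains $\{|D_{\mathscr{F}_{\textsf{prelu}}}(\mathrm{P},\mathrm{Q})-D_{\mathscr{F}_{\textsf{prelu}}}(\widehat{\mathrm{P}}_n,\widehat{\mathrm{Q}}_n)|<\delta/2\}$. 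It therefore suffices to show that the complementary deviation probability is at most $c_3/\sqrt{n}$.

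For this we apply Theorem~\ref{thm_conv_prelu} with $m=n$ and $\epsilon=\delta/4$. Since the deviation bound $2c_2/\sqrt{n}+\delta/4$ is below $\delta/2$ once $n\ge 64c_2^2/\delta^2$, we obtain
$$
\mathbb{P}\Big(\big|D_{\mathscr{F}_{\textsf{prelu}}}(\mathrm{P},\mathrm{Q})-D_{\mathscr{F}_{\textsf{prelu}}}(\widehat{\mathrm{P}}_n,\widehat{\mathrm{Q}}_n)\big|\ge \delta/2\Big)\le 2\exp\!\Big(-\frac{\delta^2 n}{256\,\Lambda^2}\Big).
$$
The exponent $\delta^2 n/(256\Lambda^2)$ uses $nm/(n+m)=n/2$.

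It remains to convert the exponential tail into the stated polynomial form. Since $\sup_{n\ge1}\sqrt{n}\,e^{-an}=(2ae)^{-1/2}$ for $a>0$, we have $2e^{-an}\le c_3/\sqrt{n}$ with $c_3=2(2ae)^{-1/2}$ and $a=\delta^2/(256\Lambda^2)$. This $c_3$ depends on $\mathrm{P},\mathrm{Q}$ only through $\delta$. Taking $n\ge\max(n_0,64c_2^2/\delta^2)$ gives the claim.

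The only delicate point is that the threshold $\eta_{\alpha,n}$ may be random, as in the permutation choice. I would handle this by reading the hypothesis $\eta_{\alpha,n}=o(1)$ as deterministic; threshold choice (1) is $O(n^{-1/2})$, so it qualifies. If instead the hypothesis is read as $o_P(1)$, an additional term $\mathbb{P}(\eta_{\alpha,n}>\delta/2)$ appears. Bounding that term for the permutation threshold would require a separate concentration argument, for instance applying Theorem~\ref{thm_conv_prelu}-type bounds to the permuted samples, which are draws from the mixture. I would state the deterministic reading explicitly.
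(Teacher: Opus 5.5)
Your proof is correct. Its reduction is the same as the paper's: positivity of $\delta$, then $\eta_{\alpha,n}\le\delta/2$ for large $n$, then a bound on the deviation probability. Where you differ is the tool used to control that probability.

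The paper applies Markov's inequality to $|D_{\mathscr{F}_{\textsf{prelu}}}(\mathrm{P},\mathrm{Q})-D_{\mathscr{F}_{\textsf{prelu}}}(\widehat{\mathrm{P}}_n,\widehat{\mathrm{Q}}_n)|$. It uses only the first-moment bound $\mathbb{E}S\le 2c_2/\sqrt{n}$ from the symmetrization step \eqref{thm_conv_prelu_3}. This gives the $1/\sqrt{n}$ rate directly, with $c_3=4c_2/\delta$, as soon as $\eta_{\alpha,n}\le\delta/2$, and needs no concentration inequality.

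You instead use the full McDiarmid-based statement of Theorem~\ref{thm_conv_prelu}. This costs an extra largeness condition on $n$, but shows the Type-II error is at most $2\exp(-\delta^2 n/(256\Lambda^2))$. The stated $c_3/\sqrt{n}$ bound is then a deliberate weakening of what you actually proved. The stronger form is worth keeping. The proof of Theorem~\ref{thm_minimax} reuses the Markov bound $1-4c_2/(\sqrt{n}\,D_{\mathscr{F}_{\textsf{prelu}}}(\mathrm{P},\mathrm{Q}))$ and so only obtains power $1-O(1/\log n)$. Your bound depends on $(\mathrm{P},\mathrm{Q})$ only through $D_{\mathscr{F}_{\textsf{prelu}}}(\mathrm{P},\mathrm{Q})$, so it would give $1-2\exp(-c\log^2 n)$ for some $c>0$, uniformly over the same local alternatives.

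There are two harmless slips. At $n=64c_2^2/\delta^2$ the deviation bound equals $\delta/2$ rather than falling below it, so take $n>64c_2^2/\delta^2$. Also, $\sup_{n\ge1}\sqrt{n}\,e^{-an}$ is only bounded by $(2ae)^{-1/2}$, not equal to it, since the real maximizer $1/(2a)$ need not be an integer; the inequality is all you use.

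Reading $\eta_{\alpha,n}$ as a deterministic sequence is exactly what the paper's proof does implicitly. For the permutation threshold, note that the permuted halves are a uniformly random balanced split of the pooled sample, not i.i.d.\ draws from the mixture. The clean fix is to argue conditionally on the pooled data. Since every point lies in $\mathbb{B}^d$, a chaining bound for sampling without replacement, combined with a bounded-differences inequality for random permutations, bounds the $(1-\alpha)$-quantile by a deterministic $O(n^{-1/2})$ sequence. Your argument then applies verbatim.
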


Note that all threshold values proposed in the previous section satisfy $\eta_{\alpha, n} = O(1/\sqrt{n})$. 
Theorem \ref{theo_fix} implies that PReLU-TST is nonparametric, i.e.,
$$\lim\limits_{\substack{n \rightarrow \infty}} \mathbb{P}_{\mathrm{P} \neq \mathrm{Q}} \big( T_{\textsf{prelu}, \alpha}(\mathbf{X}^n, \mathbf{Y}^n)=1 \big) = 1,$$
without requiring any distributional assumptions on $\mathrm{P}$ and $\mathrm{Q}$.

For more refined asymptotic properties of PReLU-TST, we investigate the consistency of PReLU-TST under local alternatives.
We consider the testing problem
\begin{equation} \label{H1:local}
    \mathbb{H}_0: \mathrm{P}=\mathrm{Q} \quad \mathrm{ vs. } \quad \mathbb{H}_1(\kappa_n): D(\mathrm{P},\mathrm{Q}) \geq \kappa_n
\end{equation}
with $\kappa_n \to 0$ as $n \to \infty$, where $D(\cdot, \cdot)$ is a given discrepancy measure
and $\kappa_n$ is called a separation gap.
For the local alternative, we are interested in whether the test $T$ is uniformly consistent over all the alternatives in \eqref{H1:local} as the sample size increases.

For a given test $T,$
the goal is to find the smallest deviation $\kappa_n(T)$, called the \textit{minimum detection boundary} or \textit{minimum separation gap}, between $\mathrm{P}$ and $\mathrm{Q}$, which is the minimum separation gap $\kappa_n$ 
under which the test $T$ is uniformly consistent under the local alternative in (\ref{H1:local}).
That is, $\kappa_n(T)$ is defined as
$$
\kappa_n(T) \coloneqq \inf \left\{ \kappa_n : \lim_{n \xrightarrow{} \infty} 
\inf_{(\mathrm{P},\mathrm{Q})\in \mathbb{H}_1(\kappa_n)} \mathbb{P}_{(\mathrm{P},\mathrm{Q})}\big(T(\mathbf{X}^n, \mathbf{Y}^n)=1 \big) = 1 \right\},
$$
for a given discrepancy measure $D(\cdot)$. 
If the minimum separation gap  of $T$ is equal to the minimax lower bound of the separation gap, we say the $T$
is minimax optimal.
The minimax lower bound of $\kappa_n$ is frequently applied to quantify the intrinsic difficulty of a testing problem  \citep{Ingster1987, li2019, BUTUCEA2006597,Kim2022Minimax}.

For the discrepancy measure $D$ in \eqref{H1:local}, we consider the H\"{o}lder-IPM 
because PReLU-IPM is closely related to H\"{o}lder-IPM as stated in Corollary \ref{cor_holder}.
The theorem below provides the minimum separation gap of PReLU-TST under the H\"{o}lder-IPM local alternatives.

\medskip
\begin{theorem}[Under local alternatives]
\label{thm_minimax}
For any significance level $\alpha \in (0, 1)$,
(i) if $\beta > \frac{d+3}{2}$ and $\kappa_n \gtrsim n^{-1/2} \log n$, or
(ii) if $\beta < \frac{d+3}{2}$ and $\kappa_n \gtrsim n^{-\beta/(d+3)} \log n$,
then the power of PReLU-TST against the local alternative hypothesis 
$\mathbb{H}_1(\kappa_n): D_{\mathscr{H}^{\beta, d}}(\mathrm{P},\mathrm{Q}) \geq \kappa_n$ satisfies
$$
\lim_{n \xrightarrow{} \infty} 
\inf_{(\mathrm{P},\mathrm{Q})\in \mathbb{H}_1(\kappa_n)} \mathbb{P}_{(\mathrm{P},\mathrm{Q})}\big(T_{\textsf{prelu}, \alpha}(\mathbf{X}^n, \mathbf{Y}^n)=1 \big) = 1.$$
\end{theorem}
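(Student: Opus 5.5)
The plan is to convert the H\"{o}lder-IPM separation into a lower bound on the population PReLU-IPM, and then use the uniform concentration of Theorem \ref{thm_conv_prelu} to show that the empirical statistic exceeds the threshold with probability tending to one, uniformly over $\mathbb{H}_1(\kappa_n)$.

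\emph{Step 1: invert Corollary \ref{cor_holder}.} For every pair $(\mathrm{P},\mathrm{Q})\in\mathbb{H}_1(\kappa_n)$ we have $\kappa_n \le D_{\mathscr{H}^{\beta,d}}(\mathrm{P},\mathrm{Q})$. In case (i), $\beta>\frac{d+3}{2}$, this gives $D_{\mathscr{F}_{\textsf{prelu}}}(\mathrm{P},\mathrm{Q}) \ge \kappa_n/c_1$. In case (ii), $\beta<\frac{d+3}{2}$, it gives
$$D_{\mathscr{F}_{\textsf{prelu}}}(\mathrm{P},\mathrm{Q}) \ge (\kappa_n/c_1)^{(d+3)/(2\beta)}.$$
Substituting the assumed rates yields the following lower bounds:
\begin{itemize}
\item in case (i), $D_{\mathscr{F}_{\textsf{prelu}}}(\mathrm{P},\mathrm{Q}) \gtrsim n^{-1/2}\log n$;
\item in case (ii), $D_{\mathscr{F}_{\textsf{prelu}}}(\mathrm{P},\mathrm{Q}) \gtrsim n^{-1/2}(\log n)^{(d+3)/(2\beta)}$.
\end{itemize}
Since $(d+3)/(2\beta)>1$, in both cases $\sqrt{n}\,D_{\mathscr{F}_{\textsf{prelu}}}(\mathrm{P},\mathrm{Q}) \ge c\log n \to\infty$, uniformly over the alternative class. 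The constant $c_1$ depends only on $(d,\beta)$, which is what makes this bound uniform.

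\emph{Step 2: uniform concentration.} Theorem \ref{thm_conv_prelu} holds for arbitrary $\mathrm{P},\mathrm{Q}$ on $\mathbb{B}^d$ with constants independent of the pair. Apply it with $m=n$ and choose $\epsilon_n = D_{\mathscr{F}_{\textsf{prelu}}}(\mathrm{P},\mathrm{Q})/2$. Then, with probability at least $1-2\exp(-\epsilon_n^2 n/(16\Lambda^2))$,
$$D_{\mathscr{F}_{\textsf{prelu}}}(\widehat{\mathrm{P}}_n,\widehat{\mathrm{Q}}_n) \ge \tfrac12 D_{\mathscr{F}_{\textsf{prelu}}}(\mathrm{P},\mathrm{Q}) - \frac{2c_2}{\sqrt n}.$$
By Step 1, $n\epsilon_n^2 \gtrsim (\log n)^2$, so the failure probability is at most $2\exp(-c(\log n)^2)\to 0$, uniformly over the class. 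For the same reason the right-hand side is at least $c'\log n/\sqrt{n}$ for large $n$.

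\emph{Step 3: compare with the threshold.} It remains to show that $\eta_{\alpha,n}=o(\log n/\sqrt n)$ uniformly over the alternatives.
\begin{itemize}
\item For the threshold built from Theorem \ref{thm_conv_prelu}, $\eta_{\alpha,n}\asymp n^{-1/2}$, so this is immediate.
\item For the asymptotic-quantile threshold, the quantile of $Z$ computed from the pooled empirical distribution is bounded by a constant independent of the distribution. This follows from a Dudley entropy bound on the parametric class, whose envelope is bounded by $2\Lambda$ on $\mathbb{B}^d$.
\item For the permutation threshold, the permuted statistic is itself an empirical PReLU-IPM between two random halves of the pooled sample. I would bound its conditional tail using a sub-Gaussian permutation inequality together with the same entropy/Rademacher bound that gives $c_2$. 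This should show $\eta_{\alpha,n}\le C\sqrt{\log(1/\alpha)}/\sqrt n$ with high probability, uniformly in $(\mathrm{P},\mathrm{Q})$.
\end{itemize}
Combining Steps 2 and 3 gives power tending to one uniformly over $\mathbb{H}_1(\kappa_n)$.

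The main obstacle is the uniform control of the permutation (and Monte-Carlo) threshold under alternatives, because there the pooled sample is not exchangeable with respect to the true distributions. I would handle it by conditioning on the pooled sample and applying a bounded-differences inequality for sampling without replacement (a Serfling-type bound) to the supremum over the class. This only needs the envelope bound and the VC-type entropy of $\mathscr{F}_{\textsf{prelu}}$, not anything about $\mathrm{P}$ or $\mathrm{Q}$. The $\log n$ factor in $\kappa_n$ leaves enough slack to absorb any logarithmic losses from this step.
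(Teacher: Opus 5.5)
Your proof is correct and has the same skeleton as the paper's: invert Corollary \ref{cor_holder} to get $\sqrt{n}\,D_{\mathscr{F}_{\textsf{prelu}}}(\mathrm{P},\mathrm{Q})\gtrsim\log n$ uniformly over $\mathbb{H}_1(\kappa_n)$, then show the statistic clears the threshold. The probabilistic step, however, is different.

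The paper reuses the Markov-inequality bound from the proof of Theorem \ref{theo_fix}, namely type-II error at most $4c_2/(\sqrt{n}\,D_{\mathscr{F}_{\textsf{prelu}}}(\mathrm{P},\mathrm{Q}))$, which gives power $1-O(1/\log n)$. This is shorter, but it has two tacit limitations:
\begin{itemize}
\item it needs $\eta_{\alpha,n}\le D_{\mathscr{F}_{\textsf{prelu}}}(\mathrm{P},\mathrm{Q})/2$ for every pair in a class whose separation shrinks with $n$, which the paper does not check;
\item it treats $\eta_{\alpha,n}$ as deterministic, so strictly it only covers the threshold built from Theorem \ref{thm_conv_prelu}.
\end{itemize}

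Your route improves on both points. The McDiarmid tail of Theorem \ref{thm_conv_prelu} with $\epsilon=D_{\mathscr{F}_{\textsf{prelu}}}(\mathrm{P},\mathrm{Q})/2$ gives the much faster type-II rate $2\exp(-c(\log n)^2)$. Your Step 3 supplies the uniform $O(n^{-1/2})$ control of the possibly data-dependent threshold that the paper leaves implicit.

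The permutation part of Step 3 is only sketched, but it closes easily:
\begin{itemize}
\item Condition on the pooled sample. By Markov's inequality, the $(1-\alpha)$-quantile of the permutation distribution is at most $\alpha^{-1}$ times its conditional mean.
\item That conditional mean is $O(n^{-1/2})$ for every pooled sample in $\mathbb{B}^d$, by chaining. Each increment is sub-Gaussian with parameter $\lesssim\|f-g\|_\infty/\sqrt{n}$ by Hoeffding's inequality for sampling without replacement. Combine this with the sup-norm covering bound of Lemma \ref{lem_cover_number_prelu}.
\end{itemize}
The resulting bound is deterministic, so you need neither a high-probability qualifier nor a separate Serfling-type concentration step.
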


The proof of Theorem \ref{thm_minimax} is provided in Appendix \ref{pf_thm_minimax}.
Theorem \ref{thm_minimax} demonstrates that, up to a logarithmic factor, PReLU-TST attains the detection boundary of $\mathcal{O} (n^{-\frac{\beta}{d+3}} \vee n^{-\frac{1}{2}}).$  
In comparison, Theorem 1 of \citet{tang2023} establishes the minimax lower bound
of $\kappa_n$ is $O ( n^{-\frac{2 \beta}{d}} \vee n^{-\frac{1}{2}}).$
Therefore, PReLU-TST is minimax optimal when $\beta > \frac{d+3}{2}.$

In contrast, when $\beta < \frac{d+3}{2}$, the detection rate of PReLU-TST becomes suboptimal.
This is nonetheless expected, as PReLU-IPM uses a simple parametric discriminator class.
The suboptimality of PReLU-TST when $\beta < \frac{d+3}{2}$ implies that the power of PReLU-TST may not be good when the difference of the two distributions is highly complicated and subtle. 
We believe that this result is not particularly disappointing, given the simplicity of the discriminator class used in PReLU-TST.
Our numerical studies in Section \ref{sec_4} provide empirical support for this claim. They show that PReLU-TST outperforms other nonparametric tests when the alternative is relatively simple, such as location, scale, and XOR alternatives, 
while remaining competitive in detecting synthetic data generated by complicated neural network models.
In addition, PReLU-TST is simple which would lead its finite sample performance better than its asymptotics. For example,
PReLU-TST dominates Besov-TST by \citet{tang2023}, a minimax optimal test, with large margins.

\section{Empirical experiments}
\label{sec_4}

All experiments are implemented in \texttt{PyTorch}, and PReLU-TST is optimized using the \texttt{torch.optim.} \texttt{SGD} optimizer.
We set the learning rates to $0.1$, and the epochs to 10. To mitigate the impact of local optima, we use $S=100$ random initializations. 
For the computation of PReLU-IPM, we set the slope parameters of the PReLU to $l_{\min} = -1$ and $l_{\max} = 0.5.$

For baselines, we consider IPM-TSTs including
ReLU-IPM \citep{paik2025integralprobabilitymetricsmeet, park2025reluintegralprobabilitymetric},
SIPM \citep{pmlr-v162-kim22b},
MMD \citep{gretton2012kernel}, 
the Wasserstein distance \citep{Ramdas15, Wang2022},
DNN-IPM \citep{Wang2023manifold}
and the negative Besov norm \citep{tang2023}. 
For clarity, we denote the tests based on each IPM as ReLU-TST, SIPM-TST, MMD-TST, Wass-TST, DNN-TST and Besov-TST. 
In addition, we consider multivariate extensions of rank-based tests. Specifically, we consider the Cucconi test \citep{cucconi1968nuovo}, the Lepage test \citep{lepage1971combination}, and the Kolmogorov–Smirnov (KS) test \citep{Kolmogorov1933, Smirnov1939}, as extended by \citet{marozzi2020interpoint}.
We denote these tests as Cucconi-TST, Lepage-TST, and KS-TST.

MMD-TST employs a Gaussian kernel with the bandwidth chosen via the median heuristic \citep{garreau2018largesampleanalysismedian}.
Wass-TST computes the empirical Wasserstein distance using the Python Optimal Transport (\texttt{POT}) library \citep{flamary2021pot}.
For ReLU-TST, SIPM-TST and DNN-TST, although differing in the structure of their discriminators, namely, a single-layer ReLU network (ReLU), a sigmoid-activated network (SIPM), and a two-layer ReLU network (DNN), all of them use similar optimization frameworks as that used for PReLU-TST.
For ReLU-TST and SIPM-TST, we use a learning rate of $0.1$ and train for 10 epochs, with $S=100$ random initials, while
 we use a learning rate of $0.01$, train for 100 epochs for DNN-TST. 
 For Besov-TST, we use the official implementation on GitHub\footnote{\url{https://github.com/rtang1997/Two_sample_test_adversarial}.} and reimplement the method in PyTorch with the wavelet expansion level set to $J=3.$

We compare the powers of PReLU-TST with those of the other test methods, with the significance level being fixed at $\alpha = 0.05$. 
The threshold value of each test is determined via the permutation procedure with 1,000 random permutations. 
The rejection rate is then computed on 1,000 new sample sets to report the mean and standard error of the rejection rates for all test methods.
We normalized the data such that all samples lie within a $d$-dimensional ball of radius $\sqrt{d}$ via the mapping $(\mathbf{X}^n, \mathbf{Y}^m) \mapsto \sqrt{d}(\mathbf{X}^n, \mathbf{Y}^m)/\|(\mathbf{X}^n, \mathbf{Y}^m)\|_{\operatorname{col},2}$.

\subsection{Simulation settings}
\label{sec_4_1}

In this section, we evaluate powers of PReLU-TST by simulation.
For simulated data, we consider three alternatives:
\begin{itemize}
    \item \textit{Location-based shifts}: including a simple mean shift (\texttt{L1}), a mixture-induced shift (\texttt{L2}), and a skewness-driven shift (\texttt{L3}).
    \item \textit{Scale-based changes}: involving a variance change along a single direction (\texttt{S1}), heterogeneous scaling across all directions (\texttt{S2}), and contrasting tail behaviors using a heavy-tailed component via a $t$-distribution (\texttt{S3}).
    \item \textit{XOR-type structures}: synthetic distributions inspired by the XOR problem, where $\mathrm{P}$ and $\mathrm{Q}$ differ through opposing covariances (\texttt{XOR1}) or by forming cluster-based mixtures (\texttt{XOR2}).
\end{itemize}
Each setting is parameterized by values such as $\zeta$, $\alpha$, $\sigma^2$, $v$ and $\rho$, which characterize the discrepancy between $\mathrm{P}$ and $\mathrm{Q}$.
These scenarios are intended to systematically assess the performance of PReLU-TST under a broad spectrum of distributional shifts and discrepancies. 
We evaluate performance across total sample sizes $n + m \in \{200, 400, 600, 800, 1000\}$, with $n \!=\! m$, and dimensions $d \in \{2, 4, 8, 16\}$ in Section \ref{sec_4_2}-\ref{sec_4_4}. 
In Section \ref{sec_4_5}, we consider the small-sample regimes, high dimensional covariates and heavy-tailed distribution settings. Unless otherwise specified, we consider equal sample sizes ($n \!=\! m$).
The task of testing between the two distributions becomes progressively more challenging as the dimension $d$ increases.


\subsection{Simulation 1: Location-based shifts} \label{sec_4_2}

\subsubsection{Settings}
\label{sec_4_2_1}

We let
$
\mathrm{P} = \mathcal{N}(\mathbf{0}_d, \mathrm{I}_d),
$
and consider three different distributions for $\mathrm{Q}$ that represent three different location-based shifts,
\begin{align*}
\texttt{L1}: \mathrm{Q} &= \mathcal{N}\big((\zeta,\mathbf{0}_{d-1}^\top)^\top, \mathrm{I}_d\big), 
& \zeta &= 0.3, \\
\texttt{L2}: \mathrm{Q} &= \tfrac{1}{2}\,\mathcal{N}(\mathbf{0}_d, \mathrm{I}_d) 
+ \tfrac{1}{2}\,\mathcal{N}\big((\zeta,\mathbf{0}_{d-1}^\top)^\top, \mathrm{I}_d\big), 
& \zeta &= 0.5, \\
\texttt{L3}: \mathrm{Q} &= \mathsf{SkewNormal}(0,1,\alpha) 
\;\otimes\; \mathcal{N}(\mathbf{0}_{d-1}, \mathrm{I}_{d-1}), 
& \alpha &= 0.4.
\end{align*}
Here, $\otimes$ indicates a product distribution across axes (e.g., the first axis or first two axes follow the specified distribution, while the remaining $d-1$ or $d-2$ axes are independent standard normals) and we use $\mathcal{N}(\zeta, \Sigma)$ for a multivariate Gaussian distribution with mean $\zeta$ and covariance $\Sigma$, 
and
$\mathsf{SkewNormal}(0,1,\alpha)$ for a univariate skewed normal distribution with location $0$, scale $1$, and skewness parameter $\alpha$.
Thus, $\text{L1}$ corresponds to a mean shift in the first coordinate, $\text{L2}$ is a mixture of the multivariate distributions and a shifted component, and $\text{L3}$ introduces skewness in the first coordinate while keeping the other coordinates unchanged.

\subsubsection{Results}
\label{sec_4_2_2}

Fig. \ref{fig_result1} compares the powers of the test methods across all simulation settings (rows) and data dimensions (columns).
In each panel, the $x$-axis is the sample size and the $y$-axis is the power.  
For location-shift settings,
PReLU-TST consistently outperforms all competing methods across all three scenarios, as shown in Fig. \ref{fig_result1}.
Notably, the performance gap between PReLU-TST and other baselines becomes larger when the dimension $d$ increases. 
In contrast, while ReLU-TST performs comparably to PReLU-TST when $d = 2$, 
its performance deteriorates as $d$ increases, though it still surpasses other baseline methods using infinite dimensional discriminator classes.
Across the various location-shift scenarios, Besov-TST consistently exhibit the lowest powers.
We attribute this to their complex discriminator classes, which tend to yield lower power compared to other methods when the underlying testing problem is relatively simple.

\begin{figure}[h]
    \centering    
    \begin{subfigure}[t]{\textwidth}
        \centering
        \includegraphics[width=\textwidth]{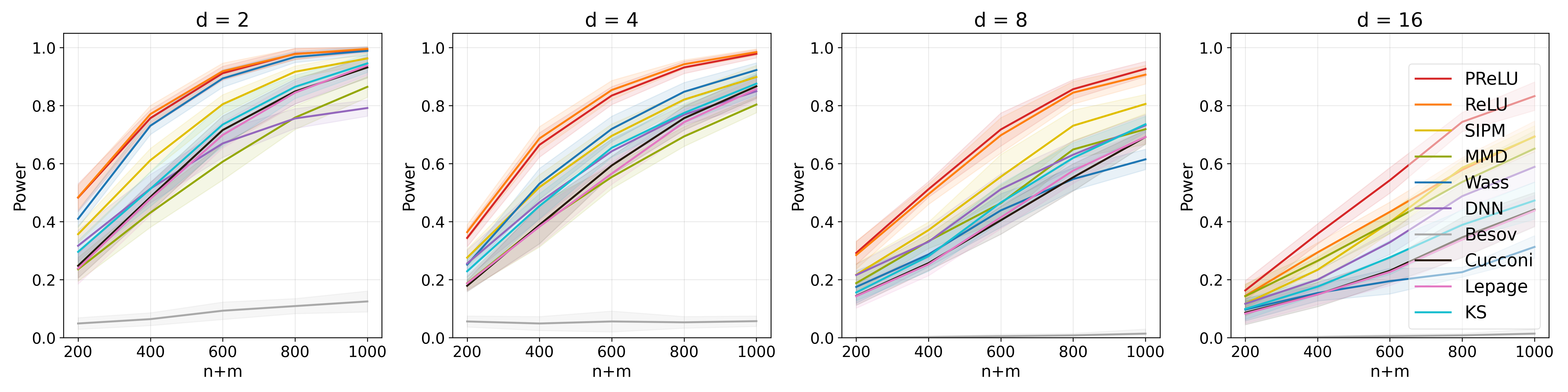}
        \caption{\texttt{L1} (a simple mean shift): $\mathrm{P} \sim \mathcal{N}(\mathbf{0}_d, \mathrm{I}_d), \, \mathrm{Q} \sim \mathcal{N}\big((\zeta,\mathbf{0}_{d-1}^\top)^\top, \mathrm{I}_d\big)$}
        \label{fig:ball}
    \end{subfigure}
    \begin{subfigure}[t]{\textwidth}
        \centering
        \includegraphics[width=\textwidth]{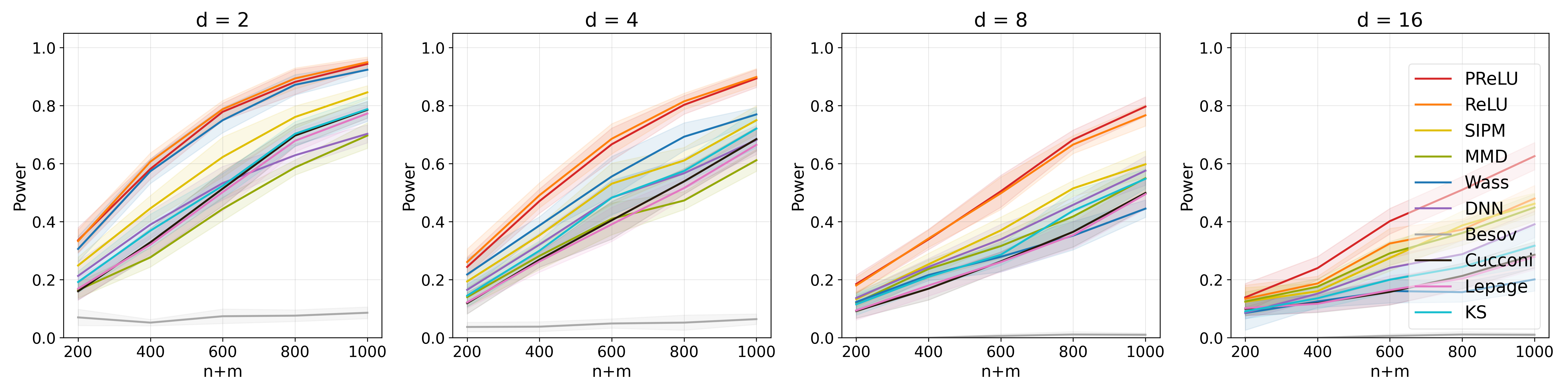}
        \caption{\texttt{L2} (a mixture-induced shift): $\mathrm{P} \sim \mathcal{N}(\mathbf{0}_d, \mathrm{I}_d), \, \mathrm{Q} \sim \frac{1}{2} \mathcal{N}(\mathbf{0}_d, \mathrm{I}_d) + \frac{1}{2}\mathcal{N}\big((\zeta,\mathbf{0}_{d-1}^\top)^\top, \mathrm{I}_d\big)$}
        \label{fig:mixture}
    \end{subfigure}
    \begin{subfigure}[t]{\textwidth}
        \centering
        \includegraphics[width=\textwidth]{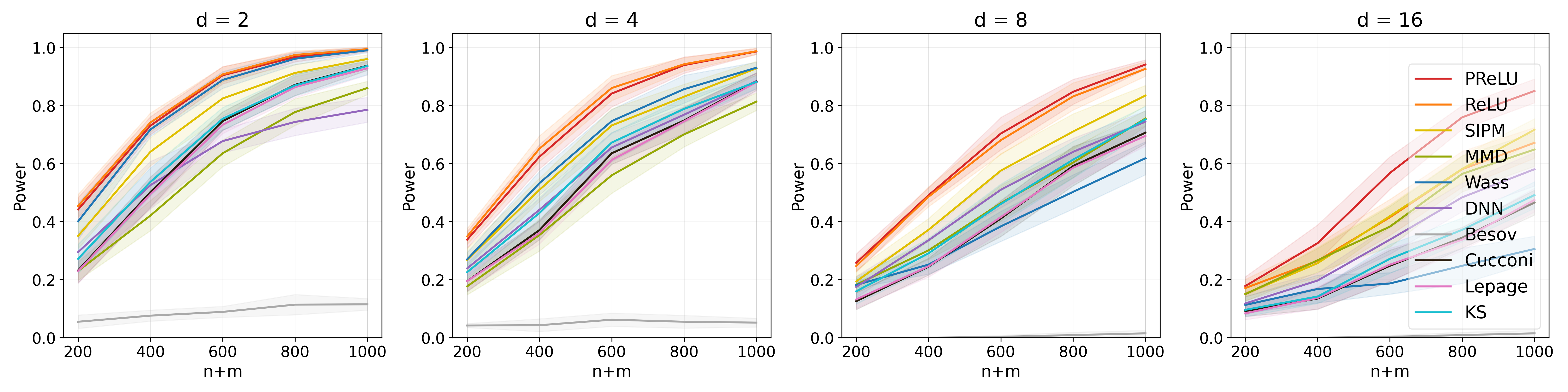}
        \caption{\texttt{L3} (a skewness-driven shift): $\mathrm{P} \sim \mathcal{N}(\mathbf{0}_d, \mathrm{I}_d), \, \mathrm{Q} \sim \mathsf{SkewNormal}(0,1,\alpha) \otimes \mathcal{N}(\mathbf{0}_{d-1}, \mathrm{I}_{d-1})$}
        \label{fig:skewed}
    \end{subfigure}
    \caption{
    Power vs.\ $n+m$ under the location-based shifts for various $d$.
    }
    \label{fig_result1}
\end{figure}


\subsection{Simulation 2: Scale-based changes}
\label{sec_4_3}

\subsubsection{Settings}
\label{sec_4_3_1}

We again take the standard Gaussian for 
$
\mathrm{P} = \mathcal{N}(\mathbf{0}_d, \mathrm{I}_d),
$
and consider three variants of the distribution $\mathrm{Q}$ that differ in their scaling behaviors,
$$
\begin{aligned}
\texttt{S1}: \mathrm{Q} &= \mathcal{N}\big(\mathbf{0}_d, \operatorname{diag}(\sigma^2,\mathbf{1}_{d-1}^\top)\big), 
& \sigma^2 &= 1.8, \\
\texttt{S2}: \mathrm{Q} &= \mathcal{N}\big(\mathbf{0}_d, \operatorname{diag}(\gamma_1,\dots,\gamma_d)\big), 
& \gamma_i &\sim \mathrm{Gamma}(d, 1/d), \\
\texttt{S3}: \mathrm{Q} &= t_v\;\otimes\;\mathcal{N}(\mathbf{0}_{d-1}, \mathrm{I}_{d-1}), 
& v &= 3.0.
\end{aligned}
$$
Here, $\texttt{S1}$ changes the variance in a single coordinate, $\texttt{S2}$ applies heterogeneous scalings across the coordinates with Gaussian-Gamma mixtures, and $\texttt{S3}$ introduces a heavy-tailed component in the first coordinate by replacing the Gaussian distribution with a Student-$t$ distribution with $v$ degrees of freedom, denoted as $t_v$.

\begin{figure}[t]   
    \centering    
    \begin{subfigure}[b]{\textwidth}
        \centering
        \includegraphics[width=\textwidth]{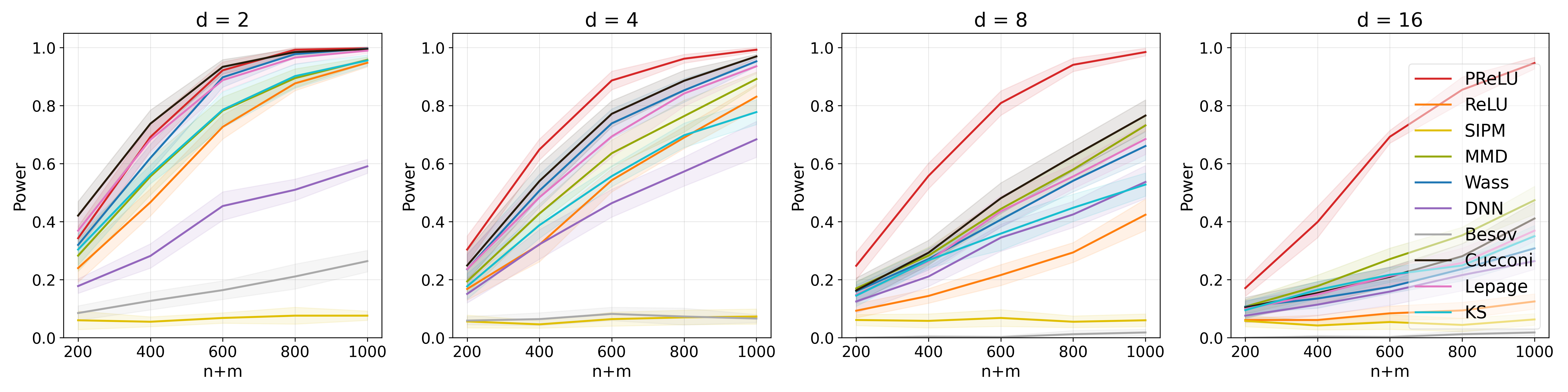}
        \caption{\texttt{S1} (a variance change along a single direction): $\mathrm{P} \sim \mathcal{N}(\mathbf{0}_d, \mathrm{I}_d), \, \mathrm{Q} \sim \mathcal{N}\big(\mathbf{0}_d, \operatorname{diag}(\sigma^2,\mathbf{1}_{d-1}^{\top})\big)$}
        \label{fig:varone}
    \end{subfigure}
    \vspace{0.5em}
    \begin{subfigure}[b]{\textwidth}
        \centering
        \includegraphics[width=\textwidth]{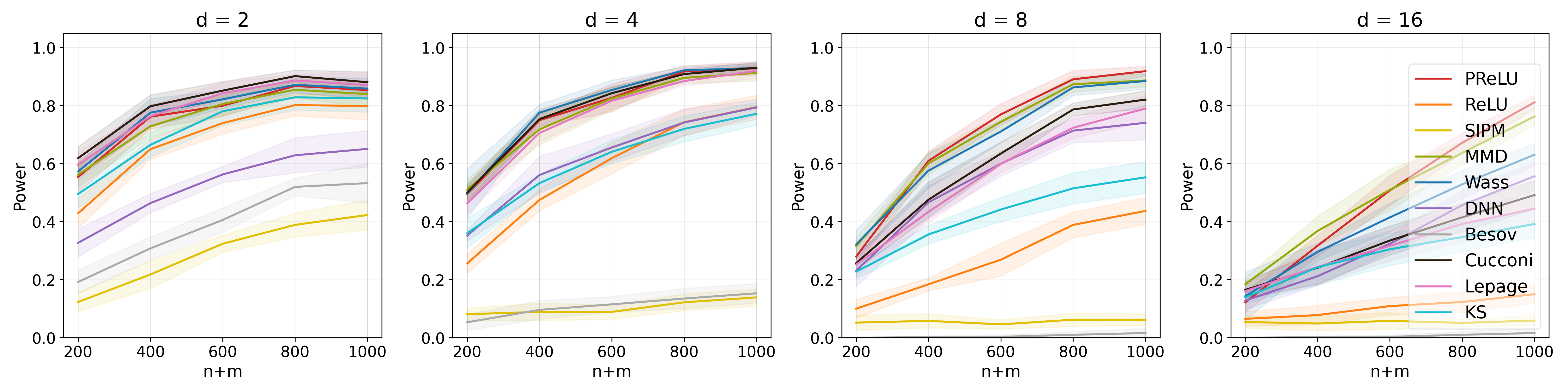}
        \caption{\texttt{S2} (heterogeneous scaling): $\mathrm{P} \sim \mathcal{N}(\mathbf{0}_d, \mathrm{I}_d), \, \mathrm{Q} \sim \mathcal{N}\big(\mathbf{0}_d, \operatorname{diag}(\gamma_1,\dots,\gamma_d)\big), 
        $}
        \label{fig:gamma}
    \end{subfigure}
    \vspace{0.5em}
    \begin{subfigure}[b]{\textwidth}
        \centering
        \includegraphics[width=\textwidth]{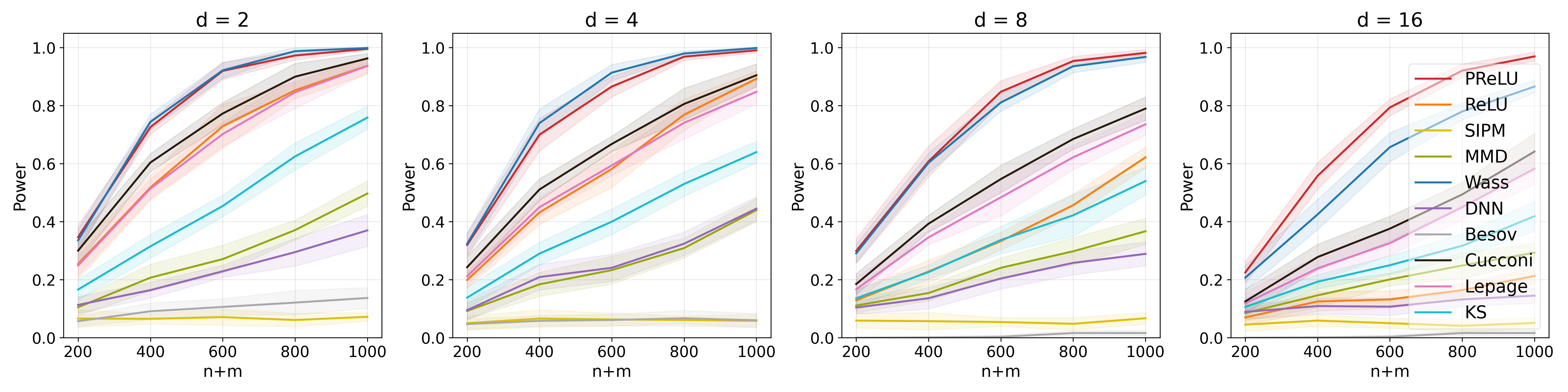}
        \caption{\texttt{S3} (contrasting tail behaviors): $\mathrm{P} \sim \mathcal{N}(\mathbf{0}_d, \mathrm{I}_d), \, \mathrm{Q} \sim t_v \otimes \mathcal{N}(\mathbf{0}_{d-1}, \mathrm{I}_{d-1})$}
        \label{fig:tcoord}
    \end{subfigure}
    \caption{
    Power vs.\ $n+m$ under scale-based change for various $d$.
    }
    \label{fig_result2}
\end{figure}

\subsubsection{Results}
\label{sec_4_3_2}
For the scale-based change settings,
PReLU-TST consistently shows near-best performance across all settings and all dimensions $d$, as shown in Fig.~\ref{fig_result2}.
WASS-TST, Cucconi-TST and Lepage-TST also work reasonably well but their performance quickly drops as the dimension increases, especially in the \texttt{S1} setting.
On the other hand, while PReLU-TST also shows some decline as $d$ increases in the \texttt{S2} setting, it still performs strongly compared to other methods when the sample size is large enough.
Recall that in the location-shift settings, ReLU-TST and SIPM-TST performed similarly to PReLU-TST.
However, in the scale-shift settings, their powers drop clearly, showing that they are not  effective in these types of problem.
Across the various scale-shift scenarios, Besov-TST consistently exhibit the lowest powers.

\subsection{Simulation 3: XOR-type structures}
\label{sec_4_4}

\subsubsection{Settings}
\label{sec_4_4_1}

For the XOR-type experiments, we consider two-dimensional correlated structures while keeping the remaining coordinates independent Gaussians.
First, for $\texttt{XOR1}$, $\mathrm{P}$ and $\mathrm{Q}$ are given as
$$
\begin{aligned}
\mathrm{P} &= \mathcal{N}\!\left(\mathbf{0}_2, 
\begin{bmatrix}
1 & \rho \\ \rho & 1
\end{bmatrix}\right) \;\otimes\; \mathcal{N}(\mathbf{0}_{d-2}, \mathrm{I}_{d-2}), \\
\mathrm{Q} &= \mathcal{N}\!\left(\mathbf{0}_2, 
\begin{bmatrix}
1 & -\rho \\ -\rho & 1
\end{bmatrix}\right) \;\otimes\; \mathcal{N}(\mathbf{0}_{d-2}, \mathrm{I}_{d-2}), 
\quad \rho = 0.2.
\end{aligned}
$$
For $\texttt{XOR2}$, the first two coordinates of $\mathrm{P}$ is a mixture of two Gaussian clusters, while $\mathrm{Q}$ flips the clustering patterns across the axes,
$$
\begin{aligned}
\mathrm{P} &= \left( \tfrac{1}{2}\,\mathcal{N}([\zeta,\zeta]^\top, \mathrm{I}_2) 
+ \tfrac{1}{2}\,\mathcal{N}([-\zeta,-\zeta]^\top, \mathrm{I}_2) \right) \otimes\; \mathcal{N}(\mathbf{0}_{d-2}, \mathrm{I}_{d-2}), \\
\mathrm{Q} &= \left( \tfrac{1}{2}\,\mathcal{N}([\zeta,-\zeta]^\top, \mathrm{I}_2) 
+ \tfrac{1}{2}\,\mathcal{N}([-\zeta,\zeta]^\top, \mathrm{I}_2) \right) \otimes\; \mathcal{N}(\mathbf{0}_{d-2}, \mathrm{I}_{d-2}), 
\quad \zeta = 0.5.
\end{aligned}
$$
Thus, XOR1 differs only by the signs of covariances, whereas XOR2 introduces a cluster-based XOR structure.

\subsubsection{Results}
\label{sec_4_4_2}

The most noteworthy results arise in the XOR-type structure settings. As $\mathscr{F}_{\textsf{prelu}}$ is capable of solving the XOR problem \citep{pinto2024prelu}, PReLU-TST exhibits a particular advantage for XOR-type problems, as clearly demonstrated in Fig.~\ref{fig_result3}. 
Although PReLU-TST performs similarly to Wass-TST and Cucconi-TST when $d = 2$, its superiority becomes more pronounced as the dimension increases and the testing task becomes more challenging due to the increasing difficulty of detecting differences between $\mathrm{P}$ and $\mathrm{Q}$. 
In addition, SIPM-TST, Besov-TST, and KS-TST perform notably poorly in these XOR-type settings, highlighting its weakness in handling such complex distributional structures.

\begin{figure}[t]   
    \centering    
    \begin{subfigure}[b]{\textwidth}
        \centering
        \includegraphics[width=\textwidth]{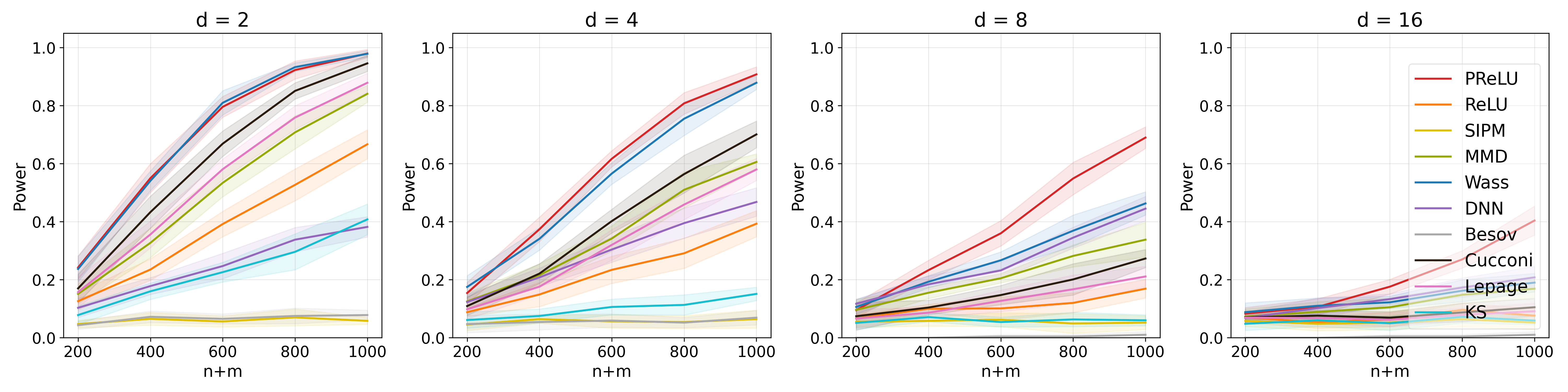}
        \caption{\texttt{XOR1}: $\mathrm{P} \sim \mathcal{N}\big(\mathbf{0}_2,
        \bigl[\begin{smallmatrix}
        1 & \rho \\
        \rho & 1
        \end{smallmatrix}\bigr]\big) \otimes \mathcal{N}(\mathbf{0}_{d-2}, \mathrm{I}_{d-2}), \,
        \mathrm{Q} \sim \mathcal{N}\big(\mathbf{0}_2,
        \bigl[\begin{smallmatrix}
        1 & -\rho \\
        -\rho & 1
        \end{smallmatrix}\bigr]\big) \otimes \mathcal{N}(\mathbf{0}_{d-2}, \mathrm{I}_{d-2})$}
        \label{fig:xor1}
    \end{subfigure}
    \vspace{0.5em}
    \begin{subfigure}[b]{\textwidth}
        \centering
        \includegraphics[width=\textwidth]{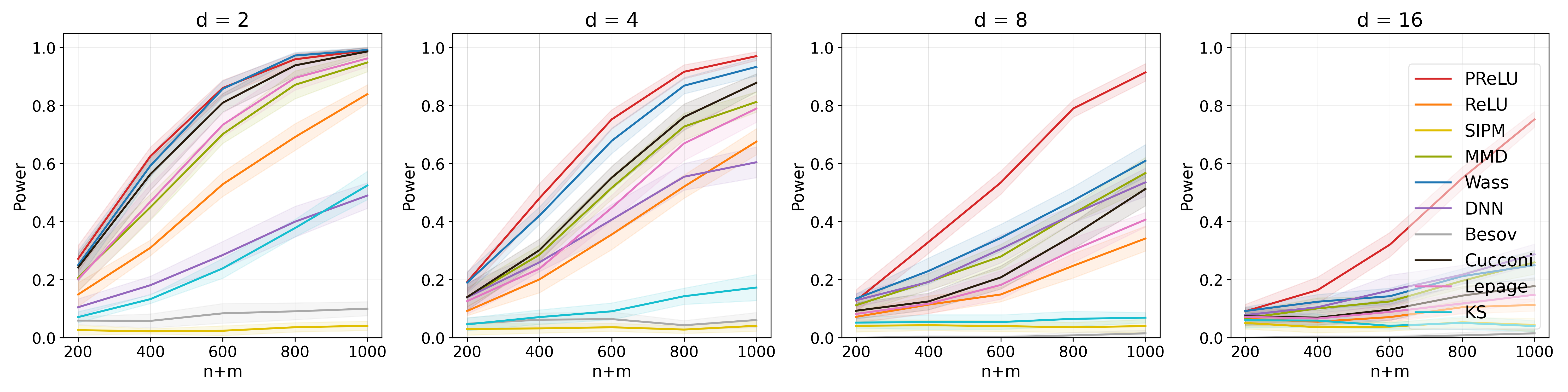}
        \caption{
        \texttt{XOR2}: 
        $ \quad
        \begin{aligned}
        \mathrm{P} &\sim \left( \frac{1}{2}\mathcal{N}\left([\zeta, \zeta]^\top, \mathrm{I}_2\right) + 
        \frac{1}{2}\mathcal{N}\left([-\zeta, -\zeta]^\top, \mathrm{I}_2\right) \right) 
        \otimes\, \mathcal{N}(\mathbf{0}_{d-2}, \mathrm{I}_{d-2}), \\
        \mathrm{Q} &\sim \left( \frac{1}{2}\mathcal{N}\left([\zeta, -\zeta]^\top, \mathrm{I}_2\right) + 
        \frac{1}{2}\mathcal{N}\left([-\zeta, \zeta]^\top, \mathrm{I}_2\right) \right) 
        \otimes\, \mathcal{N}(\mathbf{0}_{d-2}, \mathrm{I}_{d-2}).
        \end{aligned}
        $
        }
        \label{fig:xor2}
    \end{subfigure}
    \caption{
    Power vs.\ $n+m$ under XOR-type structures for various $d$.
    }
    \label{fig_result3}
\end{figure}

In summary, we evaluated PReLU-TST across a variety of scenarios, including location shifts, scale changes, and XOR-type structures, that become more challenging as the dimension $d$ increases. 
The proposed method consistently demonstrates superior performance across all experiments.
Notably, PReLU-TST performs exceptionally well on the XOR problem, a setting that has traditionally been used to highlight the limitations of single-layer networks. This problem had required add more hidden layer to solve effectively, but PReLU-TST overcomes this challenge without the added complexity.


\subsection{Simulation 4: Additional simulations}
\label{sec_4_5}

We further evaluate whether the proposed test maintains strong performance in nonstandard settings, that are not covered by our theoretical analysis but may arise in contemporary applications.
More specifically, we consider test problems under small sample sizes, high-dimensional covariates, and heavy-tailed distributions.
For readability, all figures and tables are deferred to Appendix \ref{app_add_exp}, and here we discuss the key findings.

\subsubsection{Small-sample regimes}

Instead of the sample sizes $n+m \in \{200, 400, 600, 800, 1000\}$ considered in the previous sections, we construct additional simulation settings with smaller sample sizes $n+m \in \{10, 20, 30, 40, 50\}$
and examine whether our proposed method also performs well in this small-sample regime.
When reducing the sample size without modifying the original simulation settings, all tests exhibited very low power, making it difficult to meaningfully compare the methods.
For this reason, we have adjusted the simulation parameters accordingly.
In particular, we consider the following simulation settings: \texttt{L1} with $\zeta=1.0$,
\texttt{L2} with $\zeta=2.0$,
\texttt{L3} with $\alpha=2.0$,
\texttt{S1} with $\sigma^2=6.0$,
\texttt{S3} with $v=1.0$ and
\texttt{XOR2} with $\zeta=1.0$.
We exclude \texttt{S2} and \texttt{XOR1}, as these settings are not easily adjustable.

The corresponding results are presented in Figs.~\ref{fig_small_result1} and \ref{fig_small_result2} in Appendix~\ref{app_add_exp_small_sample}. Overall, the proposed method maintains competitive power across all settings and dimensions even in small-sample regimes.
In particular, PReLU-TST, ReLU-TST, and SIPM-TST perform comparably best under location-based shift alternatives, while PReLU-TST achieves the best performance under scale-based alternatives, followed by WASS-TST.
In the XOR-type structure, MMD-TST, WASS-TST, Cucconi-TST, and Lepage-TST outperform PReLU-TST when $d=2$.
However, in $d=8$ and $d=16$, PReLU-TST achieves the best performance among all tests.

\subsubsection{High-dimensional covariates}

In addition, we introduce additional simulation settings with higher-dimensional covariates $d \in \{258, 512\}$ with the total sample size fixed at $n=m = 100$.
The purpose of these experiments is to examine whether PReLU-TST also performs well in high-dimensional settings where $d>n+m$.
Specifically, we consider the following parameter settings: $\zeta=1.0$ for \texttt{L1}, $\zeta=1.5$ for \texttt{L2}, $\alpha=1.4$ for \texttt{L3}, $\zeta=6.0$ for \texttt{S1}, $v=1.5$ for \texttt{S2}, and $\zeta=1.5$ for \texttt{XOR2}.

The corresponding results can be found in Table~\ref{tab_high_dim} in Appendix~\ref{app_add_exp_high_dim}.
PReLU-TST achieves the best performance under location-based shift settings, while SIPM-TST and MMD-TST attain comparable power.
Under scale-based shift settings, PReLU-TST shows superior performance when $d=256$.
However, its performance decreases substantially when $d=512$, making it inferior to rank-based methods and WASS-TST under \texttt{S1}, while ranking as the second-best method after WASS-TST under \texttt{S3}.
In the XOR-type structure, PReLU-TST and WASS-TST achieve the best performance overall, with PReLU-TST performing better at $d=256$ and WASS-TST outperforming it at $d=512$.
Interestingly, ReLU-TST no longer performs comparably to PReLU-TST across all settings.

\subsubsection{Heavy-tailed distributions}

Finally, we evaluate performance under heavy-tailed null and alternative distributions across total sample sizes $n + m \in \{200, 400, 600, 800, 1000\}$ and dimensions $d \in \{2, 4, 8, 16\}$.
Note that it is hard to extend our theoretical analysis to heavy-tailed distributions.
We consider the settings \texttt{L1-H}, \texttt{L2-H}, \texttt{S1-H}, and \texttt{XOR2-H}, which are modifications of \texttt{L1}, \texttt{L2}, \texttt{S1}, and \texttt{XOR2}, respectively, where the Gaussian distribution is replaced by a Student’s $t$-distribution with $2$ degrees of freedom.
See Appendix~\ref{app_heavy_tailed} for the detailed simulation settings.

 Fig.~\ref{fig_tdist_result} in Appendix~\ref{app_heavy_tailed} presents the corresponding results. 
The empirical results show that PReLU-TST performs relatively poorly in such settings. 
In these cases, MMD-TST as well as rank-based tests (in particular, Lepage-TST and Cucconi-TST) generally achieve better performance.
These results, however, are not very surprising since we assume that data are bounded in our theoretical studies. See Section 3.1 for this assumption.  Moreover, PReLU-TST uses piecewise linear discriminators which are unbounded and thus we could expect that PReLU-TST is not robust to outlier. In contrast, discriminators of MMD-TST are bounded.
The results suggest that, in practice, it may be beneficial to first assess whether the data follow heavy-tailed behavior using a separate test procedure (e.g., the Clauset--Shalizi--Newman test \citep{clauset2009power}), and if not, then apply our method accordingly.

\subsection{Real data analysis}
\label{sec_4_6}


By analyzing multiple real world datasets, we show that
PReLU-TST also works well for complicated alternatives.

\subsubsection{Detecting synthetically generated data}

We consider the problem of the detection of synthetically generated data. 
Specifically, we are given two data sets, the original data and synthetically generated data
by CTAB-GAN \citep{zhao2021ctab}, which is a conditional table GAN architecture that can effectively model diverse data types, including mixtures of continuous and categorical variables, and our goal in this subsection is to
test whether the generative distributions of these two data sets are the same. 

For experiments, we generate a total of four tabular benchmark datasets (\texttt{adult}, \texttt{covertype}, \texttt{loan}, \texttt{thyroid}) from pre-trained CTAB-GAN \citep{zhao2021ctab} models, with the exception of \texttt{adult}\footnote{The real and synthetic \texttt{adult} datasets from CTAB-GAN can be accessed from the following links: \url{https://github.com/Team-TUD/CTAB-GAN/tree/main/Real_Datasets} and \url{https://github.com/Team-TUD/CTAB-GAN/tree/main/Fake_Datasets/Adult}, respectively, from the official GitHub of \citet{zhao2021ctab}. The \texttt{loan} dataset is available at 
\url{https://www.kaggle.com/datasets/teertha/personal-loan-modeling}, 
and the remaining datasets can be downloaded from the 
UCI Machine Learning Repository.}. Table \ref{tab_descriptions} shows the description of the datasets used for the analysis.

\begin{table}[h]
\vspace{-3mm}
\centering
\caption{Description of real and synthetic benchmark datasets.}
\label{tab_descriptions}
\renewcommand{\arraystretch}{1.3}
\resizebox{0.75\textwidth}{!}{
\begin{tabular}{c||cccc}
    \toprule
    Dataset & The number of Real/Synthetic & Type of Target & The number of Inputs \\
    \hline\hline
    \texttt{adult} & 48,842/48,842 & Binary & 13 \\
    \texttt{covertype} & 581,012/581,012 & Multi-class & 54 \\
    \texttt{loan} & 5k/5k & Binary & 13 \\
    \texttt{thyroid} & 7,200/7,200 & Multi-class & 21 \\
    \bottomrule
\end{tabular}
}
\end{table}

\begin{table}[h]
\vspace{-3mm}
\centering
\caption{
Power vs.\ $n+m$ for four benchmark datasets. For each row, the best result is highlighted in bold and the second best result is underlined.
}
\label{tab_power_tabular_dataset}
\renewcommand{\arraystretch}{1.3}
\resizebox{\textwidth}{!}{
\begin{tabular}{c|c||c|cccccccc}
\toprule
Dataset & $n+m$ & PReLU & ReLU & SIPM & MMD & Wass & DNN & Cucconi & Lepage & KS \\
\hline\hline
\multirow{4}{*}{\texttt{adult}} 
 & 50 & \underline{0.341$\pm$0.038} & \textbf{0.342$\pm$0.049} & 0.239$\pm$0.035 & 0.169$\pm$0.038 & 0.280$\pm$0.053 & 0.252$\pm$0.048 & 0.151$\pm$0.063 & 0.140$\pm$0.048 & 0.173$\pm$0.050 \\
 & 100 & \underline{0.580$\pm$0.022} & \textbf{0.602$\pm$0.023} & 0.426$\pm$0.045 & 0.326$\pm$0.061 & 0.563$\pm$0.041 & 0.446$\pm$0.064 & 0.340$\pm$0.040 & 0.338$\pm$0.043 & 0.382$\pm$0.034 \\
 & 200 & \underline{0.926$\pm$0.027} & \textbf{0.928$\pm$0.024} & 0.703$\pm$0.028 & 0.663$\pm$0.037 & 0.899$\pm$0.023 & 0.756$\pm$0.039 & 0.730$\pm$0.037 & 0.717$\pm$0.046 & 0.704$\pm$0.046 \\
 & 400 & \underline{0.998$\pm$0.006} & \textbf{0.999$\pm$0.003} & 0.951$\pm$0.018 & 0.965$\pm$0.019 & 0.998$\pm$0.004 & 0.960$\pm$0.011 & 0.985$\pm$0.014 & 0.985$\pm$0.013 & 0.977$\pm$0.018 \\
\hline
\multirow{4}{*}{\texttt{covertype}}
 & 20 & \textbf{0.290$\pm$0.044} & \underline{0.288$\pm$0.041} & 0.134$\pm$0.033 & 0.222$\pm$0.030 & 0.277$\pm$0.031 & 0.129$\pm$0.025 & 0.137$\pm$0.040 & 0.140$\pm$0.035 & 0.124$\pm$0.027 \\
 & 40 & \textbf{0.592$\pm$0.060} & \underline{0.577$\pm$0.052} & 0.302$\pm$0.060 & 0.461$\pm$0.051 & 0.562$\pm$0.066 & 0.282$\pm$0.056 & 0.337$\pm$0.061 & 0.358$\pm$0.074 & 0.388$\pm$0.044 \\
 & 60 & \textbf{0.805$\pm$0.067} & \underline{0.803$\pm$0.061} & 0.497$\pm$0.048 & 0.663$\pm$0.077 & 0.780$\pm$0.058 & 0.433$\pm$0.057 & 0.555$\pm$0.078 & 0.573$\pm$0.063 & 0.610$\pm$0.064 \\
 & 80 & \textbf{0.921$\pm$0.039} & \underline{0.918$\pm$0.036} & 0.640$\pm$0.049 & 0.835$\pm$0.048 & 0.908$\pm$0.033 & 0.582$\pm$0.036 & 0.723$\pm$0.037 & 0.751$\pm$0.034 & 0.772$\pm$0.032 \\
 \hline
\multirow{4}{*}{\texttt{loan}}
 & 40 & 0.549$\pm$0.045 & 0.562$\pm$0.043 & 0.538$\pm$0.030 & \textbf{0.765$\pm$0.048} & \underline{0.725$\pm$0.065} & 0.511$\pm$0.032 & 0.412$\pm$0.060 & 0.406$\pm$0.053 & 0.411$\pm$0.043 \\
 & 60 & 0.759$\pm$0.030 & 0.769$\pm$0.028 & 0.764$\pm$0.043 & \textbf{0.932$\pm$0.016} & \underline{0.907$\pm$0.023} & 0.705$\pm$0.046 & 0.619$\pm$0.050 & 0.604$\pm$0.059 & 0.598$\pm$0.048 \\
 & 80 & 0.896$\pm$0.032 & 0.897$\pm$0.031 & 0.872$\pm$0.028 & \textbf{0.992$\pm$0.007} & \underline{0.981$\pm$0.010} & 0.833$\pm$0.028 & 0.778$\pm$0.048 & 0.777$\pm$0.040 & 0.746$\pm$0.033 \\
 & 100 & 0.967$\pm$0.014 & 0.949$\pm$0.013 & 0.944$\pm$0.018 & \textbf{0.998$\pm$0.004} & \underline{0.993$\pm$0.006} & 0.893$\pm$0.017 & 0.893$\pm$0.034 & 0.875$\pm$0.030 & 0.854$\pm$0.025 \\
\hline
\multirow{4}{*}{\texttt{thyroid}}
 & 50 & \textbf{0.212$\pm$0.053} & \underline{0.206$\pm$0.051} & 0.167$\pm$0.049 & 0.189$\pm$0.033 & 0.177$\pm$0.039 & 0.201$\pm$0.043 & 0.129$\pm$0.031 & 0.138$\pm$0.027 & 0.128$\pm$0.021 \\
 & 100 & \underline{0.387$\pm$0.054} & \textbf{0.391$\pm$0.049} & 0.242$\pm$0.029 & 0.383$\pm$0.066 & 0.347$\pm$0.042 & 0.326$\pm$0.036 & 0.297$\pm$0.069 & 0.313$\pm$0.059 & 0.234$\pm$0.048 \\
 & 200 & \underline{0.738$\pm$0.039} & 0.732$\pm$0.034 & 0.443$\pm$0.040 & \textbf{0.801$\pm$0.053} & 0.636$\pm$0.042 & 0.613$\pm$0.040 & 0.672$\pm$0.042 & 0.686$\pm$0.048 & 0.479$\pm$0.038 \\
 & 400 & 0.980$\pm$0.013 & 0.974$\pm$0.010 & 0.727$\pm$0.035 & \textbf{0.999$\pm$0.003} & 0.957$\pm$0.018 & 0.872$\pm$0.039 & 0.984$\pm$0.007 & \underline{0.986$\pm$0.009} & 0.875$\pm$0.031 \\
\bottomrule
\end{tabular}}
\end{table}

The results of the comparative analysis of the generative data detection problem are presented in Table \ref{tab_power_tabular_dataset}. The results of Besov-TST are omitted from the experimental comparison, as it consistently underperformed across all settings. As the sample size increases, all methods show improved testing power. 
PReLU-TST achieves either the best performance or competitive results across datasets and sample sizes. 
Although its performance on the \texttt{loan} dataset is relatively weaker compared to the other methods 
when the sample size is small, it shows improvement as the sample size increases. 
MMD-TST and Wass-TST achieve performance comparable to or better than PReLU-TST on \texttt{loan}.
DNN-TST and SIPM-TST generally exhibit lower performance than PReLU-TST and ReLU-TST. However, compared to the rank-based tests, they show relatively better performance on \texttt{adult} and \texttt{loan}. In contrast, on \texttt{covertype} and \texttt{thyroid}, their performance is generally weaker, though they remain competitive at some sample sizes.
Meanwhile, rank-based tests tend to improve in power as the sample size increases, but overall, they underperform compared to PReLU-TST.


\subsubsection{The Yearbook dataset}

The Yearbook dataset contains 37,921 frontal-facing American high-school portraits originally collected from 1905 to 2013\footnote{The original Yearbook dataset can be downloaded from \url{https://www.dropbox.com/s/ubjjoo0b2wz4vgz/faces_aligned_small_mirrored_co_aligned_cropped_cleaned.tar.gz?dl=0} of the official website \url{https://shiry.eecs.berkeley.edu/yearbooks/}.}, representative examples of which are shown in Fig.~\ref{fig_yearbook}. 
It is known to exhibit substantial temporal distribution shifts due to changes in cultural norms, fashion, and photographic conventions over the decades.
For example, \citet{7915779} analyzed systematic increases in smile intensity, changes in hairstyle prevalence, and shifts in the use of glasses over time.
These findings indicate that facial feature distributions evolve gradually but meaningfully across historical periods.
This dataset is widely used as a benchmark for studying distributional shift in images \citep{pmlr-v139-lei21a, yao2022wild, xie2023evolving, zhang2024hypertime}.
We use 512-dimensional feature vectors extracted from the second-to-last layer of a ResNet-18 model pretrained on ImageNet\footnote{We use the processed version provided in the GitHub dataset repository: \url{https://github.com/MachineLearningBCAM/Datasets/tree/main}.}, as standardized image embeddings for our analysis.

In this study, we focus specifically on two historical periods, (i) $1955\!-\!1974$ (4,832 men and 4,555 women) and (ii) $1975\!-\!1994$ (5,415 men and 5,073 women), and aim to determine whether the feature distributions corresponding to these two periods are statistically distinguishable. To avoid trivial demographic confounding, we pool male and female portraits together rather than conducting gender-specific analyses. For each period, we randomly sample $n=m=\{20,30,40,50\}$ images without replacement and perform two-sample testing on the resulting high-dimensional feature vectors.
The objective of this experiment is to assess potential distributional shifts between the two multi-decade time windows using PReLU-TST and to compare its performance with baseline methods.

\begin{figure}[H]
    \centering
    \begin{subfigure}{0.48\linewidth}
        \centering
        \includegraphics[width=\linewidth]{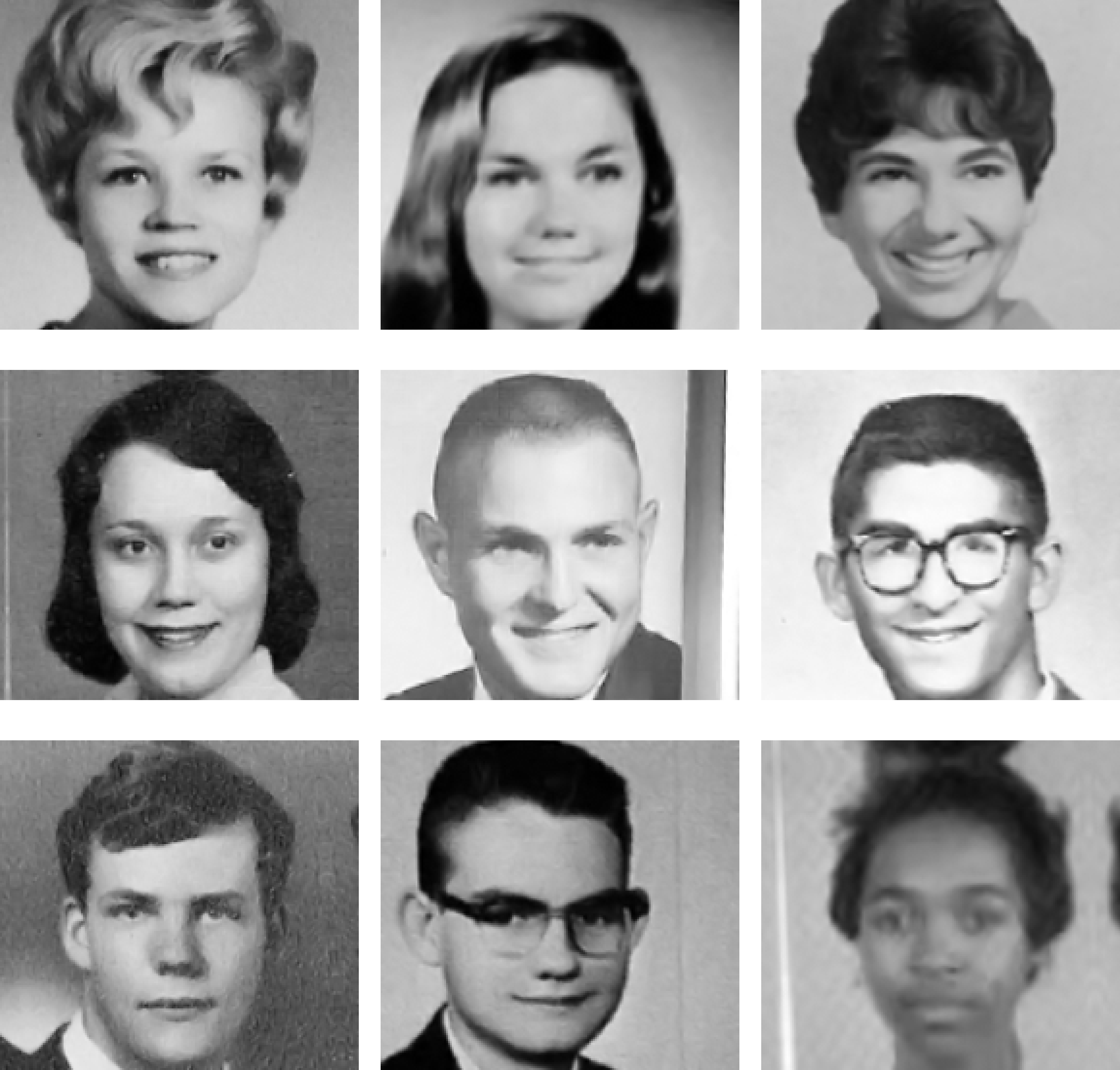}
        \caption{First period: $1955-1974$}
    \end{subfigure}
    \hfill
    \begin{subfigure}{0.48\linewidth}
        \centering
        \includegraphics[width=\linewidth]{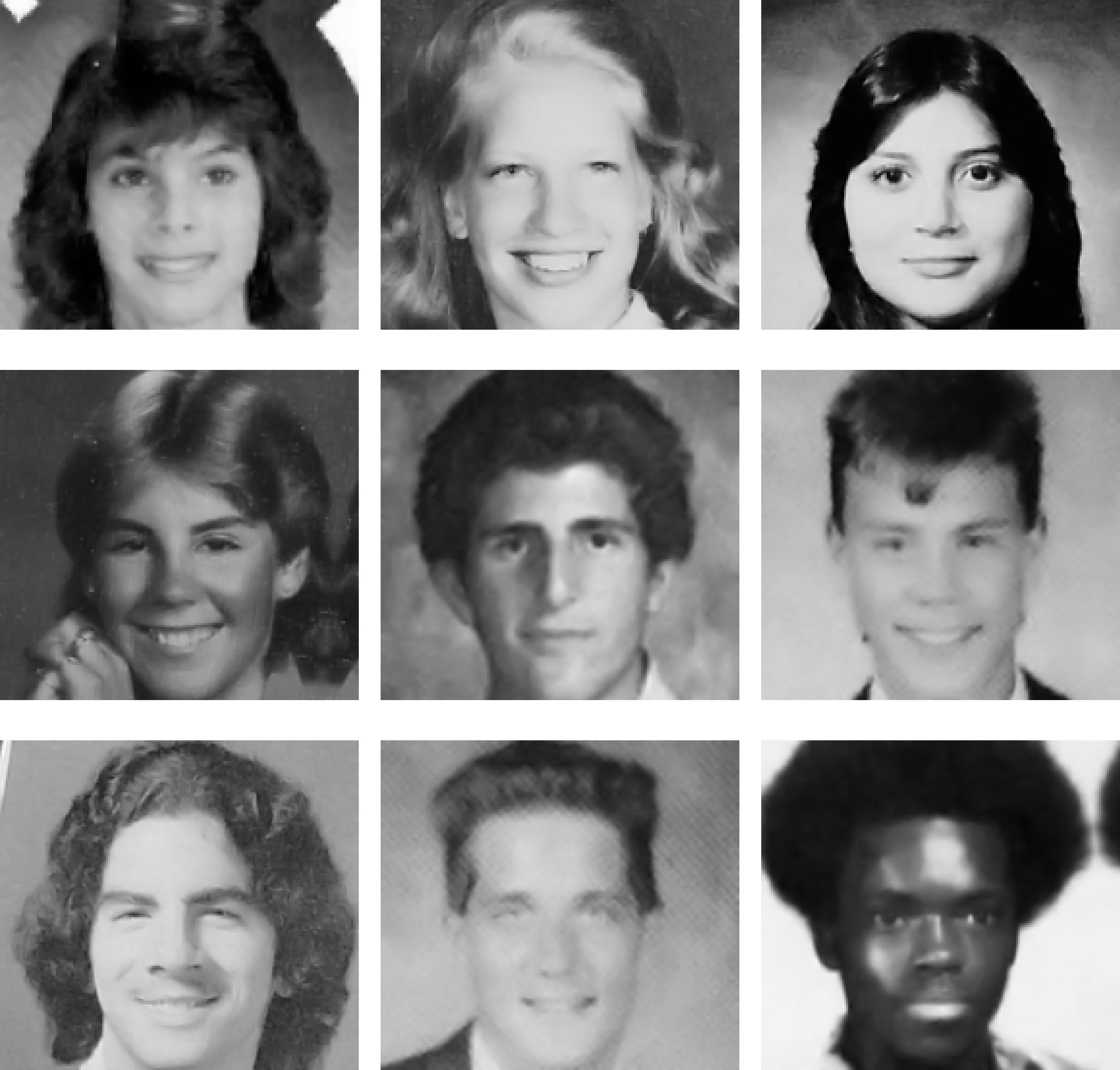}
        \caption{Second period: $1975-1994$}
    \end{subfigure}

    \caption{Portrait examples from two historical periods (i) $1955\!-\!1974$ and (ii) $1975\!-\!1994$ of the original Yearbook dataset.}
    \label{fig_yearbook}
\end{figure}

\begin{table}[h]
\centering
\caption{
Power vs.\ $n+m$ for the Yearbook dataset.
The objective is to assess distributional shifts between two periods $1955\!-\!1974$ and $1975\!-\!1994$.
For each row, the best result is highlighted in bold and the second best result is underlined.
}
\label{tab_yearbook_results}
\renewcommand{\arraystretch}{1.3}
\resizebox{\textwidth}{!}{
\begin{tabular}{c||c|cccccccc}
\toprule
n+m & PReLU & ReLU & SIPM & MMD & Wass & DNN & Cucconi & Lepage & KS \\
\hline \hline
40 & \textbf{0.509$\pm$0.062} & 0.497$\pm$0.060 & 0.189$\pm$0.041 & \underline{0.507$\pm$0.032} & 0.452$\pm$0.064 & 0.000$\pm$0.000 & 0.151$\pm$0.030 & 0.129$\pm$0.030 & 0.133$\pm$0.031 \\
60 & \textbf{0.781$\pm$0.044} & \underline{0.776$\pm$0.047} & 0.308$\pm$0.034 & 0.774$\pm$0.046 & 0.704$\pm$0.038 & 0.159$\pm$0.049 & 0.233$\pm$0.021 & 0.231$\pm$0.022 & 0.216$\pm$0.038 \\
80 & \underline{0.935$\pm$0.014} & \textbf{0.940$\pm$0.021} & 0.455$\pm$0.038 & 0.924$\pm$0.032 & 0.875$\pm$0.050 & 0.388$\pm$0.058 & 0.389$\pm$0.046 & 0.380$\pm$0.048 & 0.337$\pm$0.035 \\
100 & \textbf{0.983$\pm$0.013} & 0.978$\pm$0.012 & 0.599$\pm$0.036 & \underline{0.983$\pm$0.017} & 0.955$\pm$0.024 & 0.607$\pm$0.030 & 0.492$\pm$0.038 & 0.484$\pm$0.040 & 0.431$\pm$0.042 \\
\bottomrule
\end{tabular}
}
\end{table}

\begin{table}[h]
\centering
\caption{Type-I error vs. $n+m$ for the Yearbook dataset across two periods.}
\label{tab_yearbook_H0_results}
\renewcommand{\arraystretch}{1.3}
\resizebox{\textwidth}{!}{
\begin{tabular}{c|c||c|cccccccc}
\toprule
Period & $n+m$ & PReLU & ReLU & SIPM & MMD & Wass & DNN & Cucconi & Lepage & KS \\
\hline \hline
\multirow{4}{*}{$1955\!-\!1974$}
& 40  & 0.066$\pm$0.012 & 0.061$\pm$0.016 & 0.051$\pm$0.018 & 0.054$\pm$0.017 & 0.053$\pm$0.015 & 0.000$\pm$0.000 & 0.047$\pm$0.019 & 0.058$\pm$0.020 & 0.047$\pm$0.021 \\
& 60  & 0.034$\pm$0.022 & 0.043$\pm$0.028 & 0.042$\pm$0.017 & 0.036$\pm$0.024 & 0.047$\pm$0.027 & 0.053$\pm$0.028 & 0.048$\pm$0.018 & 0.049$\pm$0.014 & 0.050$\pm$0.015 \\
& 80  & 0.054$\pm$0.025 & 0.053$\pm$0.024 & 0.047$\pm$0.023 & 0.050$\pm$0.021 & 0.045$\pm$0.012 & 0.049$\pm$0.020 & 0.048$\pm$0.018 & 0.055$\pm$0.021 & 0.054$\pm$0.017 \\
& 100 & 0.049$\pm$0.010 & 0.046$\pm$0.015 & 0.048$\pm$0.021 & 0.049$\pm$0.019 & 0.049$\pm$0.024 & 0.067$\pm$0.019 & 0.068$\pm$0.018 & 0.069$\pm$0.020 & 0.050$\pm$0.030 \\
\hline
\multirow{4}{*}{$1975\!-\!1994$}
& 40  & 0.061$\pm$0.028 & 0.068$\pm$0.032 & 0.043$\pm$0.048 & 0.069$\pm$0.042 & 0.061$\pm$0.029 & 0.000$\pm$0.000 & 0.051$\pm$0.021 & 0.053$\pm$0.025 & 0.058$\pm$0.019 \\
& 60  & 0.044$\pm$0.032 & 0.055$\pm$0.027 & 0.060$\pm$0.019 & 0.051$\pm$0.024 & 0.058$\pm$0.034 & 0.078$\pm$0.013 & 0.047$\pm$0.035 & 0.061$\pm$0.025 & 0.053$\pm$0.021 \\
& 80  & 0.057$\pm$0.016 & 0.058$\pm$0.020 & 0.054$\pm$0.039 & 0.053$\pm$0.017 & 0.063$\pm$0.022 & 0.036$\pm$0.010 & 0.058$\pm$0.016 & 0.056$\pm$0.028 & 0.060$\pm$0.030 \\
& 100 & 0.053$\pm$0.039 & 0.057$\pm$0.036 & 0.056$\pm$0.018 & 0.061$\pm$0.052 & 0.060$\pm$0.028 & 0.042$\pm$0.012 & 0.054$\pm$0.025 & 0.052$\pm$0.026 & 0.065$\pm$0.033 \\
\bottomrule
\end{tabular}
}
\end{table}

In Table \ref{tab_yearbook_results}, we observe the testing power across different sample sizes for the Yearbook dataset\footnote{Besov-TST are omitted due to their consistent underperformance.}. As the total sample size $n+m$ increases, all methods exhibit improved power. In particular, PReLU-TST consistently achieves the highest power across all sample sizes, indicating that it is well suited for detecting distributional shifts in the high-dimensional Yearbook feature space. MMD-TST, ReLU-TST, and Wass-TST also perform competitively, though they remain inferior to PReLU-TST. In contrast, SIPM-TST, DNN-TST and the rank-based tests show significantly weaker performance, especially under small-sample regimes.
The strong performance of PReLU-TST can be attributed to its ability to automatically adjust directional sensitivity through the learnable slope parameter, allowing it to capture subtle distributional differences in the 512-dimensional ResNet-18 embeddings effectively. Although DNN-TST has high model capacity, its performance is likely hindered by instability in the training variance, resulting in relatively inefficient power.

In addition, to verify that the proposed test does not spuriously reject the null hypothesis in homogeneous settings, we conduct a negative control (null) experiment by randomly sampling images within the same time periods into two groups as shown in Table \ref{tab_yearbook_H0_results}.
Since both groups are drawn from the same distribution, the null hypothesis holds by construction.
The empirical rejection rates of PReLU-TST and other TSTs are consistent with the nominal significance level $(\alpha \!\approx\! 0.05)$, confirming proper Type-I error control in high-dimensional settings.


\section{Conclusions}
\label{sec_5}
In this paper, we introduced and studied a new nonparametric two-sample test called PReLU-TST based on a new parametric IPM, called PReLU-IPM.
The key features of PReLU-TST are twofolds:
(i) it offers theoretical guarantees for nonparametric alternatives even if the discriminator class is parametric,
 including consistency  under fixed alternatives, as well as good detection boundaries under local $\beta$-H\"{o}lder alternative in particular when  $\beta > \frac{d+3}{2}$, and
(ii)  it consistently outperforms existing IPM-based methods across a wide range of parametric alternatives
while it maintains powers for more complex alternatives at reasonable levels, provided that few outliers exist.
That is, PReLU-TST is good at detecting structured differences (e.g. location, scale and XOR alternatives), 
but it still performs reasonably well for detecting complicated differences (e.g. synthetic data detection and Yearbook dataset) provided the sample size is not small or the differences are significant. However, PReLU-TST is not recommended when
outliers exist. Thus, we suggest to use PReLU-TST when
the unknown alternative is expected not to be very complex and  outliers seldom exist.

It opens up several avenues for future work. One potential direction is to investigate extensions of PReLU-TST to conditional two-sample test, where covariate information needs to be adjusted. 
For example, one might combine PReLU-TST with other test procedures that are robust to outliers, as PReLU-TST does not perform optimally under heavy-tailed distributions.
Another is to explore  new discriminator classes which compromise advantages of parametric and nonparametric discriminator classes.
An example would a semi-parametric discriminator class where the activation function in PReLU-IPM is replaced
by nonparametric functions or a  deep neural network having special architectures (e.g. functional ANOVA composition).
Finally, connections to adversarial learning and generative models may be further exploited to design even more expressive test statistics.

\section{Disclosure statement}
\label{disclosure-statement}

No potential conflict of interest was reported by the author(s).

\section{Data availability statement}
\label{data-availability-statement}

The data that support the findings of this study are openly available at \url{https://github.com/Team-TUD/CTAB-GAN} (Adult dataset) and \url{https://shiry.eecs.berkeley.edu/yearbooks/} (Yearbook dataset).

\section{Acknowledgments}

This work was partly supported by the National Research Foundation of Korea(NRF) grant funded by the Korea government(MSIT) (RS-2025-00556079), Korea Institute of Marine Science \& Technology Promotion(KIMST) funded by the Ministry of Oceans and Fisheries(Grant RS-2024-00430547) and Next-generation Intelligence semiconductor R\&D Program through the National Research Foundation of Korea(NRF) funded by the Korea government(MSIT)(RS-2024-00431718).

\clearpage

\appendix

\section{Proofs}

\subsection{Proof of Proposition \ref{prop_discriminative}} \label{pf_prop1_and_2}


We begin with the following lemma, which plays a key role in the proof of Proposition \ref{prop_discriminative}.

\medskip

\begin{lemma} \label{lem_discr_lrelu}
    For two random vectors $\mathbf{X} \sim \mathrm{P}$ and $\mathbf{Y} \sim \mathrm{Q}$, and all $\ell \neq 1$,
    if $D_{\mathscr{F}_{\ell}}(\mathrm{P}, \mathrm{Q}) = 0$, then 
    \begin{align}
        \sup_{\bm{\theta} \in \mathbb{S}^{d-1}} \sup_{t \in \mathbb{R}} \left| \mathbb{P} \left( \bm{\theta}^{\top} \mathbf{X} \leq t \right) - \mathbb{P} \left( \bm{\theta}^{\top} \mathbf{Y} \leq t \right) \right| = 0.
    \end{align}
\end{lemma}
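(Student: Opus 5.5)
Fix $\bm{\theta}\in\mathbb{S}^{d-1}$ and write $U=\bm{\theta}^\top\mathbf{X}$, $V=\bm{\theta}^\top\mathbf{Y}$. Both are supported in $[-1,1]$, since $\mathrm{P},\mathrm{Q}$ live on $\mathbb{B}^d$. The plan is to show that $U$ and $V$ have the same distribution function. The hypothesis $D_{\mathscr{F}_{\ell}}(\mathrm{P},\mathrm{Q})=0$ says that $\mathbb{E}\,\phi_\ell(\bm{\theta}^\top\mathbf{X}+\mu)=\mathbb{E}\,\phi_\ell(\bm{\theta}^\top\mathbf{Y}+\mu)$ for every $\mu\in[-1,1]$ and every $\bm{\theta}\in\mathbb{S}^{d-1}$, hence also with $\bm{\theta}$ replaced by $-\bm{\theta}$.

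\textbf{Step 1: extract ReLU moments.} Use $\phi_\ell(x)=x_+ +\ell\, x_-$, together with $x_-=x-x_+$. This gives
$$\phi_\ell(x)=(1-\ell)\,x_+ +\ell\, x .$$
Apply this at $x=\pm(u+\mu)$ with $u$ running over the values of $U$ or $V$. Taking the difference of the two identities (for $\bm{\theta}$ and $-\bm{\theta}$, with shift $\mu$ and $-\mu$ respectively) uses $x_+-(-x)_+=x$. Combining this with the original identity, and using $1-\ell\neq 0$, I expect to isolate both the linear moment $\mathbb{E}U-\mathbb{E}V=0$ and
$$g(\mu):=\mathbb{E}(U+\mu)_+ - \mathbb{E}(V+\mu)_+ = 0 \quad\text{for all } \mu\in[-1,1].$$
A more concrete route is the following. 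For $\mu=1$ we have $U+1\ge 0$ a.s., so $\phi_\ell(U+1)=U+1$, and the hypothesis gives $\mathbb{E}U=\mathbb{E}V$ immediately. Substituting this into $\mathbb{E}\phi_\ell(U+\mu)=(1-\ell)\mathbb{E}(U+\mu)_+ +\ell(\mathbb{E}U+\mu)$ and dividing by $1-\ell$ yields $g\equiv 0$ on $[-1,1]$. This is the only place where $\ell\neq 1$ is used, and it is exactly why the linear case fails.

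\textbf{Step 2: differentiate.} For $|\mu|\le 1$ write $h_U(\mu)=\mathbb{E}(U+\mu)_+=\int_{-\mu}^{\infty}\mathbb{P}(U>s)\,ds$, by the layer-cake formula. This function is convex, and its right derivative is $\mathbb{P}(U>-\mu)=1-\mathbb{P}(U\le -\mu)$. Since $h_U=h_V$ on $[-1,1]$, their right derivatives agree on $[-1,1)$. Therefore $\mathbb{P}(U\le t)=\mathbb{P}(V\le t)$ for all $t\in(-1,1]$.

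\textbf{Step 3: the remaining range of $t$.} For $t>1$ both probabilities equal $1$, and for $t<-1$ both equal $0$. The remaining endpoint case $t=-1$ is handled by the left derivative at $\mu=1$, or equivalently by applying the same argument to $-\bm{\theta}$, which also gives equality of $\mathbb{P}(U\ge s)$. Taking the supremum over $t$ and $\bm{\theta}$ then gives the claim.

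The main obstacle I expect is the bookkeeping in Step 1: making sure the bias range $[-1,1]$ is exactly wide enough both to reach the linear regime (at $\mu=1$) and to cover all thresholds $t\in[-1,1]$, and handling the one-sided derivatives at the endpoints carefully.
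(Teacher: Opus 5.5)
Your approach is correct and differs from the paper's, but Step~2 states the wrong one-sided derivative and needs a small repair. Step~1 is fine in its concrete form: $\mu=1$ gives $\mathbb{E}U=\mathbb{E}V$, and $\phi_\ell=(1-\ell)(\cdot)_+ +\ell\,\mathrm{id}$ then gives $h_U=h_V$ on $[-1,1]$ for every $\ell\neq 1$. Discard the $\pm\bm{\theta}$ sketch. Since $\phi_\ell(x)-\phi_\ell(-x)=(1+\ell)x$, it only yields $(1+\ell)(\mathbb{E}U-\mathbb{E}V)=0$, which is vacuous at $\ell=-1$, the default $\ell_{\min}$.

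\textbf{The error in Step 2.} The right derivative of $h_U(\mu)=\mathbb{E}(U+\mu)_+$ is $\mathbb{P}(U\ge -\mu)$, not $\mathbb{P}(U>-\mu)$. The reason is that the right derivative of $x\mapsto x_+$ at $0$ is $1$. Equivalently, $\delta^{-1}\int_{-\mu-\delta}^{-\mu}\mathbb{P}(U>r)\,dr$ averages over a left neighborhood of $-\mu$ and tends to the left limit $\mathbb{P}(U\ge-\mu)$. For a check, take $U\equiv 0$: then $h_U(\mu)=\mu_+$ has right derivative $1$ at $0$, while $\mathbb{P}(U>0)=0$.

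So matching right derivatives on $[-1,1)$ only gives $\mathbb{P}(U<t)=\mathbb{P}(V<t)$ for $t\in(-1,1]$. Your claimed identity for $\mathbb{P}(\cdot\le t)$ does not follow as written. Any of three fixes works:
\begin{itemize}
\item Use left derivatives instead. They equal $\mathbb{P}(U>-\mu)$ and agree on $(-1,1]$, so $\mathbb{P}(U\le t)=\mathbb{P}(V\le t)$ for all $t\in[-1,1)$. Together with the trivial ranges $t<-1$ and $t\ge 1$, this covers every $t$, and Step~3's endpoint case disappears. Note that Step~3's appeal to the left derivative at $\mu=1$ only works under this corrected convention; with the formula you wrote, it would return $\mathbb{P}(U\ge -1)=1$ and say nothing.
\item Keep right derivatives and pass to $\mathbb{P}(U\le t)=\lim_{s\downarrow t}\mathbb{P}(U<s)$.
\item Avoid derivatives: $h_U=h_V$ on $[-1,1]$ gives $\int_a^b\{\mathbb{P}(U>r)-\mathbb{P}(V>r)\}\,dr=0$ for all $-1\le a<b\le 1$. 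Hence the two tail functions agree a.e., and then on $[-1,1)$ by right-continuity.
\end{itemize}

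\textbf{Comparison with the paper.} The paper never splits off the ReLU part. It works with $\Delta_\delta=\phi_\ell(U-t+\delta)-\phi_\ell(U-t)$, whose linear part is the constant $\ell\delta$, so no mean-matching step is needed. It shows that $\frac{1}{\delta(1-\ell)}\bigl(\mathbb{E}\Delta_\delta-\ell\delta\bigr)$ lies within $\frac{|\ell|+1}{|1-\ell|}\mathbb{P}(t-\delta<U\le t)$ of $\mathbb{P}(U>t)$. A forward difference actually converges to $\mathbb{P}(U\ge t)$, so the paper first restricts to non-atoms $t$ and then extends by right-continuity. This is the same left/right issue that tripped up your Step~2.

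Your route is more structural in three ways:
\begin{itemize}
\item The exact identity shows that $D_{\mathscr{F}_\ell}(\mathrm{P},\mathrm{Q})=0$ forces $D_{\mathscr{F}_{\textsf{relu}}}(\mathrm{P},\mathrm{Q})=0$, so one could even cite the ReLU result directly.
\item It isolates the use of $\ell\neq 1$ in a single division.
\item It keeps every bias inside $[-1,1]$, which makes the role of the support assumption on $\mathbb{B}^d$ explicit. The paper instead asserts the moment identity for all $\mu\in\mathbb{R}$ without comment.
\end{itemize}
The paper's route, in turn, yields an explicit finite-$\delta$ inequality whose constant blows up as $\ell\to 1$. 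That quantifies the degeneration at near-linear slopes, although the lemma only uses it qualitatively.
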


\begin{proof}[Proof of Lemma \ref{lem_discr_lrelu}]
    By the definition, $D_{\mathscr{F}_{\ell}}(\mathrm{P}, \mathrm{Q})=0$ implies 
    \begin{equation} \label{equaldistance}
        \left| \mathbb{E} \phi_{\ell}(\bm{\theta}^{\top}\mathbf{X}+\mu) - \mathbb{E} \phi_{\ell}(\bm{\theta}^{\top}\mathbf{Y}+\mu) \right| = 0,  
    \end{equation}
    for any $\bm{\theta} \in \mathbb{S}^{d-1}$ and $\mu \in \mathbb{R}$.
    For given $\bm{\theta} \in \mathbb{S}^{d-1}$, we consider $t \in \mathbb{R}$ such that random variables $\bm{\theta}^{\top} \mathbf{X}$ and $\bm{\theta}^{\top} \mathbf{Y}$ do not have a point mass at $t$.    
    For $\delta>0$, let $\Delta_{\delta}(\mathbf{X}) \coloneqq \phi_{\ell}(\bm{\theta}^{\top} \mathbf{X}-t+\delta) - \phi_{\ell}(\bm{\theta}^{\top} \mathbf{X}-t)$. Then, we can rewrite
    \begin{align}
        \Delta_{\delta}(\mathbf{X}) 
        & = (\bm{\theta}^{\top}\mathbf{X}-t+\delta)_{+} + \ell (\bm{\theta}^{\top}\mathbf{X}-t+\delta)_{-} -\left\{ (\bm{\theta}^{\top}\mathbf{X}-t)_{+} + \ell (\bm{\theta}^{\top}\mathbf{X}-t)_{-} \right\} \nonumber \\
        & = \ell \delta \cdot \mathbb{I} (\bm{\theta}^{\top} \mathbf{X} \leq t-\delta) + \left\{(1-\ell)(\bm{\theta}^{\top} \mathbf{X}-t)+\delta \right\} \cdot \mathbb{I}(t-\delta < \bm{\theta}^{\top} \mathbf{X} \leq t) + \delta \cdot \mathbb{I}(t < \bm{\theta}^{\top} \mathbf{X}). \label{Delta}
    \end{align}
    From (\ref{Delta}), we obtain    
    \begin{align}
        & \left|\mathbb{P} (\bm{\theta}^{\top} \mathbf{X} > t) - \frac{1}{\delta(1-\ell)} \left(\mathbb{E} \Delta_{\delta}(\mathbf{X}) - \ell \delta \right) \right| \nonumber \\        
        & = \left| \mathbb{P}(\bm{\theta}^{\top} \mathbf{X} > t) - \left\{ \frac{\ell}{1-\ell} \mathbb{P}(\bm{\theta}^{\top} \mathbf{X} \leq t-\delta) + \mathbb{E} \left( \frac{(1-\ell)(\bm{\theta}^{\top} \mathbf{X}-t)+\delta}{\delta(1-\ell)} \cdot \mathbb{I}(t-\delta < \bm{\theta}^{\top} \mathbf{X} \leq t) \right) \right. \right. \nonumber \\
        & \qquad + \left. \left. \frac{1}{1-\ell} \mathbb{P}(\bm{\theta}^{\top} \mathbf{X} > t) - \frac{\ell}{1-\ell} \right\} \right| \nonumber \\
        & = \left| \mathbb{P}(\bm{\theta}^{\top} \mathbf{X} > t) - \frac{\ell}{1-\ell} \mathbb{P}(\bm{\theta}^{\top} \mathbf{X} \leq t-\delta) - \frac{1}{1-\ell} \mathbb{P}(\bm{\theta}^{\top} \mathbf{X} > t) + \frac{\ell}{1-\ell}  \right. \nonumber \\
        & \qquad - \left. \mathbb{E} \left( \frac{(1-\ell)(\bm{\theta}^{\top} \mathbf{X}-t)+\delta}{\delta(1-\ell)} \cdot \mathbb{I}(t-\delta < \bm{\theta}^{\top} \mathbf{X} \leq t) \right) \right| \nonumber \\
        & = \left| - \frac{\ell}{1-\ell} \left\{ \mathbb{P}(\bm{\theta}^{\top} \mathbf{X} > t) + \mathbb{P}(\bm{\theta}^{\top} \mathbf{X} \leq t-\delta) -1 \right\} \right| \nonumber \\
        & \qquad + \left| \mathbb{E} \left( \frac{(1-\ell)(\bm{\theta}^{\top} \mathbf{X}-t)+\delta}{\delta(1-\ell)} \cdot \mathbb{I}(t-\delta < \bm{\theta}^{\top} \mathbf{X} \leq t) \right) \right| \nonumber \\
        & \leq \left| - \frac{\ell}{1-\ell} \left\{ 1- \mathbb{P}(t-\delta < \bm{\theta}^{\top} \mathbf{X} \leq t) -1 \right\} \right| 
        + \left| \frac{1}{1-\ell} \cdot \mathbb{P} (t-\delta < \bm{\theta}^{\top} \mathbf{X} \leq t) \right| \nonumber \\
        & \leq \left| \frac{\ell}{1-\ell} \right| \cdot \mathbb{P}(t-\delta < \bm{\theta}^{\top} \mathbf{X} \leq t) + \left| \frac{1}{1-\ell} \right| \cdot \mathbb{P}(t-\delta < \bm{\theta}^{\top} \mathbf{X} \leq t) \nonumber \\
        & \leq \frac{|\ell|+1}{|1-\ell|}  \cdot \mathbb{P}(t-\delta < \bm{\theta}^{\top} \mathbf{X} \leq t), \label{pro_exp1}
    \end{align}
    where we use $\ell \delta \leq (1-\ell)(\bm{\theta}^{\top} \mathbf{X}-t)+\delta \leq \delta$ for the first inequality. 
    
    \noindent Similarly, for $\Delta_{\delta}(\mathbf{Y}) \coloneqq \phi_{\ell}(\bm{\theta}^{\top} \mathbf{Y}-t+\delta)- \phi_{\ell}(\bm{\theta}^{\top} \mathbf{Y}-t)$, we have
    \begin{align}
        \left|\mathbb{P} (\bm{\theta}^{\top} \mathbf{Y} > t) - \frac{1}{\delta(1-\ell)} \left(\mathbb{E} \Delta_{\delta}(\mathbf{Y}) - \ell \delta \right) \right| 
        \leq \frac{|\ell|+1}{|1-\ell|}  \cdot \mathbb{P}(t-\delta < \bm{\theta}^{\top} \mathbf{Y} \leq t). \label{pro_exp2}
    \end{align}
    Therefore, from (\ref{equaldistance}), (\ref{pro_exp1}) and (\ref{pro_exp2}), we get
    \begin{align*}
        & \Big| \mathbb{P}(\bm{\theta}^{\top} \mathbf{X} \leq t) - \mathbb{P}(\bm{\theta}^{\top} \mathbf{Y} \leq t) \Big| \\
        & = \left| \mathbb{P}(\bm{\theta}^{\top} \mathbf{X} > t) - \mathbb{P}(\bm{\theta}^{\top} \mathbf{Y} > t) \right| \\
        & \leq \left|\frac{1}{\delta(1-\ell)} \left(\mathbb{E} \Delta_{\delta}(\mathbf{X}) - \ell \delta \right) 
        - \frac{1}{\delta(1-\ell)} \left(\mathbb{E} \Delta_{\delta}(\mathbf{Y}) - \ell \delta \right)  \right| \\
        & \quad + \frac{|\ell|+1}{|1-\ell|}  \cdot \mathbb{P}(t-\delta < \bm{\theta}^{\top} \mathbf{X} \leq t) \\
        & \quad + \frac{|\ell|+1}{|1-\ell|}  \cdot \mathbb{P}(t-\delta < \bm{\theta}^{\top} \mathbf{Y} \leq t) \tag{from \eqref{pro_exp1} and \eqref{pro_exp2}} \\
        & \leq \frac{1}{\delta (1-\ell)} \big| \mathbb{E} \phi_{\ell}(\bm{\theta}^{\top} \mathbf{X}-t+\delta) - \mathbb{E} \phi_{\ell}(\bm{\theta}^{\top} \mathbf{X}-t) 
        - \mathbb{E} \phi_{\ell}(\bm{\theta}^{\top} \mathbf{Y}-t+\delta) + \mathbb{E} \phi_{\ell}(\bm{\theta}^{\top} \mathbf{Y}-t) \big| \\
        & \quad + \frac{|\ell|+1}{|1-\ell|}  \cdot \left\{\mathbb{P}(t-\delta < \bm{\theta}^{\top} \mathbf{X} \leq t) 
        + \mathbb{P}(t-\delta < \bm{\theta}^{\top} \mathbf{Y} \leq t) \right\} \\
        & = \frac{|\ell|+1}{|1-\ell|}  \cdot \left\{\mathbb{P}(t-\delta < \bm{\theta}^{\top} \mathbf{X} \leq t) 
        + \mathbb{P}(t-\delta < \bm{\theta}^{\top} \mathbf{Y} \leq t) \right\}. \tag{by \eqref{equaldistance}}
    \end{align*}
    Since $\delta>0$ is arbitrary, for every $\ell \neq 1$, we obtain
    \begin{align}
        \Big| \mathbb{P}(\bm{\theta}^{\top} \mathbf{X} \leq t) - \mathbb{P}(\bm{\theta}^{\top} \mathbf{Y} \leq t) \Big| 
        & \leq \frac{|\ell|+1}{|1-\ell|}  \cdot \left\{ \lim_{\delta \downarrow 0} \mathbb{P}(t-\delta < \bm{\theta}^{\top} \mathbf{X} \leq t) 
        + \lim_{\delta \downarrow 0} \mathbb{P}(t-\delta < \bm{\theta}^{\top} \mathbf{Y} \leq t) \right\} \nonumber \\
        & = 0. \label{tmp_999}
    \end{align}    
    
    For the case where either $\bm{\theta}^{\top} \mathbf{X}$ or $\bm{\theta}^{\top} \mathbf{Y}$ has a point mass at $t$, we can construct a sequence $\{t_j\}_{j = 1}^{\infty}$ such that $t_j \downarrow t$ and neither $\bm{\theta}^{\top} \mathbf{X}$ nor $\bm{\theta}^{\top} \mathbf{Y}$ has a point mass at $\{t_j\}_{j = 1}^{\infty}$.
    Since $\mathbb{P}(\bm{\theta}^{\top} \mathbf{X} \leq \cdot)$ and  $\mathbb{P}(\bm{\theta}^{\top} \mathbf{Y} \leq \cdot)$ is right-continuous, 
    \begin{align*}
         \left| \mathbb{P}(\bm{\theta}^{\top} \mathbf{X} \leq t) - \mathbb{P}(\bm{\theta}^{\top} \mathbf{Y} \leq t) \right| = \lim_{j \rightarrow \infty} & \left| \mathbb{P}(\bm{\theta}^{\top} \mathbf{X} \leq t_j) - \mathbb{P}(\bm{\theta}^{\top} \mathbf{Y} \leq t_j) \right| = 0,
    \end{align*}
    and the proof is done.
\end{proof}

\begin{proof}[Proof of Proposition \ref{prop_discriminative}]
We prove the statement by showing both directions.

\noindent $(\Leftarrow)$
This holds since, for any $f \in \mathscr{F}_{\ell}$ and all $\ell \neq 1$, we have 
$$
\int f(\mathbf{X})(d\mathrm{\mathrm{P}}-d\mathrm{\mathrm{Q}}) = 0.
$$

\noindent $(\Rightarrow)$
Lemma \ref{lem_discr_lrelu} implies 
$\bm{\theta}^{\top} \mathbf{X} \stackrel{\textup{d}}{=} \bm{\theta}^{\top} \mathbf{Y}$ for all $\bm{\theta} \in \mathbb{R}^d$, 
which, in turn, implies $\mathrm{P} \equiv \mathrm{Q}$ by the uniqueness of the characteristic function.
\end{proof}


\subsection{Proof of Corollary \ref{cor_discriminative} and \ref{cor_holder}} \label{pf_cor1_and_2}


\begin{proof}[Proof of Corollary \ref{cor_discriminative}]
We prove the statement by showing both directions:

\noindent $(\Rightarrow)$ 
Since $D_{\mathscr{F}_{\textsf{prelu}}}(\mathrm{P}, \mathrm{Q})=0$ indicates that $D_{\mathscr{F}_{\textsf{relu}}}(\mathrm{P}, \mathrm{Q})=0$, 
we can complete the proof using
Proposition 1 of \citet{park2025reluintegralprobabilitymetric}.

\noindent $(\Leftarrow)$ This holds because, for any $f \in \mathscr{F}_{\textsf{prelu}}$, we have 
$
\int f(\mathbf{z})(d\mathrm{P}(\mathbf{z})-d\mathrm{Q}(\mathbf{z})) = 0.
$
\end{proof}


\begin{proof}[Proof of Corollary \ref{cor_holder}]
Since $D_{\mathscr{F}_{\textsf{relu}}}(\mathrm{P}, \mathrm{Q}) \leq D_{\mathscr{F}_{\textsf{prelu}}}(\mathrm{P}, \mathrm{Q})$, the proof can be easily completed using Theorem 2 of \citet{park2025reluintegralprobabilitymetric}.
\end{proof}


\subsection{Proof of Theorem \ref{thm_conv_prelu}}
\label{pf_thm_conv_prelu}

As the first step, we derive the covering number of $\mathscr{F}_{\textsf{prelu}}$.

\medskip

\begin{lemma}[Covering Number of $\mathscr{F}_{\textsf{prelu}}$] \label{lem_cover_number_prelu}
For any $\epsilon>0$, we have that
\begin{align*}
    \mathcal{N} \left( \epsilon, \mathscr{F}_{\textsf{prelu}}, \|\cdot\|_{\infty} \right) \leq \left( 1+ \frac{1}{\epsilon} \right)^{d+2}.
\end{align*}
\end{lemma}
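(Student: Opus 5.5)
The plan is to reduce the sup-norm covering of $\mathscr{F}_{\textsf{prelu}}$ to a covering of its finite-dimensional parameter set by a grid. Each element of $\mathscr{F}_{\textsf{prelu}}$ is $f_{\ell,\mu,\bm{\theta}}(\mathbf{z})=\phi_\ell(\bm{\theta}^{\top}\mathbf{z}+\mu)$, indexed by $(\ell,\mu,\bm{\theta})\in[\ell_{\min},\ell_{\max}]\times[-1,1]\times\mathbb{S}^{d-1}\subset\mathbb{R}^{d+2}$. The exponent $d+2$ in the statement is exactly the parameter dimension, and the base $1+1/\epsilon$ is what a cubic grid of mesh $2\epsilon$ gives on an interval of length $2$.

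\textbf{Step 1 (Lipschitz bound).} For $\mathbf{z}\in\mathbb{B}^d$, set $u=\bm{\theta}^{\top}\mathbf{z}+\mu$ and $u'=\bm{\theta}'^{\top}\mathbf{z}+\mu'$, so that $|u|\le 2$. Using $\phi_\ell(u)=(u)_+ +\ell(u)_-$ and $|\phi_\ell(u)-\phi_\ell(u')|\le\Lambda|u-u'|$, I would write
\begin{align*}
|f_{\ell,\mu,\bm{\theta}}(\mathbf{z})-f_{\ell',\mu',\bm{\theta}'}(\mathbf{z})|
&\le |\ell-\ell'|\,|(u)_-| + \Lambda|u-u'| \\
&\le 2|\ell-\ell'| + \Lambda\big(|\mu-\mu'|+\|\bm{\theta}-\bm{\theta}'\|_2\big).
\end{align*}
Hence the sup-norm distance between two elements is controlled by a weighted distance on the parameters.

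\textbf{Step 2 (Grid on the rescaled parameters).} I would rescale each parameter coordinate so that, in the rescaled coordinates, the parameter set sits in a cube of side $2$. I would then choose the weights so that Step 1 becomes $\|f-f'\|_\infty\le\max_j|\tilde{v}_j-\tilde{v}'_j|$ in the rescaled parameters $\tilde{\mathbf{v}}$. The intended weights are: a factor $2$ on $\ell$, using $|\ell_{\max}-\ell_{\min}|\le 1.5\le 2$ for the paper's choice $[-1,0.5]$; a factor $\Lambda$ on $\mu$; and for $\bm{\theta}$, the bound $\|\bm{\theta}-\bm{\theta}'\|_2$ is replaced by a coordinatewise control. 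On each coordinate interval of length $2$, the points $-1+(2k-1)\epsilon$ form an $\epsilon$-net of size $\lceil 1/\epsilon\rceil\le 1+1/\epsilon$. The product grid therefore has at most $(1+1/\epsilon)^{d+2}$ points.

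Each grid point yields a function $\phi_\ell(\bm{\theta}^{\top}\mathbf{z}+\mu)$ that need not lie in the class, since $\bm{\theta}$ may leave the sphere. This is harmless for an external covering number, which is what the lemma asserts. If an internal cover is wanted, I would replace each center by a class member within $\epsilon$ of it, and absorb the resulting factor $2$ into the mesh by working at scale $\epsilon/2$ and keeping track of the constant.

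\textbf{Main obstacle.} The delicate step is the constant bookkeeping in Step 2: the Lipschitz weights $\Lambda$ and $2$ must be absorbed so that the base is exactly $1+1/\epsilon$. The $\bm{\theta}$-part is the worst term, because an $\ell_\infty$ grid in $d$ coordinates controls $\|\bm{\theta}-\bm{\theta}'\|_2$ only up to a factor $\sqrt{d}$.

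What I would try first is to avoid this loss by not splitting $\bm{\theta}$ and $\mu$. Since $|(\bm{\theta}-\bm{\theta}')^{\top}\mathbf{z}+(\mu-\mu')|\le\|(\mu-\mu',\bm{\theta}-\bm{\theta}')\|_2\sqrt{2}$, I would cover the joint $(\mu,\bm{\theta})$ block, which lies in a Euclidean ball, by a volumetric argument. The $\ell$ coordinate would be covered separately by a one-dimensional grid. I would then check whether the product of the two counts is dominated by $(1+1/\epsilon)^{d+2}$ for $\epsilon$ in the relevant range, using that the $\ell$-direction contributes at most $\lceil 1/\epsilon\rceil$ points.

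If the constants still do not close, the fallback is the statement's implicit convention: $\epsilon$ is measured after normalizing by the Lipschitz constant of the parametrization. In that normalization the same grid argument gives exactly $(1+1/\epsilon)^{d+2}$. The factor $\Lambda$ then reappears only in the subsequent application (Theorem \ref{thm_conv_prelu}), where it is already carried in the constant $c_2$ and in the exponential tail.
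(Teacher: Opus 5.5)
Your overall plan, Lipschitz dependence of $f$ on $(\ell,\mu,\bm{\theta})$ followed by a product of nets, is exactly the paper's route. Your Step 1 is correct, and even slightly sharper: you get $\Lambda$ where the paper has $\Gamma=1+\max(|\ell_{\min}|,|\ell_{\max}|)$. The gap is Step 2. That is where the literal base $1+1/\epsilon$ must come from, and none of your three routes delivers it.

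(a) Step 1 bounds $\|f-f'\|_\infty$ by a \emph{sum} of three block terms, so an $\ell_\infty$-grid of radius $\epsilon$ in rescaled coordinates controls it only up to a multiple of $\epsilon$. Moreover, the rescaled ranges do not have length $2$. For instance, $2[\ell_{\min},\ell_{\max}]=[-2,1]$ has length $3$. Controlling $\|\bm{\theta}-\bm{\theta}'\|_2$ coordinatewise forces the weight $\Lambda\sqrt{d}$, hence ranges of length $2\Lambda\sqrt{d}$. So the per-coordinate count is not $\lceil 1/\epsilon\rceil$.

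(b) In the joint volumetric route, $[-1,1]\times\mathbb{S}^{d-1}$ sits in a ball of radius $\sqrt2$ in $\mathbb{R}^{d+1}$ with Lipschitz factor $\sqrt2\Lambda$. With budget $\epsilon_1<\epsilon$ this costs $(1+4\Lambda/\epsilon_1)^{d+1}$ points, multiplied by at least $(\ell_{\max}-\ell_{\min})/\epsilon$ points for $\ell$. The product is at least of order $4^{d+1}\epsilon^{-(d+2)}$, which exceeds $(1+1/\epsilon)^{d+2}$ as $\epsilon\to0$. So the check you propose fails.

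(c) The ``implicit convention'' fallback changes the statement rather than proving it. If you run it with the $\ell_\infty$-grid on $\bm{\theta}$, the normalization must contain $\sqrt d$. Through the Dudley integral this only gives $c_2=\mathcal{O}(\sqrt{d\log d})$, not the $\mathcal{O}(\sqrt d)$ claimed in Theorem~\ref{thm_conv_prelu}.

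The paper does not close this gap either. It avoids the $\sqrt d$ loss by taking an $\ell_2$-net of $\mathbb{S}^{d-1}$ directly, and it splits the budget as $\epsilon/(3\Gamma)$, $\epsilon/(3\Gamma)$, $\epsilon/6$ for $\bm{\theta}$, $\mu$, $\ell$. But it records the three cardinalities only as $\mathcal{O}(1+1/\epsilon)^{d}$, $\mathcal{O}(1+1/\epsilon)$ and $\mathcal{O}(1+1/\epsilon)$, so the constant in the base is never actually justified.

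You can do that bookkeeping explicitly with your Step 1. Use budgets $\epsilon/(3\Lambda)$, $\epsilon/(3\Lambda)$, $\epsilon/6$; a maximal $\epsilon/(3\Lambda)$-separated subset of $\mathbb{S}^{d-1}$; midpoint grids on $[-1,1]$ and $[\ell_{\min},\ell_{\max}]$; and $\ell_{\max}-\ell_{\min}\le 2\Lambda$. This gives an internal cover with
\[
\mathcal{N}\bigl(\epsilon,\mathscr{F}_{\textsf{prelu}},\|\cdot\|_{\infty}\bigr)\le\Bigl(1+\frac{6\Lambda}{\epsilon}\Bigr)^{d}\Bigl(1+\frac{3\Lambda}{\epsilon}\Bigr)\Bigl(1+\frac{3(\ell_{\max}-\ell_{\min})}{\epsilon}\Bigr)\le\Bigl(1+\frac{6\Lambda}{\epsilon}\Bigr)^{d+2}.
\]
This is the statement you can honestly state and prove. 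It is all the proof of Theorem~\ref{thm_conv_prelu} uses: $\log(1+6\Lambda/\epsilon)$ in place of $\log(1+1/\epsilon)$ only changes the constant in $c_2=\mathcal{O}(\sqrt d)$.

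The literal bound for every $\epsilon>0$ would require more. You would need to exploit that $\mathbb{S}^{d-1}$ is $(d-1)$-dimensional, so the true count is at most of order $\epsilon^{-(d+1)}$ and the spare power of $1/\epsilon$ absorbs constants for small $\epsilon$. You would also need a separate argument for moderate $\epsilon$. Neither your proposal nor the paper supplies this.
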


\begin{proof}[Proof of Lemma \ref{lem_cover_number_prelu}]
We write $\Gamma := 1+\max(|\ell_{\min}|,|\ell_{\max}|)$.
It is easy to see that there exists $\epsilon/(3\Gamma)$-cover $\Theta_{\epsilon}$ of $\mathbb{S}^{d-1}$ in the metric $\|\cdot\|_2$ with cardinality $|\Theta_{\epsilon}| \leq \mathcal{O}(1+1/\epsilon)^{d}$, $\epsilon/(3\Gamma)$-cover $M_{\epsilon}$ of $[-1,1]$ in the metric $\|\cdot\|_1$ with cardinality $|M_{\epsilon}| \leq \mathcal{O}(1+1/\epsilon)$, and $\epsilon/6$-cover $L_{\epsilon}$ of $[\ell_{\min},\ell_{\max}]$ in the metric $\|\cdot\|_1$ with cardinality $|L_{\epsilon}| \leq \mathcal{O}(1+1/\epsilon)$.
Hence, for any $\boldsymbol{\theta} \in \mathbb{S}^{d-1}$, $|\mu| \leq 1$, and $\ell \in [\ell_{\min},\ell_{\max}]$, we can choose $\Tilde{\boldsymbol{\theta}} \in \Theta_{\epsilon}$, $\Tilde{\mu} \in [-1,1]$, and $\Tilde{\ell} \in [\ell_{\min},\ell_{\max}]$ such that $\|\boldsymbol{\theta} - \Tilde{\boldsymbol{\theta}}\|_2 \leq \epsilon/(3\Gamma)$, $|\mu - \Tilde{\mu}| \leq \epsilon/(3\Gamma)$, and $|\ell-\Tilde{\ell}| \leq \epsilon/6$. Therefore, letting $\Tilde{f}(\mathbf{x}) = \phi_{\Tilde{\ell}} \left( \Tilde{\boldsymbol{\theta}}^{\top}\mathbf{x} + \Tilde{\mu} \right)$, we have, for all $\mathbf{x} \in \mathbb{B}^d$,
\begin{align*}
    \left| f(\mathbf{x}) - \Tilde{f}(\mathbf{x}) \right| & = \left| \phi_{\ell} \left( \boldsymbol{\theta}^{\top}\mathbf{x} + \mu \right) - \phi_{\Tilde{\ell}} \left( \Tilde{\boldsymbol{\theta}}^{\top}\mathbf{x} + \Tilde{\mu} \right) \right| \\        
    & = \left| \left( \boldsymbol{\theta}^{\top}\mathbf{x} + \mu \right)_{+} - \ell \left( \boldsymbol{\theta}^{\top}\mathbf{x} + \mu \right)_{-} - \left\{ \left( \Tilde{\boldsymbol{\theta}}^{\top}\mathbf{x} + \Tilde{\mu} \right)_{+} - \Tilde{\ell} \left( \Tilde{\boldsymbol{\theta}}^{\top}\mathbf{x} + \Tilde{\mu} \right)_{-} \right\} \right|   \\
    & \leq \big| \left( \boldsymbol{\theta}^{\top}\mathbf{x} + \mu \right)_{+} - \left( \Tilde{\boldsymbol{\theta}}^{\top}\mathbf{x} + \Tilde{\mu} \right)_{+} \big| + \left| \ell \left( \boldsymbol{\theta}^{\top}\mathbf{x} + \mu \right)_{-}  - \Tilde{\ell} \left( \Tilde{\boldsymbol{\theta}}^{\top}\mathbf{x} + \Tilde{\mu} \right)_{-} \right| \\
    & \leq \left|  \boldsymbol{\theta}^{\top}\mathbf{x} + \mu  - \left( \Tilde{\boldsymbol{\theta}}^{\top}\mathbf{x} + \Tilde{\mu} \right) \right| + \left| \ell \left( \boldsymbol{\theta}^{\top}\mathbf{x} + \mu \right)_{-} - \ell \left( \Tilde{\boldsymbol{\theta}}^{\top}\mathbf{x} + \Tilde{\mu} \right)_{-} \right| \\
    & \qquad + \left| \ell \left( \Tilde{\boldsymbol{\theta}}^{\top}\mathbf{x} + \Tilde{\mu} \right)_{-}  - \Tilde{\ell} \left( \Tilde{\boldsymbol{\theta}}^{\top}\mathbf{x} + \Tilde{\mu} \right)_{-} \right|  \\
    & \leq \| \mathbf{x} \|_2 \cdot \big\| \boldsymbol{\theta} - \Tilde{\boldsymbol{\theta}} \big\|_2 + \big| \mu - \Tilde{\mu} \big| + \big| \ell \big| \cdot \left|  \boldsymbol{\theta}^{\top}\mathbf{x} + \mu  - \left( \Tilde{\boldsymbol{\theta}}^{\top}\mathbf{x} + \Tilde{\mu} \right) \right| \\
    & \qquad + \big| \ell - \Tilde{\ell} \big| \cdot \left| \Tilde{\boldsymbol{\theta}}^{\top}\mathbf{x} + \Tilde{\mu} \right| \\
    & \leq \| \mathbf{x} \|_2 \cdot \big\| \boldsymbol{\theta} - \Tilde{\boldsymbol{\theta}} \big\|_2 + \big| \mu - \Tilde{\mu} \big| + \big| \ell \big| \cdot \| \mathbf{x} \|_2 \cdot \big\| \boldsymbol{\theta} - \Tilde{\boldsymbol{\theta}} \big\|_2 + \big| \ell \big| \cdot \big| \mu - \Tilde{\mu} \big| \\
    & \qquad  + \big| \ell - \Tilde{\ell} \big| \cdot \left\{ \left| \Tilde{\boldsymbol{\theta}}^{\top}\mathbf{x} \right| + \big| \Tilde{\mu} \big| \right\} \\
    & \leq \Gamma \big(\big\| \boldsymbol{\theta} - \Tilde{\boldsymbol{\theta}} \big\|_2 + \big| \mu - \Tilde{\mu} \big| \big) + 2 \big| \ell - \Tilde{\ell} \big| \\
    & \leq \epsilon,
\end{align*}   
 where we apply the triangle inequality and the Cauchy–Schwarz inequality at each step. We complete the proof.
\end{proof}

The following lemmas will be used to derive the result of Theorem \ref{thm_conv_prelu}.

\begin{lemma}[McDiarmid's inequality \citep{McDiarmid_1989}] 
\label{lem_McD}    
    Let $f: \mathcal{X}_1 \times \mathcal{X}_2 \times \cdots \times \mathcal{X}_n \rightarrow \mathbb{R}$ satisfies the bounded differences property with bounds $\nu_1, \nu_2, \ldots, \nu_n$. Consider independent random vectors $\mathbf{X}_1, \ldots, \mathbf{X}_{n}$ where $\mathbf{X}_i \in \mathcal{X}_i$ for all $i$. Then, for any $\epsilon>0$,
    \begin{align*}
    \mathbb{P}\left\{ \left|f\left(\mathbf{X}_1, \ldots, \mathbf{X}_{n}\right)-\mathbb{E}\left[f\left(\mathbf{X}_1, \ldots, \mathbf{X}_{n}\right)\right]\right| \geq \epsilon \right\} \leq 2 \exp \left(-\frac{2 \epsilon^2}{\sum_{i=1}^n \nu_i^2}\right) . \qquad \qquad 
    \end{align*}
\end{lemma}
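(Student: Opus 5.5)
This is the classical bounded differences inequality, and I would prove it by the martingale (Doob) decomposition combined with Hoeffding's lemma, i.e.\ the Azuma--Hoeffding argument. Write $Z \coloneqq f(\mathbf{X}_1,\ldots,\mathbf{X}_n)$ and, for $k=0,1,\ldots,n$, let $Z_k \coloneqq \mathbb{E}[Z \mid \mathbf{X}_1,\ldots,\mathbf{X}_k]$. Then $Z_0=\mathbb{E}Z$ and $Z_n=Z$. Put $D_k \coloneqq Z_k - Z_{k-1}$, so that $Z-\mathbb{E}Z=\sum_{k=1}^n D_k$ is a sum of martingale differences with $\mathbb{E}[D_k\mid \mathbf{X}_1,\ldots,\mathbf{X}_{k-1}]=0$.

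The key step is to show that, conditionally on $\mathbf{X}_1,\ldots,\mathbf{X}_{k-1}$, each $D_k$ lies in an interval of length at most $\nu_k$. By independence of the $\mathbf{X}_i$ we have $Z_k = g_k(\mathbf{X}_1,\ldots,\mathbf{X}_k)$, where
\[
g_k(\mathbf{x}_1,\ldots,\mathbf{x}_k)=\int f(\mathbf{x}_1,\ldots,\mathbf{x}_k,\mathbf{y}_{k+1},\ldots,\mathbf{y}_n)\,d\mathrm{P}_{k+1}(\mathbf{y}_{k+1})\cdots d\mathrm{P}_n(\mathbf{y}_n).
\]
Set
\[
L_k=\inf_{\mathbf{x}}g_k(\mathbf{X}_1,\ldots,\mathbf{X}_{k-1},\mathbf{x})-Z_{k-1},\qquad U_k=\sup_{\mathbf{x}}g_k(\mathbf{X}_1,\ldots,\mathbf{X}_{k-1},\mathbf{x})-Z_{k-1}.
\]
Then $L_k\le D_k\le U_k$. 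Moreover, the bounded differences property, applied inside the integral, gives $U_k-L_k\le \nu_k$. This is the step needing the most care: independence is used essentially here, so that the integrating measure does not depend on the $k$-th coordinate. Measurability of the $\sup$ and $\inf$ can be sidestepped by working with $g_k(\cdot,\mathbf{x})-g_k(\cdot,\mathbf{x}')$ directly.

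Next, Hoeffding's lemma applied conditionally gives $\mathbb{E}[e^{\lambda D_k}\mid \mathbf{X}_1,\ldots,\mathbf{X}_{k-1}]\le \exp(\lambda^2\nu_k^2/8)$ for every $\lambda\in\mathbb{R}$. Here $L_k$ and $U_k$ are $(\mathbf{X}_1,\ldots,\mathbf{X}_{k-1})$-measurable and the conditional mean is zero. To verify the lemma, use convexity of $e^{\lambda x}$ on $[L_k,U_k]$ to bound the moment generating function by the exponential of a function $\psi(u)$ with $\psi(0)=\psi'(0)=0$ and $\psi''\le 1/4$, evaluated at $u=\lambda(U_k-L_k)$.

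Finally, condition successively on $\mathbf{X}_1,\ldots,\mathbf{X}_{n-1}$, then $\mathbf{X}_1,\ldots,\mathbf{X}_{n-2}$, and so on, peeling off one factor at a time. This gives $\mathbb{E}e^{\lambda(Z-\mathbb{E}Z)}\le \exp(\lambda^2\sum_k\nu_k^2/8)$. Markov's inequality with $\lambda=4\epsilon/\sum_k\nu_k^2$ then yields $\mathbb{P}(Z-\mathbb{E}Z\ge\epsilon)\le \exp(-2\epsilon^2/\sum_k\nu_k^2)$. Applying the same argument to $-f$, which has the same bounded differences constants, gives the lower tail, and a union bound produces the factor $2$ in the statement.
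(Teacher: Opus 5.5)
Your proof is correct. The paper does not prove this lemma itself but only cites McDiarmid (1989), and your argument (Doob martingale, conditional Hoeffding's lemma with $\mathcal{F}_{k-1}$-measurable endpoints at distance at most $\nu_k$, the choice $\lambda = 4\epsilon/\sum_k \nu_k^2$, and a union bound with $-f$ for the factor $2$) is exactly the standard Azuma--Hoeffding proof of that cited result.
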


\begin{lemma}[Dudley's Entropy Integral Bound \citep{dudley1969}] \label{Dudley}
     Let $\mathscr{F}$ be a class of real-valued functions, $S = \{z_1, \ldots, z_m\}$ be random i.i.d. samples, and $\mathcal{N}(\epsilon,\mathscr{F},  \|\cdot\|_{2, S})$ be the size of minimal $\epsilon$-cover of $\mathscr{F}$ with respect to $\|\cdot\|_{2,S}$. Assuming
    $$
    \sup_{f \in \mathscr{F}} \left( \frac{1}{m} \sum_{i=1}^{m} f^2(z_i) \right)^{\frac{1}{2}} \leq c,
    $$
    then we have 
    $$
    \hat{R}_m(\mathscr{F}, S) \leq \inf_{\epsilon \in [0, c/2]} \left( 4\epsilon + \frac{12}{\sqrt{m}} \int_{\epsilon}^{c/2} \sqrt{\log \mathcal{N}(\nu,\mathscr{F},  \|\cdot\|_{2,S})} \, d\nu \right).
    $$
\end{lemma}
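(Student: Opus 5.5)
We prove the bound by the standard chaining argument applied to the empirical Rademacher complexity $\hat{R}_m(\mathscr{F},S)=\mathbb{E}_{\sigma}\sup_{f\in\mathscr{F}}\frac{1}{m}\sum_{i=1}^m\sigma_i f(z_i)$, conditionally on $S$. Throughout, $\|f\|_{2,S}=(\frac1m\sum_i f(z_i)^2)^{1/2}$, and we view each $f$ as the vector $(f(z_1),\dots,f(z_m))$.

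\textbf{Step 1: dyadic scales and covers.} Set $\epsilon_j = c\,2^{-j}$ for $j\ge 0$. For each $j\ge1$, let $V_j$ be a minimal $\epsilon_j$-cover of $\mathscr{F}$ in $\|\cdot\|_{2,S}$, so $|V_j|=\mathcal{N}(\epsilon_j,\mathscr{F},\|\cdot\|_{2,S})$. Take $V_0=\{0\}$; this is a valid $c$-cover because every $f$ satisfies $\|f\|_{2,S}\le c$ by assumption. For $f\in\mathscr{F}$, let $v_j(f)\in V_j$ be an element with $\|f-v_j(f)\|_{2,S}\le\epsilon_j$.

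\textbf{Step 2: chaining decomposition.} Given $\epsilon\in[0,c/2]$, choose $N$ as the largest integer with $\epsilon_N>2\epsilon$; if $\epsilon=0$, let $N\to\infty$ at the end. Since $v_0(f)=0$, telescoping gives
\[
f = \bigl(f-v_N(f)\bigr)+\sum_{j=1}^{N}\bigl(v_j(f)-v_{j-1}(f)\bigr).
\]
We bound the contribution of each term to the supremum separately.
\begin{itemize}
\item \emph{Tail term.} By Cauchy--Schwarz, $\frac1m\sum_i\sigma_i(f-v_N(f))(z_i)\le\|f-v_N(f)\|_{2,S}\le\epsilon_N$.
\item \emph{Link terms.} Each difference $v_j(f)-v_{j-1}(f)$ ranges over a finite set of size at most $|V_j||V_{j-1}|\le|V_j|^2$. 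Its norm satisfies $\|v_j-v_{j-1}\|_{2,S}\le\epsilon_j+\epsilon_{j-1}=3\epsilon_j$.
\end{itemize}
Massart's finite class lemma then gives, for each $j$,
\[
\mathbb{E}_\sigma\sup_{f}\frac1m\sum_i\sigma_i\bigl(v_j-v_{j-1}\bigr)(z_i)\le\frac{3\epsilon_j\sqrt{2\log|V_j|^2}}{\sqrt m}=\frac{6\epsilon_j\sqrt{\log|V_j|}}{\sqrt m}.
\]
Combining the two pieces,
\[
\hat R_m\le\epsilon_N+\frac{6}{\sqrt m}\sum_{j=1}^N\epsilon_j\sqrt{\log\mathcal{N}(\epsilon_j)}.
\]

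\textbf{Step 3: from sum to integral.} Using $\epsilon_j=2(\epsilon_j-\epsilon_{j+1})$ and the monotonicity of $\nu\mapsto\mathcal{N}(\nu)$,
\[
\epsilon_j\sqrt{\log\mathcal{N}(\epsilon_j)}\le 2\int_{\epsilon_{j+1}}^{\epsilon_j}\sqrt{\log\mathcal{N}(\nu)}\,d\nu .
\]
Summing over $j=1,\dots,N$ yields $\frac{12}{\sqrt m}\int_{\epsilon_{N+1}}^{\epsilon_1}\sqrt{\log\mathcal N(\nu)}\,d\nu$. By the choice of $N$ we have $\epsilon_{N+1}\le 2\epsilon$, hence $\epsilon_N\le 4\epsilon$. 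We also have $\epsilon_{N+1}>\epsilon$ and $\epsilon_1=c/2$, so the integral is at most $\int_\epsilon^{c/2}$. Therefore
\[
\hat R_m\le 4\epsilon+\frac{12}{\sqrt m}\int_\epsilon^{c/2}\sqrt{\log\mathcal N(\nu,\mathscr{F},\|\cdot\|_{2,S})}\,d\nu,
\]
and taking the infimum over $\epsilon\in[0,c/2]$ gives the claim.

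\textbf{Main obstacle.} The delicate part is the index bookkeeping in Step 3: choosing $N$ so that the tail constant comes out as exactly $4\epsilon$ and the integration limits as exactly $[\epsilon,c/2]$. Two edge cases also need checking. If $\epsilon$ is close to $c/2$, then $N=0$ or $N=1$; the bound then holds either trivially (for $N=0$, since $\hat R_m\le\epsilon_0=c\le4\epsilon$) or with an empty sum. The case $\epsilon=0$ needs a limiting argument, letting $N\to\infty$.
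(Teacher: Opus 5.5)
The paper does not prove this lemma; it is quoted from the literature with a citation to Dudley (1969). Your proposal therefore supplies an argument the paper omits. What you give is the standard chaining proof, and it is correct. The dyadic scales $\epsilon_j=c2^{-j}$ with anchor $V_0=\{0\}$ are legitimate, since $\|f\|_{2,S}\le c$. The count $|V_j||V_{j-1}|\le|V_j|^2$ follows from monotonicity of covering numbers. Massart's lemma on links of radius $3\epsilon_j$ gives $6\epsilon_j\sqrt{\log|V_j|}/\sqrt m$. Finally, the identity $\epsilon_j=2(\epsilon_j-\epsilon_{j+1})$ converts the sum into $\frac{12}{\sqrt m}\int_{\epsilon_{N+1}}^{c/2}$, with $\epsilon<\epsilon_{N+1}\le 2\epsilon$ and $\epsilon_N\le 4\epsilon$. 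All of these steps check out.

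Two small bookkeeping points. First, at the endpoint $\epsilon=c/2$ no $N\ge 0$ satisfies $\epsilon_N>2\epsilon$, so your definition of $N$ is empty there. Handle this case directly: Cauchy--Schwarz gives $\hat R_m\le c\le 4\epsilon$. Second, for $N=1$ the sum is not empty, since it contains the $j=1$ term. The general argument covers that case anyway, so only the remark is wrong.

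Your choice $\hat R_m=\mathbb{E}_\sigma\sup_{f}\frac1m\sum_i\sigma_i f(z_i)$, without an absolute value, is essential and not cosmetic. Take a single-function class $\{f\}$ with $\|f\|_{2,S}=c>0$. The right-hand side at $\epsilon=0$ is $0$, while $\mathbb{E}_\sigma\bigl|\frac1m\sum_i\sigma_i f(z_i)\bigr|>0$. So the stated inequality is false for the absolute-value version.

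This is worth keeping in mind because the paper later applies the lemma to a supremum of absolute values in the proof of Theorem~\ref{thm_conv_prelu}. That step needs an extra reduction, for example passing to $\mathscr{F}\cup(-\mathscr{F})$, which at most doubles the covering numbers. The reduction does not change the $n^{-1/2}$ rate.
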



\begin{proof}[Proof of Theorem \ref{thm_conv_prelu}] 

For every $\ell \in [\ell_{\min},\ell_{\max}]$, $\phi_{\ell}$  is $\Lambda$-Lipschtz continuous. 
Using McDiarmid's inequality (Lemma \ref{lem_McD}), we prove Theorem \ref{thm_conv_prelu} by showing that
\begin{align} 
    \mathbb{P} \Bigg( \bigg| D_{\mathscr{F}_{\textsf{prelu}}}(\mathrm{P},\mathrm{Q}) -  D_{\mathscr{F}_{\textsf{prelu}}} &(\widehat{\mathrm{P}}_n, \widehat{\mathrm{Q}}_m) \bigg| - c_2 \left(\frac{1}{\sqrt{n}} + \frac{1}{\sqrt{m}} \right) > \epsilon \Bigg) \leq 2 \exp \left(-\frac{\epsilon^2nm}{8 \Lambda^2(n+m)} \right), \label{thm_conv_prelu-0}   
\end{align}
where a constant $c_2 = \mathcal{O}(\sqrt{d})$ only depending on $d$.

Let us begin with the absolute term on the left-hand side of (\ref{thm_conv_prelu-0}). 
\begin{align}
    \bigg| D_{\mathscr{F}_{\textsf{prelu}}} (\mathrm{P},\mathrm{Q}) & -  D_{\mathscr{F}_{\textsf{prelu}}}(\widehat{\mathrm{P}}_n, \widehat{\mathrm{Q}}_m) \bigg| \nonumber \\
    & = \left| \sup_{f \in \mathscr{F}_{\textsf{prelu}}} \left| \mathbb{E}_{\mathbf{X} \sim \mathrm{P}} f - \mathbb{E}_{\mathbf{Y} \sim \mathrm{Q}} f \right| - \sup_{f \in \mathscr{F}_{\textsf{prelu}}} \left| \frac{1}{n} \sum_{i=1}^n f(\mathbf{X}_i) + \frac{1}{m} \sum_{i=1}^m f(\mathbf{Y}_i) \right| \right| \nonumber \\
    & \leq \sup_{f \in \mathscr{F}_{\textsf{prelu}}} \left| \mathbb{E}_{\mathbf{X} \sim \mathrm{P}} f - \mathbb{E}_{\mathbf{Y} \sim \mathrm{Q}} f - \frac{1}{n} \sum_{i=1}^n f(\mathbf{X}_i) + \frac{1}{m} \sum_{i=1}^m f(\mathbf{Y}_i) \right| \nonumber \\
    & \eqqcolon S(\mathrm{P},\mathrm{Q},\mathbf{X}^n,\mathbf{Y}^m).
\end{align}
Also, for every $f \in \mathscr{F}_{\textsf{prelu}}$, 
\begin{align*}
    \left\| f \right\|_\infty 
    & = \sup_{\mathbf{x} \in \mathbb{B}^d} \left| \phi_{\ell} \left( \boldsymbol{\theta}^{\top}\mathbf{x} + \mu \right) \right| \\
    & \leq \Lambda \cdot \sup_{\mathbf{x} \in \mathbb{B}^d} \left|  \boldsymbol{\theta}^{\top}\mathbf{x} + \mu \right| \\
    & \leq 2 \Lambda.
\end{align*}
Let $\mathbf{X}_{(i)}^{n} \coloneqq \left\{ \mathbf{X}_1, \ldots, \mathbf{X}_{i-1}, \mathbf{X}_i^{\prime}, \mathbf{X}_{i+1}, \ldots, \mathbf{X}_{n} \right\}$ denote the dataset obtained from $\mathbf{X}^n$ by replacing only its $i$-th component, for $i = 1,\ldots,n$. Similarly, define $\mathbf{Y}_{(j)}^{m} \coloneqq \left\{ \mathbf{Y}_1, \ldots, \mathbf{Y}_{j-1}, \mathbf{Y}_j^{\prime}, \mathbf{Y}_{j+1}, \ldots, \mathbf{Y}_{m} \right\}$ as the dataset differing from $\mathbf{Y}^m$ only in its $j$-th component, for $j = 1,\ldots,m$. 
Then we have
\begin{align*}
    & S(\mathrm{P},\mathrm{Q},\mathbf{X}^n, \mathbf{Y}^m)  - S(\mathrm{P},\mathrm{Q}, \mathbf{X}_{(i)}^{n}, \mathbf{Y}^m) \\
    & = \sup_{f \in \mathscr{F}_{\textsf{prelu}}} \left| \mathbb{E}_{\mathbf{X} \sim \mathrm{P}} f - \mathbb{E}_{\mathbf{Y} \sim \mathrm{Q}} f - \frac{1}{n} \sum_{i=1}^n f(\mathbf{X}_i) + \frac{1}{m} \sum_{i=1}^m f(\mathbf{Y}_i) \right| \\
    & \quad - \sup_{f \in \mathscr{F}_{\textsf{prelu}}} \left| \mathbb{E}_{\mathbf{X} \sim \mathrm{P}} f - \mathbb{E}_{\mathbf{Y} \sim \mathrm{Q}} f - \frac{1}{n} \left( \sum_{k \neq i} f(\mathbf{X}_k) + f(\mathbf{X}_i^{\prime}) \right) + \frac{1}{m} \sum_{i=1}^m f(\mathbf{Y}_i) \right| \\
    & \leq \sup_{f \in \mathscr{F}_{\textsf{prelu}}} \left| - \frac{1}{n} \sum_{i=1}^n f(\mathbf{X}_i) + \frac{1}{n} \left( \sum_{k \neq i} f(\mathbf{X}_k) + f(\mathbf{X}_i^{\prime}) \right) \right| \\
    & = \sup_{f \in \mathscr{F}_{\textsf{prelu}}} \left| \frac{1}{n} f(\mathbf{X}_i^{\prime}) - \frac{1}{n} f(\mathbf{X}_i) \right| \\
    & \leq \frac{2}{n} \sup_{f \in \mathscr{F}_{\textsf{prelu}}} \|f\|_{\infty} \\
    & \leq \frac{4 \Lambda}{n}.
\end{align*}
Similarly, $S(\mathrm{P},\mathrm{Q},\mathbf{X}^n,\mathbf{Y}^m) - S(\mathrm{P},\mathrm{Q},\mathbf{X}^n,\mathbf{Y}_{(j)}^{\prime}) \leq 4 \Lambda/m$.
By applying the McDiarmid's inequality (Lemma \ref{lem_McD}) with $\nu_i = \frac{4 \Lambda}{n}$ ($i=1,\ldots,n$) for $\mathbf{X}$ and $\nu_j = \frac{4 \Lambda}{m}$ ($j=1,\ldots,m$) for $\mathbf{Y}$, we then obtain    
\begin{align}
    \mathbb{P} ( S(\mathrm{P},\mathrm{Q},\mathbf{X}^n,\mathbf{Y}^m) - \mathbb{E} S(\mathrm{P},\mathrm{Q},\mathbf{X}^n, \mathbf{Y}^m) > \epsilon ) \leq 2 \exp \left( - \frac{\epsilon^2 nm}{8 \Lambda^2 (n+m)} \right). \label{thm_conv_prelu_2}
\end{align}

To complete the proof, it suffices to derive an upper bound for $\mathbb{E} S(\mathrm{P},\mathrm{Q},\mathbf{X}^n,\mathbf{Y}^m)$,
$$
\mathbb{E} S(\mathrm{P},\mathrm{Q},\mathbf{X}^n,\mathbf{Y}^m) \leq c_2 \left( \frac{1}{\sqrt{n}} + \frac{1}{\sqrt{m}} \right).
$$

By the Dudley's entropy integral bound (Lemma \ref{Dudley}) with $\sup_{f \in \mathscr{F}_{\textsf{prelu}}} \|f\|_{\infty} \leq 2 \Lambda$ and Lemma \ref{lem_cover_number_prelu}, we have
\begin{align*}
    \widehat{\mathcal{R}}_n(\mathscr{F}_{\textsf{prelu}}) & \leq \inf_{t \in (0, \Lambda)} \left( 4t + \frac{12}{\sqrt{n}} \int_t^{\Lambda} \sqrt{\log \mathcal{N} (\epsilon, \mathscr{F}_{\textsf{prelu}}, \|\cdot\|_{\infty} ) } d\epsilon \right) \\
    & \leq \inf_{t \in (0, \Lambda)} \left( 4t + \frac{12}{\sqrt{n}} \int_t^{\Lambda} \sqrt{ (d+2) \log (1+1/\epsilon) } d\epsilon \right) \\
    & \leq c'_2 \inf_{t \in (0, \Lambda)} \left( t + \frac{\sqrt{d}}{\sqrt{n}} \int_t^{\Lambda} \sqrt{\log ((\Lambda+1)/\epsilon) } d\epsilon \right) \\
    & \leq c'_2 \inf_{t \in (0, \Lambda)} \left( t + \frac{\sqrt{d}}{\sqrt{n}} \int_t^{\Lambda} \big(\log (\Lambda+1) - \log(\epsilon)\big) d\epsilon \right) \\
    & = \frac{c''_2}{\sqrt{n}},
\end{align*}
where the infimum in the last step is by taking $t=n^{-1/2}$, $c'_2$ is a constant, and $c''_2$ is a constant with $\mathcal{O}(\sqrt{d})$. 

Finally, we will derive an upper bound of $\mathbb{E}_{\mathbf{X},\mathbf{Y}} S(\mathrm{P},\mathrm{Q},\mathbf{X}^n,\mathbf{Y}^m)$ in terms of the empirical Radamacher averages by use of the symmetrization technique \citep{van1996}.
Let $\mathbf{X}^{\prime,n} \coloneqq (\mathbf{X}_1^{\prime}, \ldots, \mathbf{X}_n^{\prime})$ be an independent sample whose distribution is the same as
that of $\mathbf{X}^n,$ and define $\mathbf{Y}^{\prime,m}$ similarly.   
Then we have
\begin{align}
    & \mathbb{E}_{\mathbf{X}, \mathbf{Y}} S(\mathrm{P},\mathrm{Q},\mathbf{X}^n,\mathbf{Y}^m) \nonumber \\
    & = \mathbb{E}_{\mathbf{X}^n, \mathbf{Y}^m} \sup _{f \in \mathscr{F}_{\textsf{prelu}}} \left| \mathbb{E}_{\mathbf{X} \sim \mathrm{P}} f(\mathbf{X}) - \mathbb{E}_{\mathbf{Y} \sim \mathrm{Q}} f(\mathbf{Y}) - \frac{1}{n} \sum_{i=1}^n f(\mathbf{X}_i) + \frac{1}{m} \sum_{j=1}^m f(\mathbf{Y}_j) \right| \nonumber \\ 
    & = \mathbb{E}_{\mathbf{X}^n, \mathbf{Y}^m} \sup _{f \in \mathscr{F}_{\textsf{prelu}}} \Bigg| \mathbb{E}_{\mathbf{X}^{\prime, n} \sim \mathrm{P}} \left( \frac{1}{n} \sum_{i=1}^n f(\mathbf{X}_i^{\prime}) \right) - \frac{1}{n} \sum_{i=1}^n f(\mathbf{X}_i) \nonumber \\
    & \hspace{3cm} - \mathbb{E}_{\mathbf{Y}^{\prime, m} \sim \mathrm{Q}} \left( \frac{1}{m} \sum_{j=1}^m f(\mathbf{Y}_j^{\prime}) \right) + \frac{1}{m} \sum_{j=1}^m f(\mathbf{Y}_j) \Bigg| \nonumber \\
    & \leq \mathbb{E}_{\mathbf{X}^n, \mathbf{Y}^m, \mathbf{X}^{\prime, n}, \mathbf{Y}^{\prime, m}} \sup _{f \in \mathscr{F}_{\textsf{prelu}}} \left| \frac{1}{n} \sum_{i=1}^n f(\mathbf{X}_i^{\prime}) - \frac{1}{n} \sum_{i=1}^n f(\mathbf{X}_i) - \frac{1}{m} \sum_{j=1}^m f(\mathbf{Y}_j^{\prime}) + \frac{1}{m} \sum_{j=1}^m f(\mathbf{Y}_j) \right| &&\tag{by Jensen's ineq.} \nonumber \\
    & = \mathbb{E}_{\mathbf{X}^n, \mathbf{Y}^m, \mathbf{X}^{\prime, n}, \mathbf{Y}^{\prime, m}, \varepsilon, \varepsilon^{\prime}} \sup _{f \in \mathscr{F}_{\textsf{prelu}}}\left| \frac{1}{n} \sum_{i=1}^n \varepsilon_i\left(f(\mathbf{X}_i^{\prime}) - f(\mathbf{X}_i)\right) + \frac{1}{m} \sum_{j=1}^m \varepsilon_j^{\prime}\left(f(\mathbf{Y}_j^{\prime}) - f(\mathbf{Y}_j)\right) \right| \nonumber \\        
    & \leq \mathbb{E}_{\mathbf{X}^n, \mathbf{X}^{\prime, n}, \varepsilon} \sup _{f \in \mathscr{F}_{\textsf{prelu}}} \left| \frac{1}{n} \sum_{i=1}^n \varepsilon_i \left( f(\mathbf{X}_i^{\prime}) - f(\mathbf{X}_i)\right) \right| \nonumber \\
    &  \qquad \qquad + \mathbb{E}_{\mathbf{Y}^m, \mathbf{Y}^{\prime, m}, \varepsilon} \sup _{f \in \mathscr{F}_{\textsf{prelu}}} \left| \frac{1}{m} \sum_{j=1}^m \varepsilon_j\left(f(\mathbf{Y}_j^{\prime}) - f(\mathbf{Y}_j) \right) \right|  &&\tag{by Triangle ineq.} \nonumber \\
    & = \mathbb{E}_{\mathbf{X}^n, \mathbf{X}^{\prime, n}} \mathbb{E}_{\varepsilon} \left[ \sup _{f \in \mathscr{F}_{\textsf{prelu}}} \left| \frac{1}{n} \sum_{i=1}^n \varepsilon_i \left( f(\mathbf{X}_i^{\prime}) - f(\mathbf{X}_i)\right) \right| \Big| \mathbf{X}_1, \ldots, \mathbf{X}_{n} \right] \nonumber \\
    & \qquad \qquad + \mathbb{E}_{\mathbf{Y}^m, \mathbf{Y}^{\prime, m}} \mathbb{E}_{\varepsilon} \left[\sup _{f \in \mathscr{F}_{\textsf{prelu}}} \left| \frac{1}{m} \sum_{j=1}^m \varepsilon_j\left(f(\mathbf{Y}_j^{\prime}) - f(\mathbf{Y}_j) \right) \right| \Big| \mathbf{Y}_1, \ldots, \mathbf{Y}_{m} \right] \nonumber \\
    & \leq 2\left\{ \widehat{\mathcal{R}}_n(\mathscr{F}_{\textsf{prelu}}) + \widehat{\mathcal{R}}_m(\mathscr{F}_{\textsf{prelu}}) \right\} \hspace{8cm} \nonumber \\
    & \leq c_2 \left( \frac{1}{\sqrt{n}} + \frac{1}{\sqrt{m}} \right), \label{thm_conv_prelu_3}
\end{align}
where $c_2$ is $\mathcal{O}(\sqrt{d})$.
By combining (\ref{thm_conv_prelu_2}) and (\ref{thm_conv_prelu_3}), we complete the proof of Theorem \ref{thm_conv_prelu}.

\end{proof}


\subsection{Proof of Proposition \ref{thm_max_equal}}
\label{pf_thm_max_equal}

In this section, we provide a detailed proof of Proposition \ref{thm_max_equal}. 

\begin{proof}
By the definition of $D_{\mathscr{F}_{\textsf{prelu}}} (\mathrm{P}, \mathrm{Q})$, for any $\ell_{\min}, \ell_{\max} \in \mathbb{R}$,
\begin{align}
    & D_{\mathscr{F}_{\textsf{prelu}}} (\mathrm{P}, \mathrm{Q}) \nonumber \\
    & = \sup_{\ell_{\min} \leq \ell \leq \ell_{\max}} \sup_{\boldsymbol{\theta}, \, \mu}  \left| \mathbb{E} \phi_{\ell} \left( \boldsymbol{\theta}^{\top}\mathbf{X} + \mu \right) - \mathbb{E} \phi_{\ell} \left( \boldsymbol{\theta}^{\top}\mathbf{Y} + \mu\right) \right| \nonumber \\
    & = \sup_{\boldsymbol{\theta}, \, \mu} \sup_{\ell_{\min} \leq \ell \leq \ell_{\max}} \left| \mathbb{E} \left[ (\boldsymbol{\theta}^{\top}\mathbf{X} + \mu)_{+} + \ell (\boldsymbol{\theta}^{\top}\mathbf{X} + \mu)_{-} \right] - \mathbb{E} \left[ (\boldsymbol{\theta}^{\top}\mathbf{Y} + \mu)_{+} + \ell (\boldsymbol{\theta}^{\top}\mathbf{Y} + \mu)_{-} \right] \right| \nonumber \\
    & = \sup_{\boldsymbol{\theta}, \, \mu} \sup_{\ell_{\min} \leq \ell \leq \ell_{\max}} \left| \mathbb{E} \left[ (\boldsymbol{\theta}^{\top}\mathbf{X} + \mu)_{+} -  (\boldsymbol{\theta}^{\top}\mathbf{Y} + \mu)_{+} + \ell \left\{ (\boldsymbol{\theta}^{\top}\mathbf{X} + \mu)_{-} -  (\boldsymbol{\theta}^{\top}\mathbf{Y} + \mu)_{-} \right\} \right] \right| \nonumber \\
    & = \sup_{\boldsymbol{\theta}, \, \mu} \max_{\ell \in \{\ell_{\min},\ell_{\max}\}} \left| \mathbb{E} \left[ (\boldsymbol{\theta}^{\top}\mathbf{X} + \mu)_{+} -  (\boldsymbol{\theta}^{\top}\mathbf{Y} + \mu)_{+} + \ell \left\{ (\boldsymbol{\theta}^{\top}\mathbf{X} + \mu)_{-} -  (\boldsymbol{\theta}^{\top}\mathbf{Y} + \mu)_{-} \right\} \right] \right| \label{tmp_10}\\
    &= \max_{\ell \in \{\ell_{\min} ,\ell_{\max}\}} \sup_{\boldsymbol{\theta}, \, \mu}  \left| \mathbb{E} \phi_{\ell} \left( \boldsymbol{\theta}^{\top}\mathbf{X} + \mu \right) - \mathbb{E} \phi_{\ell} \left( \boldsymbol{\theta}^{\top}\mathbf{Y} + \mu\right) \right| \nonumber \\
    & = \max\big( D_{\mathscr{F}_{\ell_{\min}}} (\mathrm{P}, \mathrm{Q}),  D_{\mathscr{F}_{\ell_{\max}}} (\mathrm{P}, \mathrm{Q})\big), \nonumber
\end{align}
where the equality \eqref{tmp_10} follows from the fact that the supremum of a linear function over a compact interval is attained at the boundary.
\end{proof}


\subsection{Proof of Theorem \ref{thm_prelu_tst_consis}} \label{pf_thm_prelu_tst_consis}

\begin{proof} 
From Lemma \ref{lem_cover_number_prelu} we can obtain 
$$\int_{0}^1 \sqrt{\log  \mathcal{N} ( \epsilon, \mathscr{F}_{\textsf{prelu}}, \|\cdot\|_{\infty} )} d \epsilon < \infty.$$
Hence, $\mathscr{F}_{\textsf{prelu}}$ is the P-Donsker class by Theorem 11.6 of \cite{sen2018gentle}. We obtain the first assertion by Theorem 9.1 of \cite{dudley2014uniform}.

For the second argument, we take $\epsilon = 4 \Lambda \sqrt{\log(2/\alpha)/n}$ of Theorem \ref{thm_conv_prelu}.
Then, 
    $$
    \mathbb{P}_{\mathbb{H}_0}\bigg(\Big| D_{\mathscr{F}_{\textsf{prelu}}}(\widehat{\mathrm{P}}_n, \widehat{\mathrm{Q}}_n) \Big| > c \left( \frac{1}{\sqrt{n}} + \frac{1}{\sqrt{n}} \right) + \epsilon\bigg) \leq 2 \exp \left(-\frac{\epsilon^2 n^2}{8 \Lambda^2 (2n)} \right) = \alpha.
    $$
Since $\eta_{\alpha, n} = \big(2c + 4 \Lambda \sqrt{\log(2/\alpha)}\big)/\sqrt{n}$,
we obtain the assertion.
\end{proof}


\subsection{Proof of Theorem \ref{theo_fix}}
\label{pf_theo_fix}

\begin{proof}
From $\mathrm{P} \neq \mathrm{Q}$ and Proposition \ref{prop_discriminative}, we obtain $D_{\mathscr{F}_{\textsf{prelu}}} (\mathrm{P}, \mathrm{Q})>0$.
Since $\eta_{\alpha, n} = o(1)$, there exists $N \in \mathbb{N}$ such that $\eta_{\alpha, n} \leq D_{\mathscr{F}_{\textsf{prelu}}} (\mathrm{P}, \mathrm{Q})/2$ for every $n \geq N$. 
Hence, for sufficiently large $n$ such that $n \geq N$, the Type II risk can be upper bounded by
\begin{align*}
    \mathbb{P}_{\mathrm{P} \neq \mathrm{Q}} \Big( D_{\mathscr{F}_{\textsf{prelu}}}  (\widehat{\mathrm{P}}_n, \widehat{\mathrm{Q}}_n) < \eta_{\alpha, n} \Big)   & = \mathbb{P}_{\mathrm{P} \neq \mathrm{Q}} \Big( D_{\mathscr{F}_{\textsf{prelu}}} (\mathrm{P}, \mathrm{Q}) - D_{\mathscr{F}_{\textsf{prelu}}}  (\widehat{\mathrm{P}}_n, \widehat{\mathrm{Q}}_n) > D_{\mathscr{F}_{\textsf{prelu}}}  (\mathrm{P}, \mathrm{Q})  - \eta_{\alpha, n} \Big) \\
    & \leq \mathbb{P}_{\mathrm{P} \neq \mathrm{Q}} \Big( \left| D_{\mathscr{F}_{\textsf{prelu}}} (\mathrm{P}, \mathrm{Q}) - D_{\mathscr{F}_{\textsf{prelu}}} (\widehat{\mathrm{P}}_n, \widehat{\mathrm{Q}}_n) \right| > D_{\mathscr{F}_{\textsf{prelu}}} (\mathrm{P}, \mathrm{Q}) - \eta_{\alpha, n} \Big) \\
    & \leq \frac{ \mathbb{E} \bigg[ \left| D_{\mathscr{F}_{\textsf{prelu}}} (\mathrm{P}, \mathrm{Q}) - D_{\mathscr{F}_{\textsf{prelu}}} (\widehat{\mathrm{P}}_n, \widehat{\mathrm{Q}}_n) \right| \bigg| \mathbb{H}_1: \mathrm{P} \neq \mathrm{Q} \bigg] }{ D_{\mathscr{F}_{\textsf{prelu}}} (\mathrm{P}, \mathrm{Q}) - \eta_{\alpha, n} } && \tag{by Markov's ineq.} \\
    & \leq \frac{ \mathbb{E} \bigg[ \left| D_{\mathscr{F}_{\textsf{prelu}}} (\mathrm{P}, \mathrm{Q}) - D_{\mathscr{F}_{\textsf{prelu}}} (\widehat{\mathrm{P}}_n, \widehat{\mathrm{Q}}_n) \right| \bigg| \mathbb{H}_1: \mathrm{P} \neq \mathrm{Q} \bigg] }{ D_{\mathscr{F}_{\textsf{prelu}}} (\mathrm{P}, \mathrm{Q})/2}\\
    \end{align*}
    \begin{align*}
    & \leq \frac{ 4 c_2  }{D_{\mathscr{F}_{\textsf{prelu}}} (\mathrm{P}, \mathrm{Q}) \sqrt{n} }, && \tag{by \eqref{thm_conv_prelu_3}} \\
    & = \frac{c_3}{\sqrt{n}},
\end{align*}
where $c_2$ is the constant defined on Theorem~\ref{thm_conv_prelu} and 
$c_3 \coloneqq 4c_2/D_{\mathscr{F}_{\textsf{prelu}}} (\mathrm{P}, \mathrm{Q})$ is a constant.
Therefore, the power of PReLU-TST $T_{\textsf{prelu}, \alpha}(\mathbf{X}^n, \mathbf{Y}^m)$ is at least $1 - c_3 n^{-1/2}$ and goes to 1 as $n \rightarrow \infty$.
\end{proof}


\subsection{Proof of Theorem \ref{thm_minimax}}
\label{pf_thm_minimax}

\begin{proof}
By the proof of Theorem \ref{theo_fix}, we have 
\begin{align}
    \mathbb{P}_{\mathrm{P} \neq \mathrm{Q}}(T_{\textsf{prelu}, \alpha}(\mathbf{X}^n, \mathbf{Y}^m)) \geq 1- \frac{4c}{\sqrt{n} \cdot D_{\mathscr{F}_{\textsf{prelu}}}(\mathrm{P}, \mathrm{Q})}, 
    \label{tmp_insss}
\end{align}
for any fixed alternative $\mathrm{P} \neq \mathrm{Q}$,
where $c=\mathcal{O}(\sqrt{d})$ is the same constant as in Theorem \ref{thm_conv_prelu}. 
Now, we show the consistency of PReLU-TST under the local H\"{o}lder alternative hypothesis, 
$\mathbb{H}_1(\kappa_n): D_{\mathscr{H}^{\beta, d}}(\mathrm{P},\mathrm{Q}) \geq \kappa_n$.

\begin{enumerate}
    \item[(i)] $\beta > \frac{d+3}{2}$ and $\kappa_n \gtrsim n^{-1/2} \log n$ : by Corollary \ref{cor_holder}, we have
$$
D_{\mathscr{F}_{\textsf{prelu}}}(\mathrm{P},\mathrm{Q}) \gtrsim n^{-\frac{1}{2}} \log n.
$$
By \eqref{tmp_insss}, the power of $T_{\textsf{prelu}, \alpha}(\mathbf{X}^n, \mathbf{Y}^m)$ under $\mathbb{H}_1(\kappa_n)$ is given by
$$
\mathbb{P}_{\mathbb{H}_1(\kappa_n)}(T_{\textsf{prelu}, \alpha}(\mathbf{X}^n, \mathbf{Y}^m)) \gtrsim 1 - (\log n)^{-1}.
$$
    \item [(ii)] $\beta < \frac{d+3}{2}$ and $\kappa_n \gtrsim n^{-\beta/(d+3)} \log n$ : by Corollary \ref{cor_holder}, we have
$$
D_{\mathscr{F}_{\textsf{prelu}}}(\mathrm{P},\mathrm{Q}) \gtrsim n^{-\frac{1}{2}} \log n.
$$
By \eqref{tmp_insss}, the power of $T_{\textsf{prelu}, \alpha}(\mathbf{X}^n, \mathbf{Y}^m)$ under $\mathbb{H}_1(\kappa_n)$ is given by
$$
\mathbb{P}_{\mathbb{H}_1(\kappa_n)}(T_{\textsf{prelu}, \alpha}(\mathbf{X}^n, \mathbf{Y}^m)) \gtrsim 1 - (\log n)^{-1}.
$$
\end{enumerate}

Hence, we conclude that the power of $T_{\textsf{prelu}, \alpha}(\mathbf{X}^n, \mathbf{Y}^m)$ goes to 1 as $n \rightarrow \infty$, by considering both cases (i) and (ii).   
\end{proof}

\newpage

\section{Additional experimental results}
\label{app_add_exp}

\subsection{Small-sample regime}
\label{app_add_exp_small_sample}

We evaluate the performance of the proposed test under small sample sizes, $n+m \in \{10,20,30,40,50\}$.
The corresponding results are presented in Figs.~\ref{fig_small_result1} and \ref{fig_small_result2}. 
The results of Besov-TST are omitted due to their consistent underperformance.
Overall, PReLU-TST maintains competitive power, even in challenging small-sample regimes.

\begin{figure}[H]
    \centering    
    \begin{subfigure}[t]{\textwidth}
        \centering
        \includegraphics[width=\textwidth]{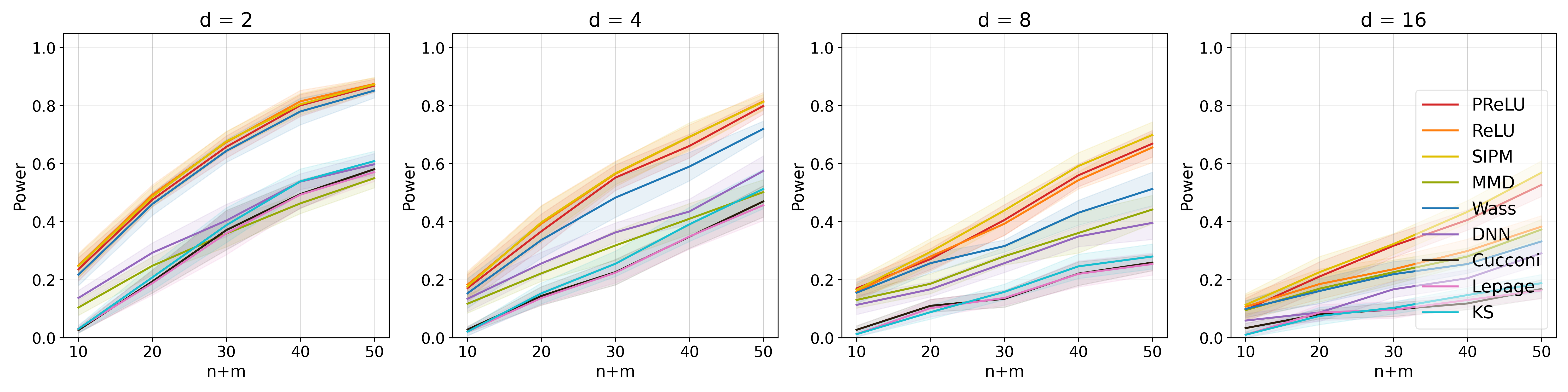}
        \caption{\texttt{L1} (a simple mean shift): $\mathrm{P} \sim \mathcal{N}(\mathbf{0}_d, \mathrm{I}_d), \, \mathrm{Q} \sim \mathcal{N}\big((\zeta,\mathbf{0}_{d-1}^\top)^\top, \mathrm{I}_d\big), \, \zeta=1.0$}
        \label{fig:ball_small}
    \end{subfigure}
    \begin{subfigure}[t]{\textwidth}
        \centering
        \includegraphics[width=\textwidth]{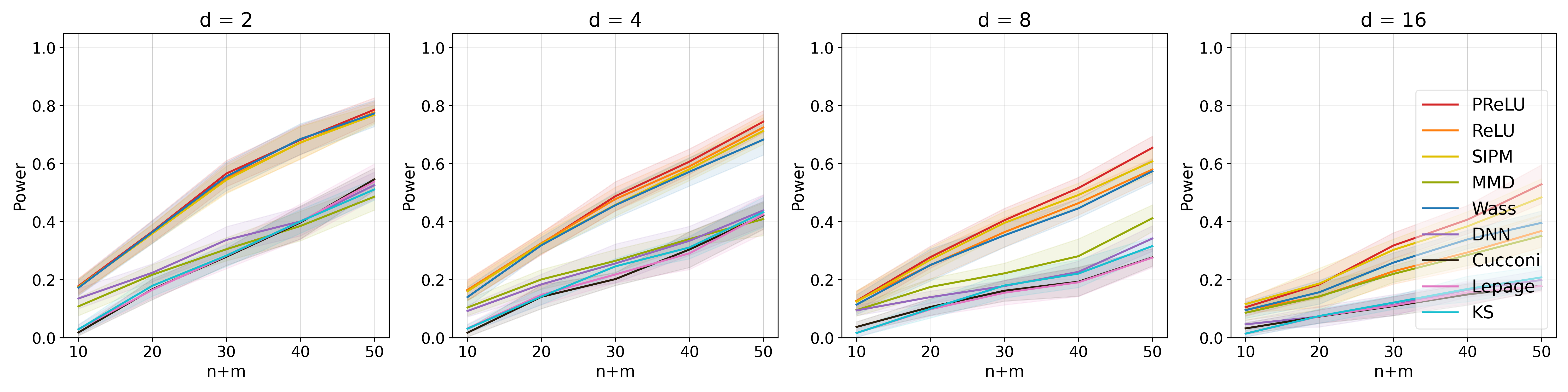}
        \caption{\texttt{L2} (a mixture-induced shift): $\mathrm{P} \sim \mathcal{N}(\mathbf{0}_d, \mathrm{I}_d), \, \mathrm{Q} \sim \frac{1}{2} \mathcal{N}(\mathbf{0}_d, \mathrm{I}_d) + \frac{1}{2}\mathcal{N}\big((\zeta,\mathbf{0}_{d-1}^\top)^\top, \mathrm{I}_d\big), \, \zeta = 2.0$}
        \label{fig:mixture_small}
    \end{subfigure}
    \begin{subfigure}[t]{\textwidth}
        \centering
        \includegraphics[width=\textwidth]{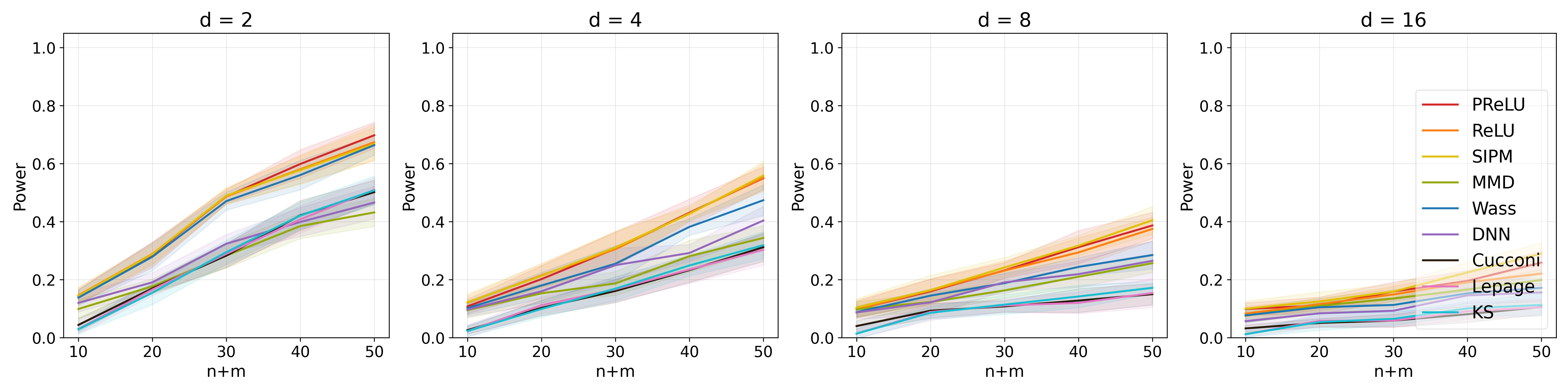}
        \caption{\texttt{L3} (a skewness-driven shift): $\mathrm{P} \sim \mathcal{N}(\mathbf{0}_d, \mathrm{I}_d), \, \mathrm{Q} \sim \mathsf{SkewNormal}(0,1,\alpha) \otimes \mathcal{N}(\mathbf{0}_{d-1}, \mathrm{I}_{d-1}), \, \alpha = 2.0$}
        \label{fig:skewed_small}
    \end{subfigure}
    \caption{Power vs.\ $n+m$ under small sample size for various $d$.}
    \label{fig_small_result1}
\end{figure}

\begin{figure}[H]   
    \centering
    \begin{subfigure}[b]{\textwidth}
        \centering
        \includegraphics[width=\textwidth]{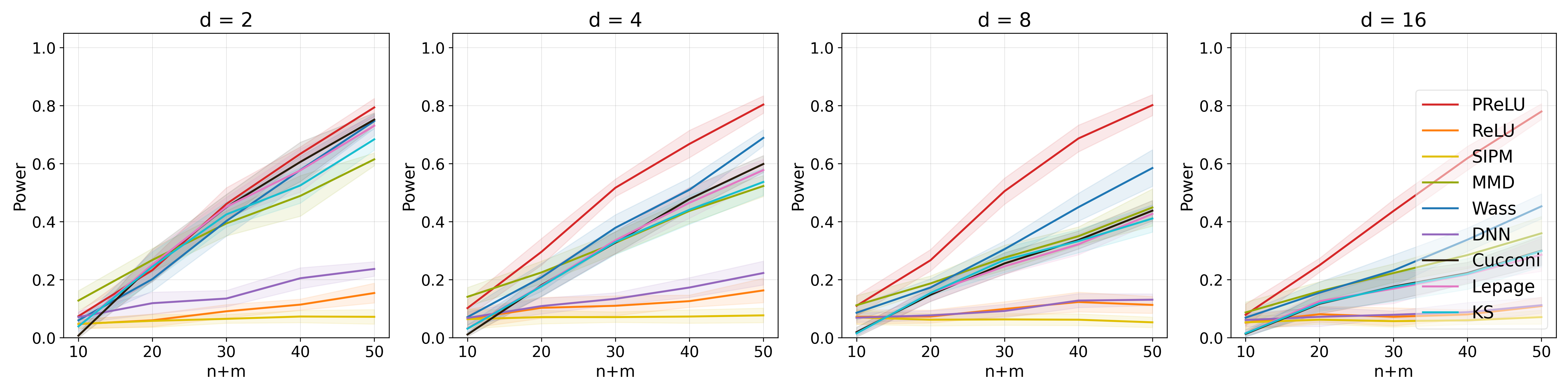}
        \caption{\texttt{S1} (a variance change along a single direction): $\mathrm{P} \!\sim\! \mathcal{N}(\mathbf{0}_d, \mathrm{I}_d), \mathrm{Q} \!\sim\! \mathcal{N}\big(\mathbf{0}_d, \operatorname{diag}(\sigma^2,\mathbf{1}_{d-1}^{\top})\big), \sigma^2\!=\!6$}
        \label{fig:varone_small}
    \end{subfigure}
    \begin{subfigure}[b]{\textwidth}
        \centering
        \includegraphics[width=\textwidth]{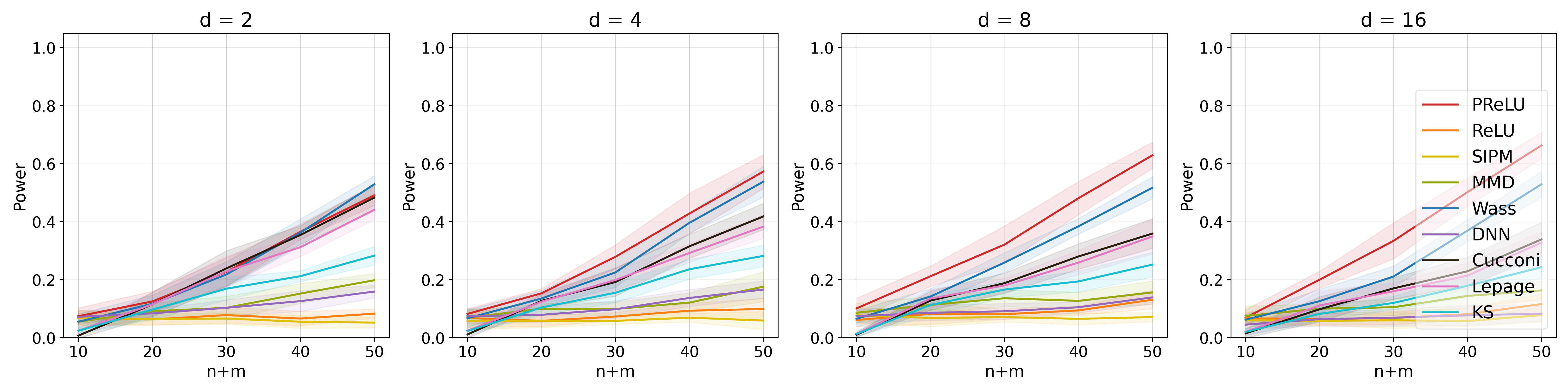}
        \caption{\texttt{S3} (contrasting tail behaviors): $\mathrm{P} \sim \mathcal{N}(\mathbf{0}_d, \mathrm{I}_d), \, \mathrm{Q} \sim t_v \otimes \mathcal{N}(\mathbf{0}_{d-1}, \mathrm{I}_{d-1}), \, v=1.0$}
        \label{fig:tcoord_small}
    \end{subfigure}
    \begin{subfigure}[b]{\textwidth}
        \centering
        \includegraphics[width=\textwidth]{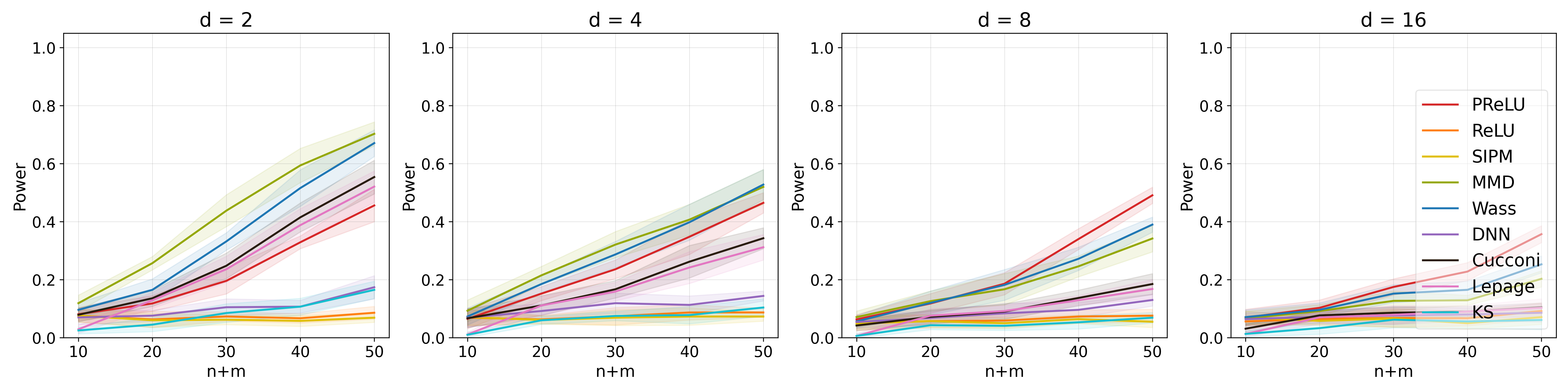}
        \caption{
        \texttt{XOR2}: 
        $ \quad
        \begin{aligned}
        \mathrm{P} &\sim \left( \frac{1}{2}\mathcal{N}\left([\zeta, \zeta]^\top, \mathrm{I}_2\right) + 
        \frac{1}{2}\mathcal{N}\left([-\zeta, -\zeta]^\top, \mathrm{I}_2\right) \right) 
        \otimes\, \mathcal{N}(\mathbf{0}_{d-2}, \mathrm{I}_{d-2}), \\
        \mathrm{Q} &\sim \left( \frac{1}{2}\mathcal{N}\left([\zeta, -\zeta]^\top, \mathrm{I}_2\right) + 
        \frac{1}{2}\mathcal{N}\left([-\zeta, \zeta]^\top, \mathrm{I}_2\right) \right) 
        \otimes\, \mathcal{N}(\mathbf{0}_{d-2}, \mathrm{I}_{d-2}), \, \zeta = 1.0
        \end{aligned}
        $
        }
        \label{fig:xor2_small}
    \end{subfigure}
    \caption{Power vs.\ $n+m$ under small sample size for various $d$.}
    \label{fig_small_result2}
\end{figure}

\newpage

\subsection{High-dimensional covariates}
\label{app_add_exp_high_dim}

We consider settings with high-dimensional covariates $d \in \{256,512\}$ with $n+m=200$. 
The results are summarized in Table~\ref{tab_high_dim}. PReLU-TST achieves strong power compared to competing methods.

\begin{table}[h]
\centering
\caption{
Testing powers under high-dimensional covariates with $n+m=200$.
The parameters are set as follows: $\zeta=1.0$ for \texttt{L1}, $\zeta=1.5$ for \texttt{L2}, $\alpha=1.4$ for \texttt{L3}, $\zeta=6.0$ for \texttt{S1}, $v=1.5$ for \texttt{S2}, and $\zeta=1.5$ for \texttt{XOR2}.
}
\label{tab_high_dim}
\renewcommand{\arraystretch}{1.3}
\resizebox{\textwidth}{!}{
\begin{tabular}{c|c||c|cccccccc}
\toprule
Dataset & $d$ & PReLU & ReLU & SIPM & MMD & Wass & Cucconi & Lepage & KS \\
\hline
\multirow{2}{*}{\texttt{L1}} 
 & 256 & \textbf{0.643$\pm$0.056} & 0.147$\pm$0.044 & \underline{0.618$\pm$0.061} & 0.616$\pm$0.054 & 0.161$\pm$0.020 & 0.082$\pm$0.274 & 0.079$\pm$0.270 & 0.089$\pm$0.285 \\
 & 512 & \textbf{0.464$\pm$0.053} & 0.117$\pm$0.029 & 0.439$\pm$0.046 & \underline{0.454$\pm$0.054} & 0.123$\pm$0.024 & 0.075$\pm$0.018 & 0.073$\pm$0.010 & 0.072$\pm$0.015 \\
\hline
\multirow{2}{*}{\texttt{L2}} 
 & 256 & \textbf{0.338$\pm$0.035} & 0.131$\pm$0.016 & 0.312$\pm$0.032 & \underline{0.314$\pm$0.030} & 0.113$\pm$0.037 & 0.077$\pm$0.267 & 0.070$\pm$0.255 & 0.076$\pm$0.265 \\
 & 512 & \textbf{0.255$\pm$0.045} & 0.108$\pm$0.032 & 0.227$\pm$0.056 & \underline{0.241$\pm$0.048} & 0.109$\pm$0.036 & 0.060$\pm$0.020 & 0.061$\pm$0.027 & 0.067$\pm$0.025 \\
\hline
\multirow{2}{*}{\texttt{L3}} 
 & 256 & \underline{0.269$\pm$0.029} & 0.126$\pm$0.022 & \textbf{0.280$\pm$0.029} & 0.260$\pm$0.037 & 0.086$\pm$0.014 & 0.066$\pm$0.248 & 0.075$\pm$0.263 & 0.071$\pm$0.257 \\
 & 512 & \textbf{0.176$\pm$0.033} & 0.090$\pm$0.018 & \underline{0.173$\pm$0.020} & 0.170$\pm$0.030 & 0.082$\pm$0.015 & 0.059$\pm$0.023 & 0.063$\pm$0.027 & 0.051$\pm$0.016 \\
\hline
\multirow{2}{*}{\texttt{S1}} 
 & 256 & \textbf{0.804$\pm$0.036} & 0.057$\pm$0.029 & 0.058$\pm$0.023 & 0.132$\pm$0.048 & \underline{0.207$\pm$0.042} & 0.180$\pm$0.384 & 0.175$\pm$0.380 & 0.176$\pm$0.381 \\
 & 512 & 0.104$\pm$0.028 & 0.053$\pm$0.032 & 0.053$\pm$0.029 & 0.082$\pm$0.035 & \underline{0.127$\pm$0.028} & 0.107$\pm$0.030 & 0.106$\pm$0.035 & \textbf{0.134$\pm$0.033} \\
\hline
\multirow{2}{*}{\texttt{S3}} 
 & 256 & \textbf{0.665$\pm$0.046} & 0.067$\pm$0.025 & 0.065$\pm$0.019 & 0.125$\pm$0.035 & \underline{0.582$\pm$0.056} & 0.166$\pm$0.372 & 0.153$\pm$0.360 & 0.137$\pm$0.344 \\
 & 512 & \underline{0.369$\pm$0.041} & 0.061$\pm$0.029 & 0.052$\pm$0.023 & 0.100$\pm$0.018 & \textbf{0.455$\pm$0.057} & 0.154$\pm$0.361 & 0.152$\pm$0.359 & 0.118$\pm$0.323 \\
\hline 
\multirow{2}{*}{\texttt{XOR2}} 
 & 256 & \textbf{0.996$\pm$0.007} & 0.075$\pm$0.016 & 0.049$\pm$0.014 & 0.162$\pm$0.044 & \underline{0.415$\pm$0.061} & 0.070$\pm$0.255 & 0.064$\pm$0.245 & 0.042$\pm$0.201 \\
 & 512 & \underline{0.119$\pm$0.029} & 0.059$\pm$0.012 & 0.076$\pm$0.024 & 0.097$\pm$0.026 & \textbf{0.202$\pm$0.038} & 0.057$\pm$0.026 & 0.057$\pm$0.025 & 0.058$\pm$0.015 \\
\bottomrule
\end{tabular}
}
\begin{threeparttable}
\begin{tablenotes}
\footnotesize
\item * Besov-TST and DNN-TST are omitted due to their consistent underperformance.
\end{tablenotes}
\end{threeparttable}
\end{table}

\subsection{Heavy-tailed distributions}
\label{app_heavy_tailed}

We say that a random vector $\mathbf{X} \in \mathbb{R}^d$ follows a multivariate Student's $t$-distribution with degrees of freedom $\nu$, location parameter $\zeta \in \mathbb{R}^d$, and scale matrix $\Sigma \in \mathbb{R}^{d\times d}$, written as
$$
\mathbf{X} \sim t_\nu(\zeta,\Sigma),
$$
if it admits the representation
$$
\mathbf{X} = \zeta + \frac{\mathbf{Z}}{\sqrt{\mathbf{W}/\nu}},
$$
where $\mathbf{Z} \sim N_d(0,\Sigma)$, $\mathbf{W} \sim \chi^2_\nu$, and $\mathbf{Z}$ and $\mathbf{W}$ are independent.
Let $\mathbf{e}_1 = (1,0,\dots,0)^\top \in \mathbb{R}^d$ denote the first canonical basis vector.
We consider modified settings of \texttt{L1}, \texttt{L2}, \texttt{S1}, and \texttt{XOR2}, denoted by \texttt{L1-H}, \texttt{L2-H}, \texttt{S1-H}, and \texttt{XOR2-H}, corresponding to the modified versions of \texttt{L1}, \texttt{L2}, \texttt{S1}, and \texttt{XOR2}, respectively.
For the null distributions of \texttt{L1-H}, \texttt{L2-H}, and \texttt{S1-H}, we generate $\mathrm{P}$ from $t_{2} (\mathbf{0}_d, \mathrm{I}_d)$ to investigate the performance of PReLU-TST under heavy-tailed distributions.

\begin{itemize}
    \item \texttt{L1-H}: We take
    $$
    \mathrm{Q} \sim t_{2} (\zeta \cdot \mathbf{e}_1, \mathrm{I}_d), \quad \zeta = 0.5,
    $$
    so that $\mathrm{Q}$ differs from $\mathrm{P}$ through a location shift in the first coordinate.

    \item \texttt{L2-H}: We consider the mixture alternative 
    $$
    \mathrm{Q} \sim \frac{1}{2}\, t_{2}(\mathbf{0}_d, \mathrm{I}_d)
      + \frac{1}{2}\, t_{2}(\zeta \cdot \mathbf{e}_1, \mathrm{I}_d),
      \quad  \zeta = 1.0.
    $$
    
    \item \texttt{S1-H}: Let
    $$
    \mathrm{Q} \sim t_{2}(\mathbf{0}_d, \operatorname{diag}(\sigma^2,1,\dots,1)), \quad \sigma^2 = 2.5,
    $$
    so that $\mathrm{Q}$ differs through scale inflation in the first coordinate.
    
    \item \texttt{XOR2-H}: The first two coordinates follow a two-component mixture with opposite sign patterns, while the remaining coordinates are generated independently from a $(d-2)$- variate $t$-distribution. Specifically,
    $$
    \mathrm{P} \sim
    \frac{1}{2}\, t_{2}\!\left(
    \begin{pmatrix}\zeta\\ \zeta\end{pmatrix}, \mathrm{I}_2
    \right)\otimes t_{2}(\mathbf{0}_d,\mathrm{I}_{d-2})
    +
    \frac{1}{2}\, t_{2}\!\left(
    \begin{pmatrix}-\zeta\\ -\zeta\end{pmatrix}, \mathrm{I}_2
    \right)\otimes t_{2}(\mathbf{0}_d,\mathrm{I}_{d-2}),
    $$
    whereas
    $$
    \mathrm{Q} \sim
    \frac{1}{2}\, t_{2}\!\left(
    \begin{pmatrix}\zeta\\ -\zeta\end{pmatrix}, \mathrm{I}_2
    \right)\otimes t_{2}(\mathbf{0}_d,\mathrm{I}_{d-2})
    +
    \frac{1}{2}\, t_{2}\!\left(
    \begin{pmatrix}-\zeta\\ \zeta\end{pmatrix}, \mathrm{I}_2
    \right)\otimes t_{2}(\mathbf{0}_d,\mathrm{I}_{d-2}), 
    \quad  \zeta = 0.5.
    $$
\end{itemize}

As shown in Fig. \ref{fig_tdist_result}, 
PReLU-TST performs relatively poorly in heavy-tailed settings. 
In these cases, MMD-TST and rank-based tests (in particular, Lepage-TST and Cucconi-TST) generally achieve better performance.
To be more specific, PReLU-TST shows moderate performance under location-shift and scale-change settings.
However, for \texttt{XOR2-H}, it fails to effectively detect alternatives, similarly to most other TSTs, except for methods such as MMD-TST, Lepage-TST, and Cucconi-TST.
This is likely because rank-based methods are inherently robust to outliers, and 
the discriminators in MMD-TST is bounced while the discriminators of PReLU-TST are piecewise linear and unbounded.
The results suggest that, in practice, it may be beneficial to first assess whether the data follow heavy-tailed behavior using a separate test procedure (e.g., the Clauset--Shalizi--Newman test \citep{clauset2009power}), and if not, then apply our method accordingly.

\begin{figure}[H]
    \centering    
    \begin{subfigure}[t]{\textwidth}
        \centering
        \includegraphics[width=\textwidth]{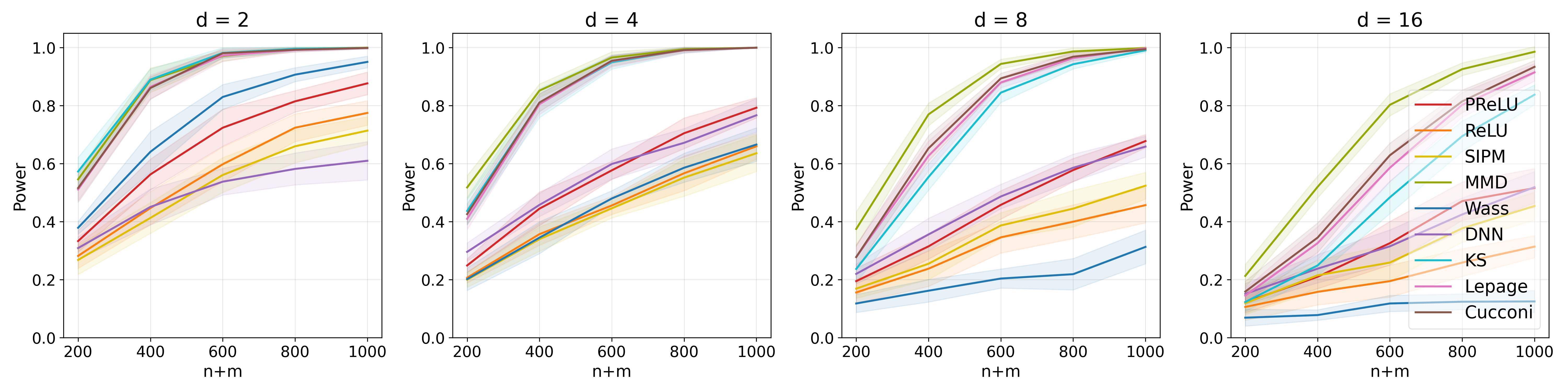}
        \caption{\texttt{L1-H} (a simple mean shift): 
        $\mathrm{P} \sim t_2, \, \mathrm{Q} \sim t_{2} (\zeta \cdot \mathbf{e}_1, \mathrm{I}_d), \mathrm{I}_d\big), \, \zeta = 0.5$}
        \label{fig:ball_tdist}
    \end{subfigure}
    \begin{subfigure}[t]{\textwidth}
        \centering
        \includegraphics[width=\textwidth]{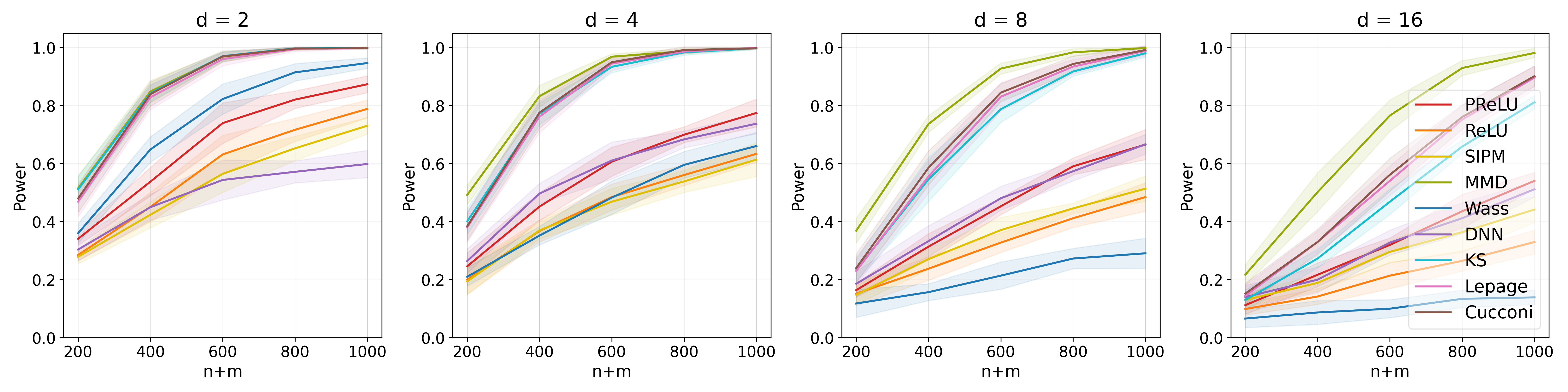}
        \caption{\texttt{L2-H} (a mixture-induced shift): 
        $\mathrm{P} \sim t_2, \, \mathrm{Q} \sim \frac{1}{2}\, t_{2}(\mathbf{0}_d, \mathrm{I}_d) + \frac{1}{2}\, t_{2}(\zeta \cdot \mathbf{e}_1, \mathrm{I}_d), \, \zeta = 1.0$}
        \label{fig:mixture_tdist}
    \end{subfigure}
    \begin{subfigure}[b]{\textwidth}
        \centering
        \includegraphics[width=\textwidth]{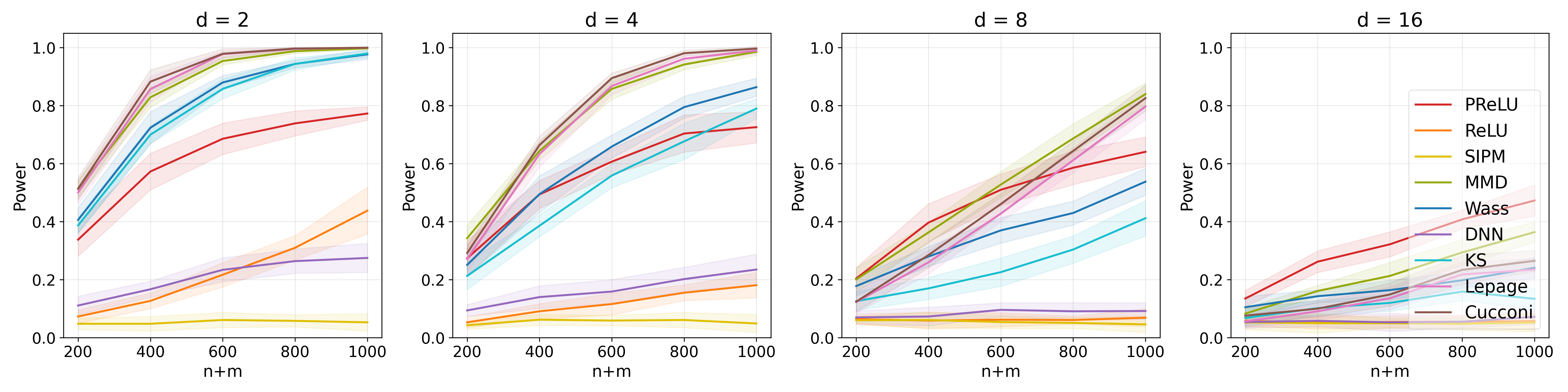}
        \caption{\texttt{S1-H} (a variance change): $\mathrm{P} \sim t_2, \mathrm{Q} \sim t_{2}(\mathbf{0}_d, \operatorname{diag}(\sigma^2,1,\dots,1)), \sigma^2 = 2.5$}
        \label{fig:varone_tdist}
    \end{subfigure}
    \begin{subfigure}[b]{\textwidth}
        \centering
        \includegraphics[width=\textwidth]{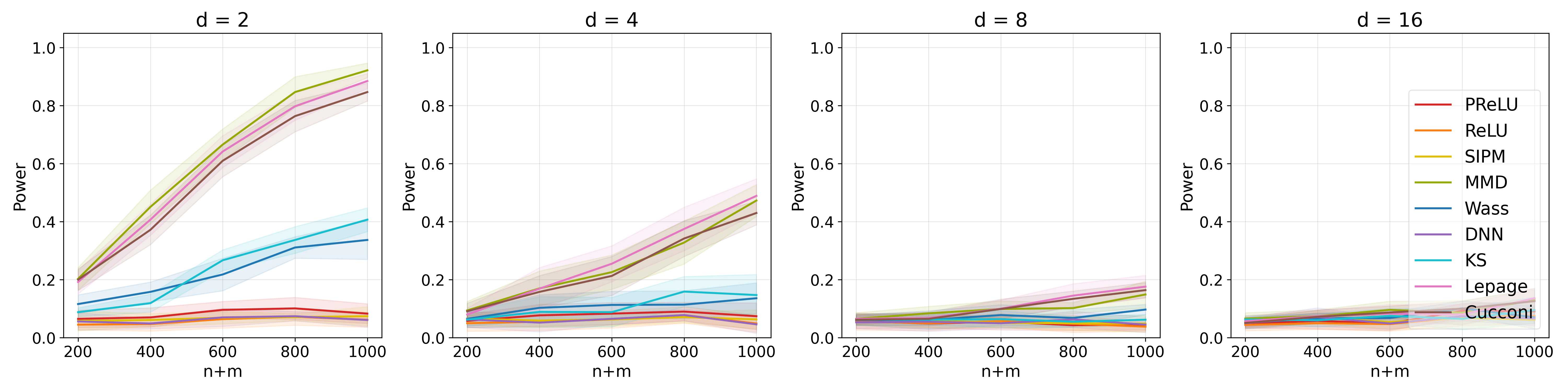}
        \caption{
        \texttt{XOR2-H}: 
        $ \quad
        \begin{aligned}
        \mathrm{P} &\sim \frac{1}{2}\, t_{2}\!\left(
        \begin{pmatrix}\zeta\\ \zeta\end{pmatrix}, \mathrm{I}_2
        \right)\otimes t_{2}(\mathbf{0}_d,\mathrm{I}_{d-2})
        +
        \frac{1}{2}\, t_{2}\!\left(
        \begin{pmatrix}-\zeta\\ -\zeta\end{pmatrix}, \mathrm{I}_2
        \right)\otimes t_{2}(\mathbf{0}_d,\mathrm{I}_{d-2}), \\
        \mathrm{Q} &\sim 
        \frac{1}{2}\, t_{2}\!\left(
        \begin{pmatrix}\zeta\\ -\zeta\end{pmatrix}, \mathrm{I}_2
        \right)\otimes t_{2}(\mathbf{0}_d,\mathrm{I}_{d-2})
        +
        \frac{1}{2}\, t_{2}\!\left(
        \begin{pmatrix}-\zeta\\ \zeta\end{pmatrix}, \mathrm{I}_2
        \right)\otimes t_{2}(\mathbf{0}_d,\mathrm{I}_{d-2}), \, 
        \zeta = 0.5
        \end{aligned}
        $
        }
        \label{fig:xor2_tdist}
    \end{subfigure}
    \caption{
    Power vs.\ $n+m$ under heavy-tailed distributions for various $d$. 
    The results of Besov-TST are omitted due to their consistent underperformance.
    }
    \label{fig_tdist_result}
\end{figure}

\subsection{Ablation studies}
\label{app_abl_study}

Now we provide additional experiments to better understand the role of slope selection in PReLU-TST and the optimization behavior of PReLU-IPM.

\subsubsection{Analysis for the slope selection of PReLU-TST}
\label{app_slope_range}
We evaluate the performance of PReLU-TST across different choices of 
$\ell_{\max}$.
Fig.~\ref{fig_slope_sensitivity} indicates that the optimal slope search range depends on the underlying problem setting, and different choices may yield the best performance in different scenarios. For example, the results for \texttt{L3} show that the default choice $(-1.0, 0.5)$ achieves the best or near-best performance across different sample sizes and data dimensions, although it may not always guarantee optimal performance, particularly as the dimension varies, as illustrated by the results for \texttt{S1} and \texttt{XOR1}. Nevertheless, the default choice $(-1.0, 0.5)$ provides reasonably good performance across a wide range of settings.

\begin{figure}[H]
    \centering
    \includegraphics[width=.95\linewidth]{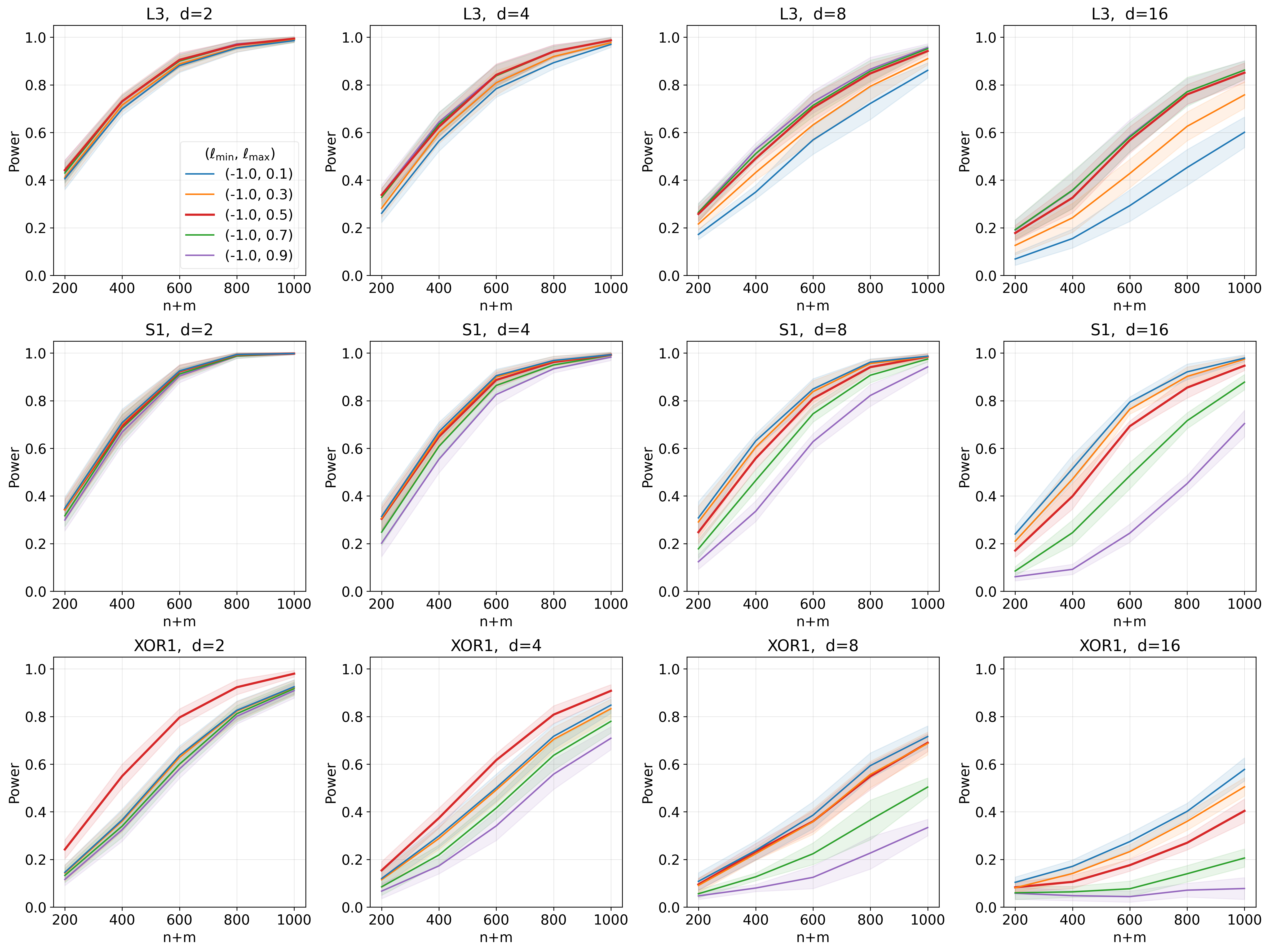}
    \caption{
    Performance of PReLU-TST across different choice of $\ell_{\max}$.
    The default setting $(-1.0, 0.5)$ is shown in red. Each experiment was repeated 1,000 times with 1,000 permutations. 
    The line shows the average testing power, with the shaded region indicating the standard deviation.}
    \label{fig_slope_sensitivity}
\end{figure}

\subsubsection{Multiple initializations}
\label{app_abl_init}

We investigate the effect of multiple random initializations on the optimization stability of the PReLU-IPM.
Specifically, we draw $n = m = 500$ samples from 
$$
\mathrm{P} \sim \mathcal{N}\bigl((-0.3, 0, \dots, 0),\, \mathrm{I}_d\bigr), \quad \mathrm{Q} \sim \mathcal{N}\bigl((0.3, 0, \dots, 0),\, \mathrm{I}_d\bigr)
$$
with $d \in \{2,4,8,16\}$. 
Then, for each fixed dataset, we compute
$\log D_{\mathscr{F}_{\mathsf{prelu}}}(\widehat{\mathrm{P}}_n, \widehat{\mathrm{Q}}_m)$ 10,000 times using Algorithm~\ref{alg_prelutst}, with $S \in \{1,10,100,200\}$. 
Since the datasets are fixed and we solve a maximization problem, larger estimated values indicate better optimization.

\begin{figure}[H]
    \centering
    \includegraphics[width=\linewidth]{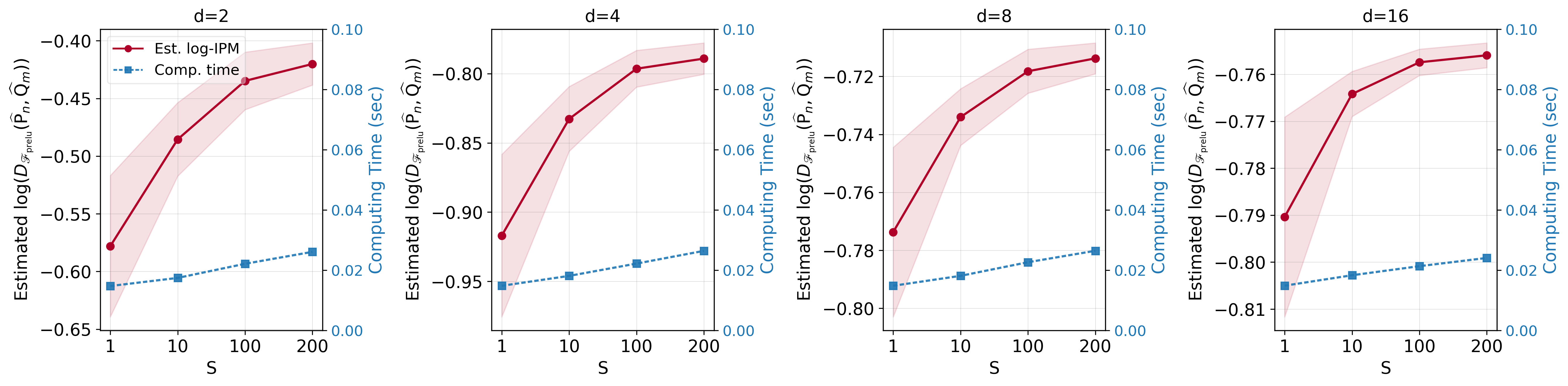}
    \caption{
    Estimated $\log D_{\mathscr{F}_{\mathsf{prelu}}}(\widehat{\mathrm{P}}_n, \widehat{\mathrm{Q}}_m)$ (solid, left axis) and computing time (dashed, right axis) vs. the number of random initializations $S$.
    }
    \label{fig_nprelu_ipm}
\end{figure}

In Fig.~\ref{fig_nprelu_ipm},
we evaluate the estimated $\log D_{\mathscr{F}_{\mathsf{prelu}}}(\widehat{\mathrm{P}}_n, \widehat{\mathrm{Q}}_m)$ together with the corresponding computing time as a function of the number of random initializations $S$. 
As $S$ increases, the estimated $\log D_{\mathscr{F}_{\mathsf{prelu}}}(\widehat{\mathrm{P}}_n, \widehat{\mathrm{Q}}_m)$ increases, while the standard deviation of the estimates decreases.
This suggests that random initializations help avoid poor local optima.
Importantly, this improvement comes at only a modest computational cost, as the discriminator $f \in \mathscr{F}_{\mathsf{prelu}}$ can be represented via a single affine map $\mathbf X \mapsto (\Theta \mathbf X + \boldsymbol{\mu})_+ \in \mathbb{R}^S$, where $\Theta = (\boldsymbol{\theta}_1, \dots, \boldsymbol{\theta}_S)^{\top}$ and $\boldsymbol{\mu} = (\mu_1, \dots, \mu_S)^{\top}$, backpropagation can be carried out in parallel by the python package without any special technique, and thus its computing time does not increase much
with respect to $S$.
We use $S=100$ as a conservative default that balances stability of the statistic and computational efficiency.

\subsection{Effect of the Number of Permutations}

We investigate the sensitivity of PReLU-TST to the number of permutations $n_{\mathrm{perm}}$ used in the simulations (\texttt{L1}, \texttt{L2}, \texttt{L3}, \texttt{S1}, \texttt{S2}, \texttt{S3}, \texttt{XOR1}, and \texttt{XOR2}) of Section~\ref{sec_4_1}. In particular, we examine whether the number of permutations affects the empirical performance. As shown in Figs.~\ref{fig_result1_perm200} and \ref{fig_result2_perm200}, using $n_{\mathrm{perm}} = 200$ yields performance that is highly similar to that obtained with $n_{\mathrm{perm}} = 1{,}000$ in the main experiments.

\begin{figure}[H]
    \centering    
    \begin{subfigure}[t]{\textwidth}
        \centering
        \includegraphics[width=\textwidth]{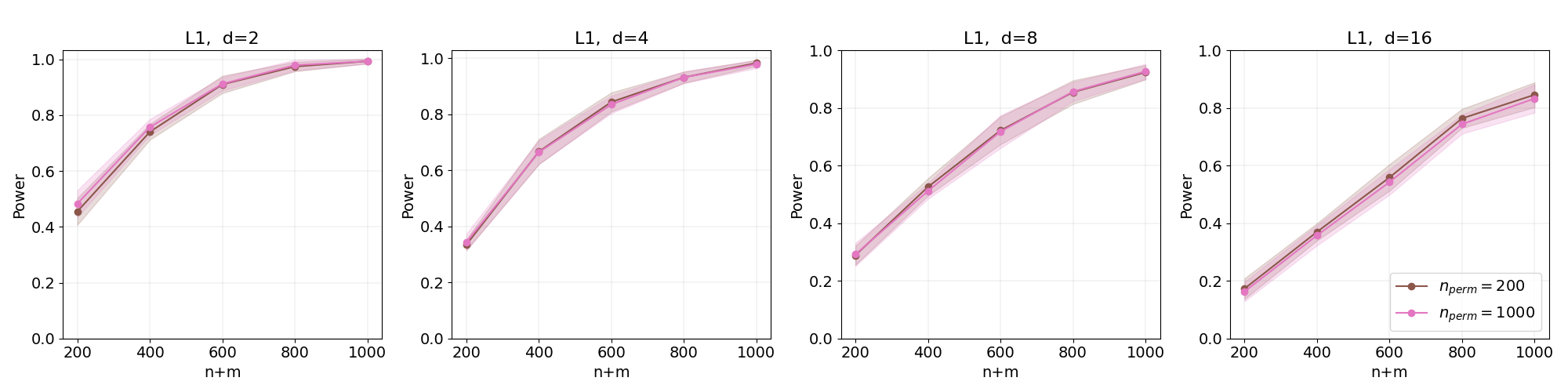}
        \label{fig:ball_perm200}
    \end{subfigure}
    \begin{subfigure}[t]{\textwidth}
        \centering
        \includegraphics[width=\textwidth]{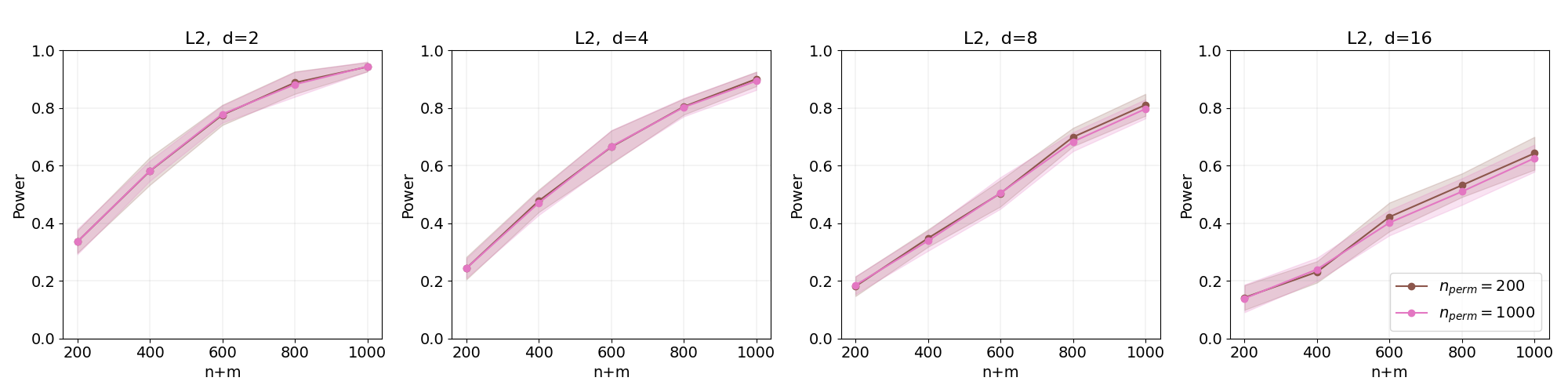}
        \label{fig:mixture_perm200}
    \end{subfigure}
    \begin{subfigure}[t]{\textwidth}
        \centering
        \includegraphics[width=\textwidth]{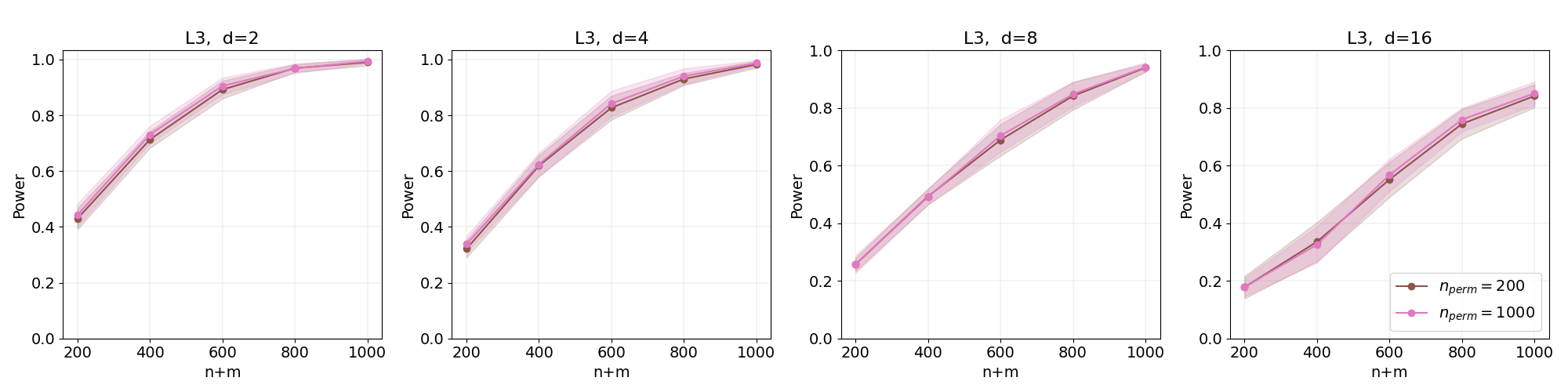}
        \label{fig:skewed_perm200}
    \end{subfigure}
    \caption{
    Effect of the number of permutations: \texttt{L1} (top), \texttt{L2} (middle), \texttt{L3} (bottom). We compare $n_{\mathrm{perm}}=200$ and $n_{\mathrm{perm}}=1{,}000$ across different dimensions $d$.
    }
    \label{fig_result1_perm200}
\end{figure}

\begin{figure}[H]   
    \centering    
    \begin{subfigure}[b]{\textwidth}
        \centering
        \includegraphics[width=\textwidth]{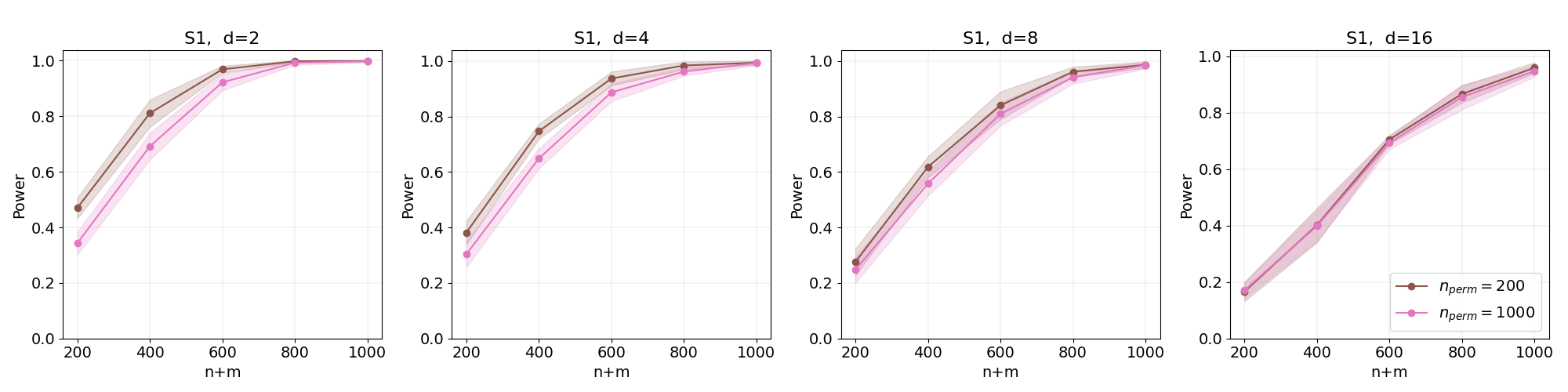}
        \label{fig:varone_perm200}
    \end{subfigure}
    \vspace{0.5em}
    \begin{subfigure}[b]{\textwidth}
        \centering
        \includegraphics[width=\textwidth]{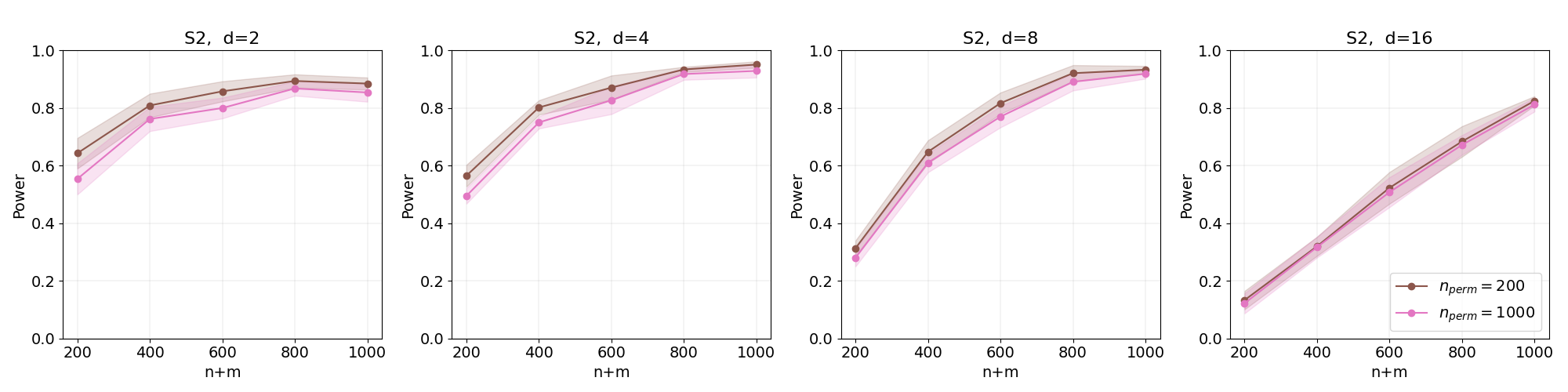}
        \label{fig:gamma_perm200}
    \end{subfigure}
    \vspace{0.5em}
    \begin{subfigure}[b]{\textwidth}
        \centering
        \includegraphics[width=\textwidth]{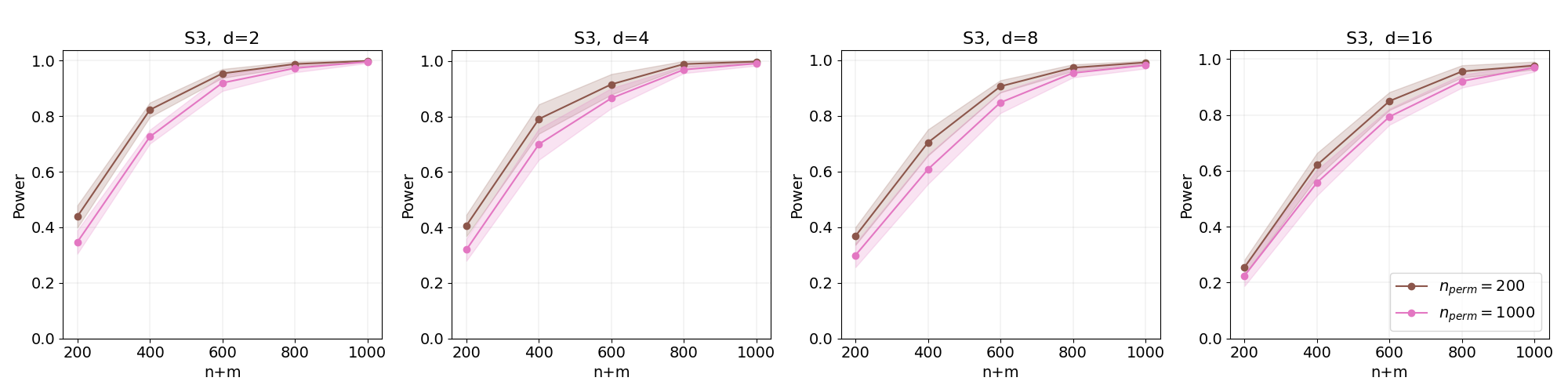}
        \label{fig:tcoord_perm200}
    \end{subfigure}
    \begin{subfigure}[b]{\textwidth}
        \centering
        \includegraphics[width=\textwidth]{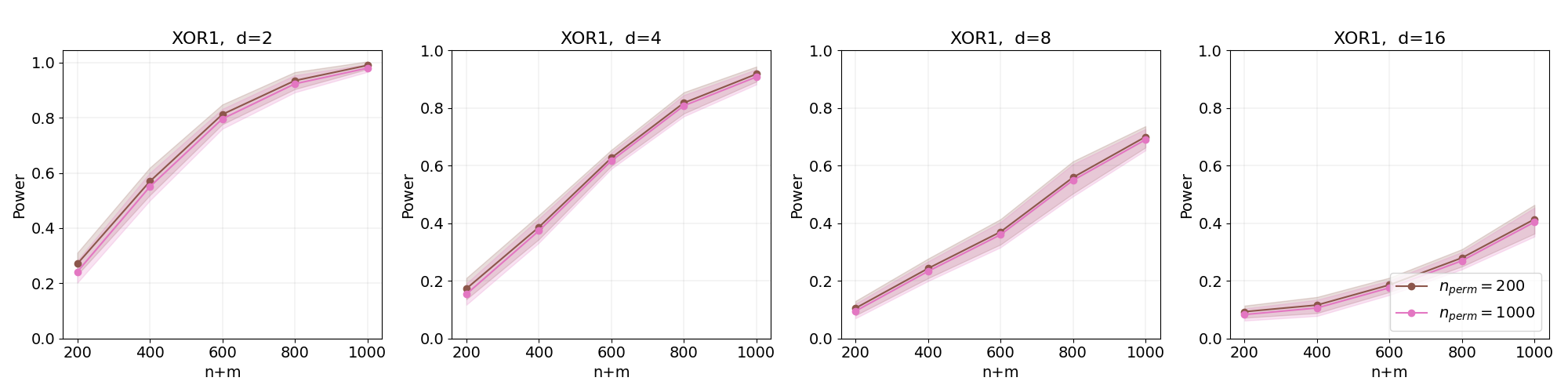}
        \label{fig:xor_perm200}
    \end{subfigure}
    \begin{subfigure}[b]{\textwidth}
        \centering
        \includegraphics[width=\textwidth]{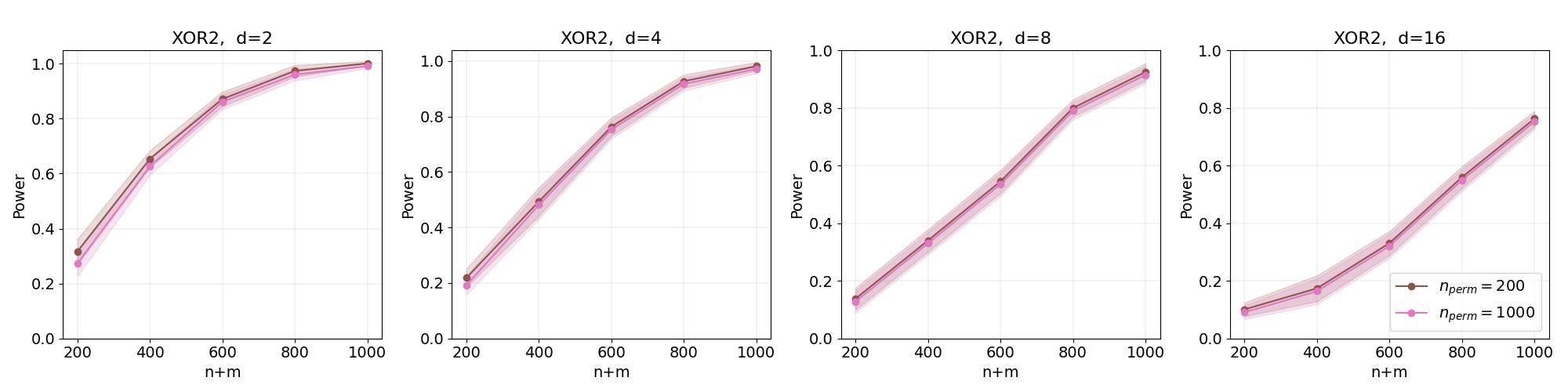}
        \label{fig:xor2_perm200}
    \end{subfigure}
    \caption{
    Effect of the number of permutations. From top to bottom: \texttt{S1}, \texttt{S2}, \texttt{S3}, \texttt{XOR1}, and \texttt{XOR2}. We compare $n_{\mathrm{perm}}=200$ and $n_{\mathrm{perm}}=1{,}000$ across different dimensions $d$.
    }
    \label{fig_result2_perm200}
\end{figure}


\bibliographystyle{apalike}
\bibliography{ref}

\end{document}